\documentclass{article}
\usepackage[utf8]{inputenc}
\usepackage{geometry}
\usepackage{amsmath}
\usepackage{amssymb}
\usepackage{mathtools}
\usepackage{amsthm}
\usepackage{bm}
\usepackage{comment}
\usepackage{etoolbox}
\usepackage[usenames,dvipsnames]{xcolor}
\definecolor{Bea_blue}{RGB}{71,127,124}
\definecolor{Bea_red}{RGB}{210, 77, 4}
\usepackage[colorlinks=true, citecolor=teal, linktoc=page, backref=page]{hyperref}
\usepackage{url}
\usepackage{thmtools} 
\usepackage{authblk}
\usepackage{thm-restate}
\usepackage{nicefrac} 
\usepackage{algpseudocode}
\usepackage{etoolbox}
\usepackage{bbm}
\usepackage{bm}
\usepackage{enumitem}
\usepackage[T1]{fontenc}
\usepackage[english]{babel}
\usepackage{ucs}
\usepackage{url}
\usepackage{dsfont}
\usepackage{graphicx}
\usepackage{caption}
\usepackage{pstricks,pstricks-add}
\usepackage{fontawesome}
\usepackage[mathscr]{euscript}
\usepackage{tikz}
\usetikzlibrary{decorations.pathreplacing, calligraphy}
\usepackage{stmaryrd}
\usepackage{cleveref}
\usepackage{subfigure}
\usepackage{physics}
\usepackage{comment}
\usepackage{etoolbox}
\usepackage[numbers]{natbib}
\usepackage{xspace}
\usepackage[ruled]{algorithm2e}
\usepackage{float}
\crefname{algocf}{alg.}{algs.}
\Crefname{algocf}{Algorithm}{Algorithms}

\usepackage{amsmath, amssymb, amsthm, amsfonts}
\usepackage{times}
\usepackage{xcolor}
\usepackage{graphicx}
\usepackage{enumitem}
\usepackage{hyperref}
\usepackage{cleveref}
\usepackage{titlesec}
\usepackage{mathrsfs}
\usepackage{physics}
\usepackage{booktabs}
\usepackage{tensor}

\let\P\relax\DeclareMathOperator{\P}{\mathbb{P}}

\DeclareMathOperator{\E}{\mathbb{E}}

\newcommand{\R}{\mathbb{R}}

\newcommand{\1}{\mathbf{1}}

\newcommand{\bs}{\boldsymbol}

\DeclareMathOperator{\sgn}{sign}
\DeclareMathOperator{\spn}{span}

\DeclareMathOperator{\poly}{poly}

\DeclareMathOperator{\mat}{Mat}

\DeclareMathOperator{\sym}{Sym}

\newtheorem{lemma}{Lemma}
\newtheorem{corollary}{Corollary}
\newtheorem{theorem}{Theorem}
\newtheorem{proposition}{Proposition}

\newtheorem{definition}{Definition}

\makeatother
\ifdefined\usebigfont

\usepackage{times}
\usepackage[fontsize=13pt]{scrextend}
\AtBeginDocument{
	\newgeometry{letterpaper,left=1.56in,right=1.56in,top=1.71in,bottom=1.77in}
}
\else
\fi

\newtheorem{remark}[theorem]{Remark}

\theoremstyle{definition}

\renewcommand{\P}{\mathsf{P}}

\newcommand{\PP}{\mathbb{P}}

\newcommand{\eps}{\epsilon}

\renewcommand{\L}{\mathcal{R}}

\newcommand{\vertiii}[1]{{\left\vert\kern-0.25ex\left\vert\kern-0.25ex\left\vert #1
            \right\vert\kern-0.25ex\right\vert\kern-0.25ex\right\vert}}

\renewcommand{\norm}[1]{\left\| #1 \right\|}

\renewcommand{\abs}[1]{\left| #1 \right|}

\renewcommand{\k}{{k^\star}}

\title{The Generative Leap: Sharp Sample Complexity for Efficiently Learning Gaussian Multi-Index Models}

\author[1]{Alex Damian}
\author[2]{Jason D. Lee}
\author[3,4]{Joan Bruna}
\affil[1]{PACM, Princeton University}
\affil[2]{Electrical and Computer Engineering Department, Princeton University}
\affil[3]{Courant Institute of Mathematical Sciences, New York University}
\affil[4]{Center for Data Science, New York University}

\ifdefined\usebigfont

\usepackage{times}
\usepackage[fontsize=13pt]{scrextend}
\makeatletter
\@ifpackageloaded{geometry}{\AtBeginDocument{\newgeometry{letterpaper,left=1.56in,right=1.56in,top=1.71in,bottom=1.77in}}}{\usepackage[letterpaper,left=1.56in,right=1.56in,top=1.71in,bottom=1.77in]{geometry}}
\AtBeginDocument{\newgeometry{letterpaper,left=1.56in,right=1.56in,top=1.71in,bottom=1.77in}}
\linespread{1}
\usepackage{hyperref} %
\else
\fi

\begin{document}

\maketitle

    \begin{abstract}
        In this work we consider generic Gaussian Multi-index models, in which the labels only depend on the (Gaussian) $d$-dimensional inputs through their projection onto a low-dimensional $r = O_d(1)$ subspace, and we study efficient agnostic estimation procedures for this hidden subspace. We introduce the \emph{generative leap} exponent $k^\star$, a natural extension of the generative exponent from \cite{damian2024computational} to the multi-index setting. We first show that a sample complexity of $n=\Theta(d^{1 \vee \k/2})$ is necessary in the class of algorithms captured by the Low-Degree-Polynomial framework. We then establish that this sample complexity is also sufficient, by giving an agnostic sequential estimation procedure (that is, requiring no prior knowledge of the multi-index model) based on a spectral U-statistic over appropriate Hermite tensors. We further compute the generative leap exponent for several examples including piecewise linear functions (deep ReLU networks with bias), and general deep neural networks (with $r$-dimensional first hidden layer).
    \end{abstract}

\setcounter{tocdepth}{1}

\tableofcontents

\section{Introduction}

We consider learning Gaussian multi-index models:
\begin{definition}
	We say that $(X,Y)$ follows a Gaussian multi-index model with index $r$ if $X \sim N(0,I_d):=\gamma_d$ and there exists a subspace $U^\star \in \mathcal{G}(r, d)$ such that the conditional law $\PP[Y|X]$ only depends on the orthogonal projection $P_{U^\star} X$.  
\end{definition}
A Gaussian multi-index model can be thus specified %
by choosing a basis $W^\star$ of $U^\star$ (ie, an element of the Stiefel manifold $\mathcal{S}(r, d)$), and the law $\P$ of $(Z,Y) \in \mathcal{P}( \R^r \times \R)$ \footnote{or, more precisely, the conditional law $\P^Z$ of $Y|Z$, since the marginal of $Z$ is $\gamma_r$.}, where $Z = (W^\star)^\top X$. The subspace $U^\star$ is referred as the \emph{index space}. 

Given a joint distribution $\P$ of $(Z,Y)$, a natural statistical task associated with such a model is to \emph{plant} a subspace $W^\star$, uniformly drawn from the Haar measure of $\mathcal{S}(r,d)$, and draw $n$ iid samples from the multi-index distribution $\PP_{W^\star, \P}$ parametrized by $W^\star$ and $\P$. Our task will be then to recover $U^\star=\spn[W^\star]$ given these samples.  
We note that this task is only well-posed when the `intrinsic' dimension of the model is $r$, namely that $\P$ does \emph{not} admit a factorization $\P = \gamma_{r'} \otimes {\P}_S$, where ${\P}_S( z_{S}, y)$ is the marginal of $\P$ over a subspace of $\R^r \times \R$ of dimension $< r+1$ that includes the last coordinate. We will assume this property from now on. 

We place ourselves in the setting where $r=O_d(1)$, and consider the high-dimensional regime. 
Since the dimensionality of $\mathcal{S}(r,d)$ is of order $rd$, one expects that a brute-force estimation procedure that fits $\PP_{\P, W_j}$ over a suitable $\epsilon$-net of $ \{W_j \}_j \subset \mathcal{S}(r,d)$ requires $O( d \epsilon^{-2})$ samples to estimate the index space up to accuracy $\epsilon$. 
Our main motivation is to understand this question from the lens of computational-statistical gaps: \emph{how many samples are needed, as a function of $d,\P$, to produce an estimate of the planted subspace using polynomial-time algorithms, as opposed to using brute-force?} This question enjoys a large literature, spanning high-dimensional statistics and learning theory, starting from the inverse regression methods from \cite{li1991sliced} and beyond \cite{xia2008multiple,xia2002adaptive,hristache2001structure,cook2002dimension,cook2000save,vempala2010learning,klivans2008learning,mossel2003learning,daniely2025online} (see also \cite{bruna2025survey} for a recent survey), where efficient algorithms have been developed for specific instances. Multi-index models are an appealing semiparametric model, and provide arguably the simplest instance of linear \emph{feature learning}, in the sense that the index space provides an adapted low-dimensional representation to perform high-dimensional learning. 
Some notorious examples include 
\begin{itemize}
    \item \emph{(noisy) Gaussian parity:} $Y | Z \stackrel{d}{=} \xi \sgn[ Z_1 \cdot Z_2 \dots Z_r] $, with $\PP[\xi=-1] = \eta$, $\PP[\xi=1]=1-\eta$  independent of $Z$ and $\eta < 1/2$. 
    \item \emph{Gaussian staircase functions:} $Y | Z \stackrel{d}{=} \phi_1(Z_1) + \phi_2(Z_1 , Z_2) + \ldots + \phi_r(Z_1, \ldots, Z_r)$.
    \item \emph{Intersection of $r$ half-spaces:} $Y | Z \stackrel{d}{=}  2\prod_{j=1}^r {\bf 1}(v_j^\top Z > \alpha_j) - 1$.
    \item \emph{Low-rank shallow neural network:} $Y|Z = a^\top \rho(V^\top Z) + \xi$ for some $a \in \R^M,V \in \R^{r \times M},\rho: \R \to \R$, additive noise $\xi$ independent of $Z$. 
    \item \emph{Polynomials:} $Y|Z = q( Z)$ where $q$ is a polynomial.
\end{itemize}

Focusing on the Gaussian setting, several works, starting from \cite{dudeja2018learning, arous2020online} and followed by \cite{abbe2021staircasepropertyhierarchicalstructure, abbe2023sgd, bietti2022learning, damian2022neural, ba2022high, dandi2023twolayer} have built a 
harmonic analysis framework to analyze a large class of algorithms, including stochastic gradient descent over NN architectures, leading to sample complexities of the form $n=\Theta(d^k)$, where $k$ is an explicit \emph{exponent} associated with a certain harmonic expansion of $\P$. 
In particular, \cite{damian2024computational}, focusing on Single-Index models (where $r=1$), identified the \emph{generative exponent} $\k = \k(\P)$ (see Section \ref{sec:leapgen}) as the fundamental quantity driving the sample complexity, in the sense that $n=\Theta(d^{1\vee \k/2})$ is both necessary and sufficient in the class of algorithms implemented by SQ (Statistical Queries) and Low-Degree Polynomials. In essence, the generative exponent arises from an expansion of the \emph{inverse regression} of $Z$ given $Y$, as put forward in the original \cite{li1991sliced}. \cite{lee2024neuralnetworklearnslowdimensional,arnaboldi2024repetitaiuvantdatarepetition,Dandi2024TheBO} showed that SGD with reused samples can learn single index models dependent on the generative exponent, instead of the information exponent.

In this work, we extend this notion of generative exponent to the general multi-index setting. As already pointed out in the literature \cite{abbe2023sgd, bietti2023learning,troiani2024fundamentallimitsweaklearnability,diakonikolas2025robustlearningmultiindexmodels}, the general $r>1$ setting gives rise to important new phenomena not present in the single-index case. In particular, gradient-based learning exhibits a sequential behavior in the form of \emph{saddle-to-saddle} dynamics, where the index space is revealed incrementally along specific subspaces, with different timescales associated with each step. Moreover, such incremental alignment requires solving a semi-parametric problem, where both the subspace and the link function need to be estimated jointly.  
We overcome these additional challenges by identifying a suitable generalization of the generative exponent, the \emph{leap generative exponent} $\k$ (see Definition \ref{def:genleap}), arising from a `canonical' orthogonal decomposition of the index space, the leap decomposition (see Section \ref{sec:leapgen}). 

We first show that this exponent provides a computational lower bound of $n=O(d^{\k/2})$ under the Low-degree polynomial (LDP) framework, by extending the previously established lower bound in the single-index setting \cite{damian2024computational} to an appropriate detection task that is dominated by the index estimation task (\Cref{thm:lowerbound}). 
Next, and more importantly, we provide an algorithm that sequentially estimates the index space along the leap decomposition from the spectrum of a novel kernel $U$-statistic (see Eq (\ref{eq:kernelised_ustat})). This algorithm recovers the index space as soon as $n\gtrsim d^{\k/2}$, thus matching the LDP lower bound, and, crucially, it does not require prior knowledge of the multi-index model $\P$ (\Cref{thm:multi_step_guarantee}). We complement these general results by several case studies that give novel guarantees on specific multi-index models, such as general ReLU networks or Gaussian Parities; see Section \ref{sec:examples}.
Taken together, our results therefore provide the correct, sharp dimension dependence for \emph{any (Gaussian) multi-index model}. In particular, as soon as $\k>2$, they provide evidence of a computational-to-statistical gap at the polynomial scale. %

\paragraph{Related Works}

\cite{chen2020learning} show that any polynomial multi-index model can be learned with $n= \Theta(d) $ samples via an iterated filtered PCA algorithm. \cite{chen2020learningdeeprelunetworks} extended this to the case of multi-index models with ReLU activation with a similar algorithm. As we will show in Section \ref{sec:examples}, the generative exponent satisfies $k^\star\leq 2$, so our proposed algorithm also requires only $n= \Theta(d)$.

Gradient descent on two-layer networks has been extensively studied~\citep{bietti2023learning,ren2024learning,ren2025emergence,damian2022neural,abbe2023sgd}, these papers typically require at least $n= \Theta(d^{1 \vee l^\star-1})$, where $l^\star$ is the information exponent~\citep{arous2021online}, an upper bound of the generative exponent. \cite{abbe2023sgd} provide a similar definition of leap exponent but tailored to the information exponent, and thus larger than the generative leap exponent. In the setting of sparse juntas, \cite{joshi2024complexitylearningsparsefunctions} showed that by changing the loss function from square loss to another loss, gradient queries learn with complexity governed by the \texttt{SQ}-exponent, which is analagous to the generative exponent but restricted to juntas.
\cite{troiani2024fundamentallimitsweaklearnability} characterize the generative leap exponent for leaps $\le 2$. \cite{defilippis2025optimal,kovavcevic2025spectral} give spectral estimators for the special case when the subspace is fully identified in the first leap of generative exponent $\le 2$. 
See Section \ref{sec:leapgen}  for further discussion.

\paragraph{Tensor PCA. } In the context of Tensor PCA, \cite{montanari2014statistical} proposed the Tensor PCA model and presented several algorithms including tensor unfolding. \cite{zheng2015interpolatingconvexnonconvextensor} proposed a rectangular unfolding algorithm closely related to a single step of our algorithm, and showed it attain the conjectured optimal sample complexity of $n=\Theta( d^{1 \vee \k/2})$. \cite{dudeja2021statistical} provided statistical query lower bounds for the symmetric and asymmetric Tensor PCA model, and \cite{hopkins2015tensor, hopkins2017power} gave the corresponding lower bound in the low-degree / SOS models. \cite{dudeja2024statisticalcomputational} provided a comprehensive study of communication lower bounds and efficient algorithms for Tensor PCA  and the related problem of Non-Gaussian Component Analysis. \cite{arous2024stochasticgradientdescenthigh}  initiated the study of stochastic gradient descent over the Stiefel manifold for the multi-spike Tensor PCA model, showing a time complexity of $d^{1 \vee k^\star -1}$ where $k^\star$ is the order of the tensor (analogous to the generative exponent). 

\cite{chen2020towards} show that deep neural networks can simulate unfolding-like algorithms and learn  multi-index functions with $n= \Theta(d^{\lceil k/2 \rceil})$ where $k$ is the degree of the polynomial approximation to the groundtruth function. This method requires that the groundtruth is close to a polynomial.

\cite{vempala2010learning,klivans2024learning} provides the current best known result for learning intersection of $k$-halfspaces in $d$-dimensions with $n=\Theta(d)$.
\cite{vempala2012structurelocaloptimalearning} provide a moment-based algorithm for learning multi-index models when the first leap learns all relevant variables.

\vspace{0.6cm}

While this work was being finalized, we became aware of \cite{diakonikolas2025robustlearningmultiindexmodels,diakonikolas2025algorithmssqlowerbounds}, which introduces a similar estimation procedure based on subspace conditioning. They define the class of $m$-well-behaved multi-index models. For the special case of single index models with generative exponent $k^\star$, \cite{diakonikolas2025robustlearningmultiindexmodels}[Appendix D.2] and \cite{diakonikolas2025algorithmssqlowerbounds}[Appendix C.3.2] show $m=k^\star$; we believe a similar equivalence holds also for multi-index models. However the proposed algorithm requires sample complexity $n= d^{O(k^\star)}$ even in the realizable setting, whereas the algorithm of \cite{damian2024computational} and this work, require only $n= \Theta(d^{k^\star/2})$ and apply when $y$ is either continuous or discrete. On the other hand, \cite{diakonikolas2025robustlearningmultiindexmodels,diakonikolas2025algorithmssqlowerbounds} algorithms aim for agnostic PAC learning, not just recovery of the subspace, and thus are able to explicitly characterize the dependence in the hidden constant $C(\P)$ in $n \ge C(\P) d^{1 \vee \k/2}$. By building an explicit piecewise constant discretization in the subspace, they explicitly characterize the dependence on $r$, $\epsilon$, and Lipschitz parameters. We expect that our subspace recovery algorithm can be combined with a discretization algorithm to attain similar guarantees, but with improved dependence on $d$. We also study several examples of the leap generative exponents in Section \ref{sec:examples} including piecewise linear functions (deep ReLU Networks with bias) and general deep neural networks with $r$-dimensional first hidden layer, improving upon previous results specific to multi-index polynomials and homogeneous piecewise linear functions ~\cite{chen2020learning,chen2020learningdeeprelunetworks}.

\newcommand{\h}{{\bf h}}

\paragraph{Notation}

$\h_k$ denotes the normalized $k$-th Hermite tensor, defined as $\h_k(u) := \frac{(-1)^k}{\sqrt{k!}} \frac{\nabla^{\otimes k} \gamma_d(u)}{\gamma_d(u)}$ for $u \in \R^d$.  
$\mathcal{S}(r,d)$ is the Stiefiel manifold of $r \times d$ orthogonal matrices, and $\mathcal{G}(r,d)$ is the Grassman manifold, obtained by quotienting $\mathcal{S}(r,d)$ by $r$-dimensional basis transformations. 
For two subspaces $T \subseteq T'$, we write $T' \setminus T$ as the orthogonal complement of $T$ in $T'$. For two subspaces $T,T'$ we define their distance $\mathsf{d}(T,T')$ to be $\norm{\Pi_T - \Pi_{T'}}_{\mathrm{op}}$ where $\Pi_T$ is the orthogonal projection onto $T$.

\paragraph{Paper outline.} Section \ref{sec:leapgen} recalls the generative exponent for single-index models and shows how to generalize the definition to multi-index models via the leap generative exponent, and discuss the relation to the leap information exponent. Section \ref{sec:lowerbound} gives the main computational lower bound result which shows that in the low-degree polynomial framework the multi-index model with leap generative exponent $\k$ requires $n \ge \tilde \Omega(d^{\k/2})$. Section \ref{sec:upper} gives our main algorithm, the Hermite Kernel U-statistic, that recovers the subspace with the optimal sample complexity. Finally in Section \ref{sec:examples}, we study the leap generative exponents of several function classes including piecewise linear functions and general neural networks with $r$-dimensional first hidden layer

\section{The Leap Decomposition}
\label{sec:leapgen}
\paragraph{Preliminaries: The Generative Exponent for Single-Index Models}
We start by recalling the generative exponent for single-index models \cite{damian2024computational}. 
Given $(Z,Y)$ drawn from a joint distribution $\P \in \mathcal{P}( \R \times \R)$ with first marginal equal to a Gaussian, and such that $\P \neq \gamma_1 \otimes \P_y$, we define for each integer $k \ge 1$, 
$\zeta_k := \E[h_k(Z) | Y] \in ~L^2(\P_y)$, and $\k = \inf\{ k; \, \|\zeta_k\|_{L^2(\P_y)} > 0\} $. 
Equivalently, $\k$ is the smallest integer $k$ such that there exists a measurable function $\mathcal{T}:\R \to \R$ and a mean-zero $k$-th degree polynomial $q$ such that $\E[\mathcal{T}(Y) q(Z) ] \neq 0$. The main takeaway from \cite{damian2024computational} is that $n=\Theta(d^{\k/2 \vee 1})$ is both necessary (under the SQ and the LDP frameworks) and sufficient for recovery of the planted direction \footnote{and also to learn the target, by performing a subsequent dimension-free non-parametric regression.}.

\paragraph{Subspace Filtration and Leap exponents:}
We begin by generalizing the coefficients $\{\zeta_k\}$ from \citep{damian2024computational}. The key novel ingredient is the notion of \emph{subspace filtration}, capturing the sequential nature of the multi-index estimation, and which appears in several existing multi-index estimation procedures \cite{abbe2022merged,abbe2023sgd, bietti2023learning, diakonikolas2025robustlearningmultiindexmodels}. In essence, we now need to extend the  expectations $\zeta_k$, which were conditional on the label $y$, to conditional expectations on an `augmented label' that includes all the previously estimated directions of the index space. 
More formally, let $S \in \mathcal{G}(r',r)$ be a subspace, and for $z \in \R^r$ let $z_S \in S$ denote the orthogonal projection of $z$ onto $S$. We write $\bar{y}_S:=( z_S, y) \in \R^{1+r'}$ and $\bar{z}_S := z_{S^\perp} \in \R^{r-r'}$.

For any $S \in \mathcal{G}(r',r)$, we then define:
\begin{align*}
	\zeta_{k,S} &:= \E[h_k(\bar{Z}_{S})|\bar{Y}_S ] \in L^2(\R^{1+r'}, (S^\perp)^{\otimes k}, \P_{\bar{Y}_S}) ~,\\
    \Lambda_k(S) &:= \E_{\bar{Y}_S}[\zeta_{k,S}(\bar{Y}_S) \otimes \zeta_{k,S}(\bar{Y}_S)] \in (S^\perp)^{\otimes 2k} ~, ~ \lambda_k^2(S) := \E_{\bar{Y}_S}[\norm{\zeta_{k,S}(\bar{Y}_S)}_F^2] ~.
 \end{align*}
Intuitively, these tensors capture whether there is any information ``of order $k$'' that can be captured, given knowledge of the subspace $S$. When $S = \emptyset{}$ and $r = 1$, these definitions reduce to those in \cite{damian2024computational}. Finally, we note that these definitions only depend on the joint distribution $\P$ of $(Z,Y)$ and are independent of the choice of $W^\star$.

Given a subspace $S$, we define the associated null distribution $\P_S$ by:
\begin{align*}
	d\P_S[Z,Y] = d\P[\bar{Z}_{S}] d\P[Z_S,Y]~.
\end{align*}
Under $\P_S$, $(Y,Z_S)$ and $\bar{Z}_{S}$ have the same marginals as under $\P$, but are independent. The label transformations $\zeta_k$ appear as the Hermite coefficients of the density ratio $\frac{d\P}{d\P_S}$:
\begin{lemma}[Density Ratio expansion]
\label{lem:density}
We have the following formal expansion in $L^2(\P_S)$: 
	\begin{align*}
		\frac{d\P}{d\P_S}[Z,Y] = \sum_{k \ge 0} \ev{h_k(\bar{Z}_{S}),\zeta_k(Y;Z_S)}.
	\end{align*}
\end{lemma}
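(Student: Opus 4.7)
The plan is to first rewrite the density ratio $d\P/d\P_S$ in a form that makes the Hermite expansion transparent, and then to identify the coefficients of such an expansion as exactly the conditional expectations $\zeta_{k,S}$.

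First I would observe that since $Z_S$ and $\bar Z_S$ are orthogonal Gaussian components, the marginal of $Z$ factorizes as $d\P[Z] = d\gamma_{r-r'}(\bar Z_S)\, d\P[Z_S]$. Using this and the definition of $\P_S$, the density ratio simplifies to
\begin{align*}
\frac{d\P}{d\P_S}[Z,Y] \;=\; \frac{d\P[Z,Y]}{d\P[\bar Z_S]\, d\P[Z_S,Y]} \;=\; \frac{d\P[\bar Z_S \mid Y, Z_S]}{d\gamma_{r-r'}(\bar Z_S)}.
\end{align*}
In other words, $d\P/d\P_S$ is, at each fixed value of $\bar Y_S = (Z_S,Y)$, the Radon–Nikodym derivative of the conditional law of $\bar Z_S$ given $\bar Y_S$ (under $\P$) with respect to the standard Gaussian on $S^\perp$ (which is exactly the conditional law of $\bar Z_S$ given $\bar Y_S$ under $\P_S$).

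Second, I would invoke the fact that the normalized Hermite tensors $\{h_k\}_{k\ge 0}$ form a complete orthonormal basis of $L^2(\gamma_{r-r'})$ (viewed with values in the symmetric tensor powers of $S^\perp$). Formally expanding the conditional density ratio in this basis at each fixed $\bar Y_S$ gives
\begin{align*}
\frac{d\P[\bar Z_S \mid \bar Y_S]}{d\gamma_{r-r'}(\bar Z_S)} \;=\; \sum_{k\ge 0} \big\langle h_k(\bar Z_S),\, c_k(\bar Y_S)\big\rangle,
\end{align*}
where the coefficient tensor $c_k(\bar Y_S)$ is determined by integration against $h_k$ under the Gaussian measure. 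Plugging in the expression for the density ratio and using the change of measure (from $\gamma_{r-r'}$ to the conditional law of $\bar Z_S$ given $\bar Y_S$), the coefficient collapses to
\begin{align*}
c_k(\bar Y_S) \;=\; \int h_k(\bar z_S)\, d\P[\bar z_S \mid \bar Y_S] \;=\; \E[h_k(\bar Z_S)\mid \bar Y_S] \;=\; \zeta_{k,S}(\bar Y_S),
\end{align*}
which is exactly the claimed identification.

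The main (really, only) subtlety is the mode of convergence: the statement only asserts a \emph{formal} expansion in $L^2(\P_S)$, so I would not need to establish pointwise convergence of the series, only that $\{\langle h_k(\bar Z_S), \zeta_{k,S}(\bar Y_S)\rangle\}_k$ are orthogonal in $L^2(\P_S)$ (which follows from Hermite orthogonality together with $\bar Z_S \perp \bar Y_S$ under $\P_S$) and that the partial sums converge to the $L^2(\P_S)$-projection of $d\P/d\P_S$ onto the closed span of the Hermite basis, with equality in $L^2(\P_S)$ whenever this ratio lies in $L^2(\P_S)$. This last point is the one minor obstacle but is standard; everything else is a direct Bayes-rule computation plus completeness of the Hermite basis.
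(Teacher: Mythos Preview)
Your proposal is correct and follows essentially the same approach as the paper: compute the Hermite coefficients of the density ratio by integrating against $h_k(\bar Z_S)$ under the null measure $\P_S$, use the change of measure to turn this into $\E_\P[h_k(\bar Z_S)\mid \bar Y_S]=\zeta_{k,S}$, and then invoke completeness of the Hermite basis. Your explicit rewriting of $d\P/d\P_S$ as the conditional density ratio $d\P[\bar Z_S\mid \bar Y_S]/d\gamma_{r-r'}(\bar Z_S)$ is an extra clarifying step the paper leaves implicit, but the argument is otherwise identical.
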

This immediately implies the following decomposition of $\chi^2(\P || \P_S)$, whenever this divergence exists:
\begin{lemma}[Mutual Information Expansion]
\label{lem:mutual} Assume $\chi^2(\P || \P_S)<\infty$. Then 
we have $\chi^2(\P || \P_S) = \sum_{k \ge 1} \lambda_k^2(S)$.
\end{lemma}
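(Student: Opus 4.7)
The plan is to expand the squared density ratio using Lemma \ref{lem:density} and reduce to the orthogonality of Hermite tensors under the Gaussian measure, exploiting the product structure of $\P_S$.

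First I would write
\begin{align*}
\chi^2(\P \| \P_S) = \E_{\P_S}\!\left[\left(\tfrac{d\P}{d\P_S}\right)^{\!2}\right] - 1,
\end{align*}
and substitute the expansion from Lemma \ref{lem:density}. The $k=0$ term is $\zeta_{0,S} = \E[h_0(\bar Z_S)\mid \bar Y_S] = 1$, so
\begin{align*}
\tfrac{d\P}{d\P_S}[Z,Y] = 1 + \sum_{k\ge 1}\langle h_k(\bar Z_S), \zeta_{k,S}(\bar Y_S)\rangle,
\end{align*}
and squaring gives a double sum that, after the inner product with $1$, yields the constant $1$ plus cross terms indexed by pairs $(k,k')$ with $k,k'\ge 1$.

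The second step is to take the expectation under $\P_S$ and exploit the product structure $d\P_S[Z,Y] = d\gamma_{S^\perp}[\bar Z_S]\,d\P[Z_S,Y]$: under $\P_S$, $\bar Z_S$ is a standard Gaussian on $S^\perp$ and is independent of $(Z_S,Y)$, hence of $\zeta_{k,S}(\bar Y_S)$. Conditioning on $\bar Y_S$ and using the Hermite-tensor orthogonality
\begin{align*}
\E_{\bar Z_S\sim\gamma_{S^\perp}}\!\bigl[\langle h_k(\bar Z_S),A\rangle\,\langle h_{k'}(\bar Z_S),B\rangle\bigr] = \delta_{k,k'}\,\langle A,B\rangle
\end{align*}
for symmetric tensors $A,B\in (S^\perp)^{\otimes k}$ (which applies since the $\zeta_{k,S}$ are symmetric by construction), the cross terms with $k\ne k'$ vanish and the diagonal terms produce $\|\zeta_{k,S}(\bar Y_S)\|_F^2$. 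Taking the outer expectation over $\bar Y_S$ gives $\lambda_k^2(S)$, so
\begin{align*}
\E_{\P_S}\!\left[\left(\tfrac{d\P}{d\P_S}\right)^{\!2}\right] = 1 + \sum_{k\ge 1}\lambda_k^2(S),
\end{align*}
which yields the claim after subtracting $1$.

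The main obstacle is the exchange of the double infinite sum with $\E_{\P_S}$. The cleanest route is to invoke the assumption $\chi^2(\P\|\P_S)<\infty$: this places $d\P/d\P_S$ in $L^2(\P_S)$, and since the functions $\{\langle h_k(\bar Z_S),T_k(\bar Y_S)\rangle : k\ge 0,\ T_k\in L^2(\P_{\bar Y_S};(S^\perp)^{\otimes k})\text{ symmetric}\}$ form an orthogonal system in $L^2(\P_S)$, the expansion of Lemma \ref{lem:density} is in fact the $L^2(\P_S)$ expansion of $d\P/d\P_S$ in this basis and Parseval applies directly; the $\zeta_{k,S}$ are exactly the orthogonal projections onto the $k$-th Hermite block, so the squared-norm identity is immediate. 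A small care point is justifying symmetry of $\zeta_{k,S}$, which is inherited from that of $h_k$ by the defining conditional expectation.
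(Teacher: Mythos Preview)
Your proof is correct and takes essentially the same approach as the paper: write $\chi^2(\P\|\P_S)=\E_{\P_S}[(d\P/d\P_S)^2]-1$, substitute the Hermite expansion from Lemma~\ref{lem:density}, and use orthogonality of the Hermite tensors under $\P_S$ to collapse to $\sum_{k\ge 1}\lambda_k^2(S)$. The paper's proof is a one-line version of this; your additional care in invoking the $L^2(\P_S)$ hypothesis to justify Parseval is a welcome elaboration but not a departure in method.
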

Notice that while $\chi^2(\P || \P_S)$ may be infinite in some cases, e.g. in deterministic models where $Y = \sigma(Z)$, the quantities $\lambda_k(S)$ are well-defined for all $k$, since $\zeta_{k,S} \in L^2(\P_{\bar{Y}_S})$ \footnote{one can explicitly control  $\| \zeta \|$ ; see Lemma \ref{lem:zeta_p_bound}.}.
Given this expansion, we can immediately define the \emph{leap} $k(S)$ of a subset $S$:
\begin{definition}[Generative Leap relative to $S$]
\label{def:genleap_subspace}
	$k(S)$ is the smallest $k \ge 1$ such that $\lambda_k^2(S) > 0$.
\end{definition}
Note that $k(S) < \infty$ so long as $\P \ne \P_S$.

\paragraph{The Leap Decomposition:}
We will define the flag $\mathcal{F} = \{\emptyset = S_0 \subsetneq S_1 \subsetneq \cdots \subsetneq S_L =\R^r\}$ inductively as follows. Given a subspace $S_i$, $i \geq 0$,  we define $k_{i+1} := k(S_i)$ and $S_{i+1}$ by:
\begin{align}
\label{eq:leapdecomp}
	S_{i+1} := S_i \cup \mathrm{span}[\Lambda_{k_{i+1}}(S_i)].
\end{align}
Here, we have defined the span of a symmetric tensor $T \in (\R^{r})^{\otimes k}$ as $\mathrm{span}(T) = \mathrm{span}\qty[ \mathrm{Mat}_{r,r^{k-1}}[T] ]$ where $\mathrm{Mat}_{r,r^{k-1}}[T]$ denotes $T$ reshaped as an $r \times r^{k-1}$ matrix.

\begin{definition}[Generative Leap Exponent]
\label{def:genleap}
Let $k_i$, $i=1, \ldots, L$ be defined as above.
The \emph{generative leap exponent} is defined as $k^\star := \max_i k_i$.    
\end{definition}

We now verify that the Leap decomposition is well-defined, and give a variational representation. 
\begin{definition}
Given two subspaces $S \subsetneq T$, we define the \emph{relative leap} $k(S, T)$ of a subspace $S$ towards $T$ as 
\begin{align}
    k(S, T) &:= \inf\{ k; \, T\setminus S  \subseteq \cup_{k' \leq k} \mathrm{span}(\Lambda_{k'}(S)) \}~.
\end{align}
\end{definition}
In words, the relative leap measures the order of the Hermite tensor needed to `reach' the subspace $T$ from conditional expectations over $y$ and $z_S$. Observe that we can relate the leaps $k(S)$ and $k(S, T)$ as 
$k(S) = \inf_{T; S \subsetneq T} k(S,T)$. 

\begin{proposition}[Variational Characterization of Leap Generative Exponent]
\label{prop:variational_leapgen}
    The leap decomposition terminates in a finite number of steps $L \leq r$. Moreover, we have 
    \begin{equation}
    \label{eq:genleap_equiv}
        k^\star = \inf_{\mathcal{F}=\{\emptyset=R_0 \subset \dots \subset \R^r\} } \max_j k(R_j,R_{j+1}) ~.
    \end{equation}
    Finally, $\k$ is invariant to rotation: if $\tilde{\P} = (U \otimes \mathrm{Id})_\# \P$ where $U \in \mathcal{O}_r$ is any rotation of the model, we have $\k(\tilde{\P}) = \k(\P)$. 
\end{proposition}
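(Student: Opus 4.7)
The plan is to establish the three assertions of the proposition in turn.

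\medskip

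For \textbf{termination}, I would argue that the greedy decomposition strictly increases dimension at each step and hence halts in at most $r$ steps. Whenever $S_i \subsetneq \R^r$, the intrinsic-dimension assumption on $\P$ forbids the factorization $\P=\P_{S_i}$, so by Lemma~\ref{lem:mutual} at least one $\lambda_k^2(S_i) > 0$ and $k_{i+1} = k(S_i) < \infty$. The tensor $\Lambda_{k_{i+1}}(S_i) \in (S_i^\perp)^{\otimes 2 k_{i+1}}$ is then nonzero, and because its first-mode reshape has nontrivial column span inside $S_i^\perp$, the enlargement $S_{i+1} = S_i \cup \mathrm{span}[\Lambda_{k_{i+1}}(S_i)]$ is strict in dimension, giving $L \le r$.

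\medskip

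For the \textbf{variational identity}, the direction $\inf \le \k$ is witnessed by the greedy flag itself: by construction $\lambda_{k'}^2(S_i)=0$ for $k' < k_{i+1}$ and $S_{i+1} = S_i + \mathrm{span}[\Lambda_{k_{i+1}}(S_i)]$, so $k(S_i,S_{i+1}) = k_{i+1}$ and the greedy flag achieves $\max_i k(S_i,S_{i+1}) = \k$. For the harder direction $\inf \ge \k$ I would induct on $r$. Given any flag $\{R_j\}$ with $m := \max_j k(R_j, R_{j+1})$, the first greedy step already satisfies $k_1 = k(\emptyset) \le k(\emptyset, R_1) \le m$. For the remaining steps I pass to the restricted problem on $S_1^\perp$ with augmented conditioning variable $(Z_{S_1}, Y)$, whose intrinsic dimension is $r - \dim S_1 < r$; by induction it then suffices to produce a flag from $S_1$ to $\R^r$ with max relative leap $\le m$. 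The natural candidate is the shifted flag $R'_j := R_j + S_1$, and checking that its relative leaps remain $\le m$ is where the main technical ingredient enters.

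\medskip

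The central tool, which I expect to be the \textbf{main obstacle}, is the following Hermite chain-rule / monotonicity lemma: for $S \subseteq S'$ and $v \in (S')^\perp$, if $v \in \mathrm{span}(\Lambda_k(S))$ then $v \in \bigcup_{k' \le k} \mathrm{span}(\Lambda_{k'}(S'))$. The proof I have in mind uses the orthogonal direct-sum decomposition $\bar Z_S = Z_{S' \setminus S} \oplus \bar Z_{S'}$ together with the product-formula expansion of the symmetric Hermite tensor $h_k(\bar Z_S)$ as a symmetrized sum of $h_i(Z_{S' \setminus S}) \otimes h_{k-i}(\bar Z_{S'})$. Since $v \in (S')^\perp$ is orthogonal to every $Z_{S' \setminus S}$-direction, the first-mode contraction against $v$ selects only those summands where the first mode belongs to the $\bar Z_{S'}$ factor. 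Applying the tower property $\E[\cdot \mid Z_S, Y] = \E[\E[\cdot \mid Z_{S'}, Y] \mid Z_S, Y]$ then rewrites any nonvanishing contraction of $\zeta_{k,S}$ against $v$ as a linear combination (over $k' \le k$) of terms of the form $\zeta_{k',S'}[v,\ldots]$; nonvanishing on the $S$-side forces at least one of these $\zeta_{k',S'}[v,\ldots]$ to be nonvanishing, giving $v \in \mathrm{span}(\Lambda_{k'}(S'))$. Tracking the symmetrizations, normalizations, and combinatorial bookkeeping of which modes land on which side of the direct sum is the technical heart of the argument. Promoted to subspaces by linearity, the lemma yields the monotonicity $T_m(S) + S' \subseteq T_m(S')$ for $S \subseteq S'$ with $T_m(S) := S + \sum_{k \le m} \mathrm{span}(\Lambda_k(S))$, which when applied to the shifted flag $\{R_j + S_1\}$ supplies the inductive step and hence $\k \le m$.

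\medskip

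Finally, \textbf{rotation invariance} follows from the covariance $h_k(Ux) = U^{\otimes k} h_k(x)$ for orthogonal $U \in \mathcal{O}_r$, a direct consequence of the rotation invariance of $\gamma_r$ and the tensorial transformation of iterated gradients. Under $\tilde \P = (U \otimes \mathrm{Id})_\# \P$ the conditioning variables transform covariantly as well, yielding $\tilde \zeta_{k, US} = U^{\otimes k} \zeta_{k, S}$ and $\tilde \Lambda_k(US) = U^{\otimes 2k} \Lambda_k(S)$; hence $\mathrm{span}[\tilde \Lambda_k(US)] = U \cdot \mathrm{span}[\Lambda_k(S)]$. The leap decomposition of $\tilde \P$ is therefore exactly the image under $U$ of the leap decomposition of $\P$: $\tilde S_i = U S_i$ with identical exponents $\tilde k_i = k_i$, so $\k(\tilde \P) = \k(\P)$.
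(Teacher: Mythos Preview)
Your proposal is correct, and the key technical tool --- the monotonicity lemma that for $S \subseteq S'$ one has $\mathrm{span}(\Lambda_k(S)) \subseteq S' + \bigcup_{k'\le k}\mathrm{span}(\Lambda_{k'}(S'))$, proved via the Hermite product decomposition of $h_k(\bar Z_S)$ along $S'\setminus S \oplus (S')^\perp$ and the tower property --- is the same as the paper's. The global structure, however, is genuinely different. The paper gives a direct bottleneck argument: it picks the subspace $\bar S$ in the greedy decomposition at which $\k = k(\bar S)$, lets $j'$ be the last index with $R_{j'}\subseteq \bar S$, and applies the monotonicity lemma with $T=R_{j'}$, $T'=\bar S$ to conclude that if $k(R_{j'},R_{j'+1})<\k$ then $R_{j'+1}\subseteq \bar S$, a contradiction. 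This avoids any induction or flag manipulation. Your route via induction on $r$ and the shifted flag $\{R_j+S_1\}$ is longer but has the merit of making the recursive structure of the leap decomposition explicit, and your formulation of the monotonicity lemma (with the union over $k'\le k$) is in fact the sharp one --- the paper writes only $\mathrm{span}(\Lambda_k(T'))$ on the right, which is too strong as stated (e.g.\ take $r=2$, $T=\emptyset$, $T'=\mathrm{span}(e_1)$, $Y=Z_1\,\mathrm{sign}(Z_2)$: then $e_2\in\mathrm{span}(\Lambda_2(T))$ but $e_2\notin\mathrm{span}(\Lambda_2(T'))$, only $e_2\in\mathrm{span}(\Lambda_1(T'))$), though harmless in the paper's application since all lower-order $\Lambda_{k'}(\bar S)$ vanish at the bottleneck anyway.

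One small caveat: your appeal to Lemma~\ref{lem:mutual} to establish $k(S_i)<\infty$ presupposes $\chi^2(\P\|\P_{S_i})<\infty$, which need not hold (e.g.\ in deterministic models). The conclusion is still correct --- it follows from completeness of the Hermite basis applied to the conditional density, or simply from the paper's remark that $k(S)<\infty$ whenever $\P\ne\P_S$ --- but that lemma is not the right citation.
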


\paragraph{Relationship with Information Leap Exponent}
Finally, we relate the generative leap  exponent to the information leap  exponent, first introduced in \cite{abbe2023sgd} (referred to as $\mathrm{IsoLeap}$ in the setting of Gaussian input data); see also \cite{bietti2023learning} and \cite{dandi2023twolayer}. Let us first recall its definition in our context. 
For any $S \in \mathcal{G}(r',r)$, we define:
\begin{align*}
	\tilde{\zeta}_{k,S} &:= \E[Y h_k(\bar{Z}_{S})|Z_S ] \in L^2(\R^{1+r'}, (S^\perp)^{\otimes k}, \P_{Z_S}) ~, \\
    \tilde{\Lambda}_k(S) &:= \E_{Z_S}[\tilde{\zeta}_{k,S}(Z_S) \otimes \tilde{\zeta}_{k,S}(Z_S)] \in (S^\perp)^{\otimes 2k} ~,~
	\tilde{\lambda}_k^2(S) := \E_{Z_S}\left[\norm{\tilde{\zeta}_{k,S}(Z_S)}_F^2\right] ~.
 \end{align*}
By analogy with Definition \ref{def:genleap_subspace}, we define $l(S)$ to be the smallest $k$ such that $\tilde{\lambda}^2_k(S) > 0$. Equipped with this object, the information leap  exponent is recovered 
as follows.
\begin{definition}[Information Leap Exponent, \cite{abbe2023sgd, bietti2023learning}]
\label{def:leap_info}
The \emph{information leap exponent} of the multi-index model $\P$ is given by $l^\star := \max_i l_i~,$ where $l_{i+1} = l(\tilde{S}_i)$ and $(\tilde{S}_i)_i$ is defined recursively by $\tilde{S}_0=\emptyset$ and $\tilde{S}_{i+1} = \tilde{S}_i \cup \mathrm{span}[\tilde{\Lambda}_{l_{i+1}}(\tilde{S}_i)]$. 
\end{definition}
Let us now relate the Information Leap exponent to the generative leap. We start with a direct generalization of \cite[Prop 2.6]{damian2024computational}:
\begin{proposition}[Generative and Information Exponents relative to  subspaces]\label{prop:gen_leap_label_transformation} 
    For any subspace $S$, we have 
    \begin{align}
        k(S)[\P] &= \inf_{\mathcal{T} \in L^2( \P_{\bar{y}_S})} l(S)[(\mathrm{Id}_{{z}} \otimes \mathcal{T}_{{y}})_\# \P]~.
    \end{align}
    In particular we have $k(S) \leq l(S)$ for any subspace $S$.
\end{proposition}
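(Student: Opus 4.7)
The plan is to prove the identity by establishing matching lower and upper bounds, both of which reduce to a single tower-property identity linking $\tilde{\zeta}_{k,S}$ under the transformed model to $\zeta_{k,S}$ of the original. Specifically, for any $\mathcal{T} \in L^2(\P_{\bar y_S})$, letting $\tilde{\P} := (\mathrm{Id}_z \otimes \mathcal{T}_y)_\# \P$, the tower property of conditional expectation together with the defining relation $\zeta_{k,S}(Z_S,Y) = \E[h_k(\bar Z_S)\mid Z_S, Y]$ yields
\begin{equation}
    \tilde{\zeta}_{k,S}^{\tilde{\P}}(Z_S) \;=\; \E\!\left[\mathcal{T}(Z_S,Y)\, h_k(\bar Z_S) \,\middle|\, Z_S\right] \;=\; \E\!\left[\mathcal{T}(Z_S,Y)\, \zeta_{k,S}(Z_S,Y) \,\middle|\, Z_S\right].
    \label{eq:sketch_tower}
\end{equation}
This converts every admissible test transformation $\mathcal{T}$ into a scalar pairing against the generative-leap coefficient, and is the sole engine of the proof.

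The direction $\inf_\mathcal{T} l(S)[\tilde{\P}] \ge k(S)[\P]$ is then immediate: for any $k' < k(S)[\P]$ one has $\lambda_{k'}^2(S)=0$, i.e.\ $\zeta_{k',S}\equiv 0$ in $L^2(\P_{\bar y_S})$, and \eqref{eq:sketch_tower} forces $\tilde{\zeta}_{k',S}^{\tilde{\P}}\equiv 0$ for \emph{every} admissible $\mathcal{T}$, hence $\tilde{\lambda}_{k'}^2(S)[\tilde{\P}]=0$. For the reverse direction, set $k:=k(S)[\P]$ and exhibit an explicit minimizer. Since $\lambda_k^2(S)>0$, there exists a fixed tensor $T \in (S^\perp)^{\otimes k}$ with $\E[\langle T, \zeta_{k,S}\rangle^2] > 0$ (e.g.\ any principal eigentensor of $\Lambda_k(S)$). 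Define $\mathcal{T}(Z_S, Y) := \langle T, \zeta_{k,S}(Z_S, Y)\rangle$, which lies in $L^2(\P_{\bar y_S})$ because $\zeta_{k,S}$ does (Lemma~\ref{lem:zeta_p_bound}). Applying \eqref{eq:sketch_tower} then gives $\langle T, \tilde{\zeta}_{k,S}^{\tilde{\P}}(Z_S) \rangle = \E[\langle T, \zeta_{k,S}\rangle^2 \mid Z_S] \ge 0$, whose expectation over $Z_S$ equals $\E[\langle T, \zeta_{k,S}\rangle^2] > 0$ by construction. The non-negative random variable $\langle T, \tilde{\zeta}_{k,S}^{\tilde{\P}}(Z_S)\rangle$ is therefore strictly positive on a set of positive measure, forcing $\tilde{\lambda}_k^2(S)[\tilde{\P}]>0$ and hence $l(S)[\tilde{\P}]\le k$.

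Finally, the corollary $k(S) \le l(S)$ follows by specializing to the trivial transformation $\mathcal{T}(Z_S, Y)=Y$, which returns the original model and requires only $Y \in L^2(\P)$ (an implicit standing assumption whenever $l(S)$ is defined). I expect the only real delicacy to be the integrability of the candidate $\mathcal{T}=\langle T, \zeta_{k,S}\rangle$ used in the upper-bound argument, but this is automatic: being a conditional expectation of a square-integrable tensor, $\zeta_{k,S}$ always lies in $L^2(\P_{\bar y_S})$, matching exactly the class over which the infimum is taken, so no truncation or regularization is needed.
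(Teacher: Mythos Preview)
Your proposal is correct and follows essentially the same approach as the paper. The paper uses the identical tower-property identity \eqref{eq:sketch_tower} for the lower bound, and for the upper bound chooses $\mathcal{T}=(\zeta_{k(S),S})_\beta$ for a single nonzero multi-index $\beta$ and computes the full expectation $\E[\mathcal{T}(\bar Y_S)H_\beta(\bar Z_S)]=\E[|(\zeta_{k(S),S})_\beta|^2]>0$ directly, which is just your construction specialized to $T=e_\beta$ and integrated over $Z_S$.
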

 In words, the generative exponent relative to a subspace $S$ is the largest $k$ such that 
 $\E_{\P}[ \mathcal{T}(y, z_S) q(z_{S^\perp}) ]=0$ for any measurable function $\mathcal{T}$ and any polynomial $q$ of degree $< k$. This provides a useful characterization, as illustrated in the examples of Section \ref{sec:examples}. 
 
As expected, the generative leap is upper bounded by the information leap: 
\begin{proposition}[Relationship with Leap Information Exponent]
\label{prop:gen_to_inf_leap}
    We have $\k \leq l^\star$. 
\end{proposition}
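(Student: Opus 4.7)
The strategy combines the variational characterization of $\k$ established in Proposition~\ref{prop:variational_leapgen} with a span-level upgrade of the subspace-wise inequality $k(S) \leq l(S)$ implied by Proposition~\ref{prop:gen_leap_label_transformation}. Concretely, I plan to plug the information leap flag $\{\tilde{S}_i\}_{i=0}^{\tilde{L}}$ into the variational formula $\k = \inf_{\mathcal{F}} \max_j k(R_j, R_{j+1})$ as a candidate flag. If the information flag fails to reach $\R^r$, we adopt the convention $l^\star = \infty$ and the inequality is vacuous, so we may assume $\tilde{S}_{\tilde{L}} = \R^r$. This immediately gives $\k \leq \max_i k(\tilde{S}_i, \tilde{S}_{i+1})$, and the entire task is reduced to showing that $k(\tilde{S}_i, \tilde{S}_{i+1}) \leq l_{i+1}$ for every $i$.

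The key lemma I need to establish is a span inclusion at the tensor level: for any subspace $S$ and any $k \geq 1$,
\[
\mathrm{span}(\tilde{\Lambda}_k(S)) \subseteq \mathrm{span}(\Lambda_k(S))
\]
as subspaces of $S^\perp$. To prove it, I use the tower property of conditional expectation, exploiting that $\bar{Y}_S = (Z_S, Y)$ refines $Z_S$:
\[
\tilde{\zeta}_{k,S}(Z_S) = \E[Y\, h_k(\bar{Z}_S) \mid Z_S] = \E\!\left[Y \cdot \E[h_k(\bar{Z}_S)\mid \bar{Y}_S] \,\middle|\, Z_S\right] = \E\!\left[Y \cdot \zeta_{k,S}(\bar{Y}_S) \,\middle|\, Z_S\right].
\]
Unfolding along the first mode and using linearity of conditional expectation, each column of $\mathrm{Mat}_{\dim S^\perp,\,(\dim S^\perp)^{k-1}}[\tilde{\zeta}_{k,S}(Z_S)]$ is a $Z_S$-conditional expectation of a scalar multiple of the corresponding column of $\mathrm{Mat}_{\dim S^\perp,\,(\dim S^\perp)^{k-1}}[\zeta_{k,S}(\bar{Y}_S)]$. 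Hence the column space of the former is contained in the linear hull of the column spaces of the latter as $\bar{Y}_S$ varies, which by a standard second-moment computation (mirroring the identity $\mathrm{col}\text{-}\mathrm{span}(\E[UU^\top]) = \overline{\mathrm{span}}(U)$) coincides with $\mathrm{span}(\Lambda_k(S))$; the same identification applied to $\tilde{\Lambda}_k$ yields the claimed inclusion.

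Applying the inclusion at $S = \tilde{S}_i$, $k = l_{i+1}$ gives
\[
\tilde{S}_{i+1} \setminus \tilde{S}_i = \mathrm{span}(\tilde{\Lambda}_{l_{i+1}}(\tilde{S}_i)) \subseteq \mathrm{span}(\Lambda_{l_{i+1}}(\tilde{S}_i)) \subseteq \bigcup_{k' \leq l_{i+1}} \mathrm{span}(\Lambda_{k'}(\tilde{S}_i)),
\]
so by the definition of the relative leap, $k(\tilde{S}_i, \tilde{S}_{i+1}) \leq l_{i+1}$. Taking the maximum over $i$ and invoking the variational characterization concludes $\k \leq \max_i l_i = l^\star$.

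The only delicate ingredient is the span-inclusion lemma, and the main (minor) obstacle is the bookkeeping identifying $\mathrm{span}(\Lambda_k(S))$ with the first-mode column span of the random tensor family $\{\zeta_{k,S}(\bar{Y}_S)\}_{\bar{Y}_S}$, and similarly for $\tilde{\Lambda}_k$. This is not hard since $\Lambda_k$ is a second-moment tensor, but it deserves to be isolated as a short auxiliary lemma because the $\Lambda$ tensors live in $(S^\perp)^{\otimes 2k}$ rather than in $(S^\perp)^{\otimes k}$, so the correspondence between the paper's definition of $\mathrm{span}(\cdot)$ and the support of the underlying random tensor must be checked explicitly.
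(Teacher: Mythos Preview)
Your proposal is correct and follows essentially the same route as the paper: both plug the information-leap flag into the variational characterization of $\k$ from Proposition~\ref{prop:variational_leapgen}, reduce to the span inclusion $\mathrm{span}(\tilde{\Lambda}_k(S)) \subseteq \mathrm{span}(\Lambda_k(S))$, and justify the latter via the tower-property identity $\tilde{\zeta}_{k,S} = \E[Y\,\zeta_{k,S}(\bar{Y}_S)\mid Z_S]$. Your write-up is in fact more explicit than the paper's rather terse version, which simply states the identity and the span inclusion without the column-span bookkeeping you flag.
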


Proofs of these results are deferred to Appendix \ref{sec:proofs_leapgen}.

\section{Computational Lower Bounds in the Low-Degree Polynomial Class}
\label{sec:lowerbound}

Let us first establish a computational lower bound for the estimation of a multi-index model. Following \cite{damian2024computational}, and relying on the fact that detecting planted structure is a necessary byproduct of estimating the index space, we instantiate a hypothesis testing adapted to the leap decomposition. 

Given $\P$ and its associated leap decomposition (\ref{eq:leapdecomp}), we consider $\bar{S}$ the subspace of dimension $r_0$ associated with the generative leap, ie $k^\star = k(\bar{S})$. Let $\bar{y} = ( \bar{S} z, y)$ be the effective label, with $\bar{y} \in \R^{r_0+1}$, $\bar{W} = \bar{S}^\top W^\star$ the planted subspace associated with $\bar{S}$, and $\bar{x} = \bar{W}^\perp x \in \R^{d-r_0}$ the effective input. Viewing $\P$ as the joint distribution of $( \bar{S}^\perp z, \bar{y})$, we define $\P_{\bar{y}}$ as the marginal over $\bar{y}$. Note that $r_0 < r$ by definition.  
We consider the following detection problem, \emph{conditional} on $\bar{W}$: 
\begin{itemize}
    \item $\mathbb{H}_1: $ there is a planted model of dimension  $\tilde{r}>r_0$ using $\P$ as link function and $\bar{y}$ as label. Specifically, $(\bar{x}, \bar{y}) \sim \E_{\tilde{W}} \PP_{\tilde{W}}$, where $\PP_{\tilde{W}}(\bar{y} | \bar{x}) = \P(\bar{y} | \tilde{W}^\top \bar{x})$. 
\item $\mathbb{H}_0: $ there is only planted structure up to dimension $r_0$; i.e., $( \bar{x}, \bar{y}) \sim \PP_0:= \gamma_{d-r_0} \otimes \P_{\bar{y}}$.   
\end{itemize}

By considering the likelihood ratio $\mathcal{R}=\frac{\mathrm{d}\mathbb{H}_1}{\mathrm{d}\mathbb{H}_0}$ and its orthogonal projection $\mathcal{R}_{\leq D}$ in $L^2(\mathbb{H}_0)$ onto polynomials of degree at most $D$, one can assess the ability of low-degree polynomials to solve this hypothesis testing problem \cite{bandeira2022franz, hopkins2018statistical}. Specifically, if $\| \mathcal{R}_{\leq D}\|_{L^2(\mathbb{H}_0)} = 1 + o_d(1)$, then no degree-$D$ polynomial $f$ in the input samples can weakly separate $\mathbb{H}_0$ from $\mathbb{H}_1$, ie satisfy $\max\{ \text{Var}_{0}[f], \text{Var}_1[f] \} = O( | \E_0[f]-\E_1[f]|^2)$ as $d \to \infty$ \cite[Proposition 6.2]{bandeira2022franz}.

\begin{theorem}[Weak separation lower bound]
\label{thm:lowerbound}
	Consider $d \gg \max(r,\k)$, $D = O( \log(d)^2)$, and $n = O(d^{\k/2-\gamma})$ for any $\gamma>0$. Then $\|\mathcal{R}_{\leq D}\|_{L^2(\mathbb{H}_0)} = 1 + o_d(1)$.  
\end{theorem}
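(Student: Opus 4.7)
The plan is to extend the Hermite-tensor argument of \cite{damian2024computational} from single-index to the multi-index setting, by conditioning on the subspace $\bar S$ selected by the leap decomposition, where by construction $k(\bar S)=\k$. The density ratio between the alternative and the null is a function of $\tilde W^\top \bar x$ and $\bar y$ alone, and because $\PP_0=\gamma_{d-r_0}\otimes \P_{\bar y}$ is exactly the product-form null associated to $\bar S$, \Cref{lem:density} gives, for any planted $\tilde W\in \mathcal{S}(r-r_0,d-r_0)$,
\[
R_{\tilde W}(\bar x,\bar y):=\frac{d\PP_{\tilde W}}{d\PP_0}(\bar x,\bar y)=\sum_{k\ge \k}\ev{\h_k(\tilde W^\top \bar x),\zeta_{k,\bar S}(\bar y)},
\]
where the sum starts at $k=\k$ since $\zeta_{k,\bar S}\equiv 0$ below that order. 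Each summand is a polynomial of degree exactly $k$ in $\bar x$ with coefficient in $L^2(\P_{\bar y})$, so $\mathcal{R}=\E_{\tilde W}\prod_{i=1}^n R_{\tilde W}(\bar x_i,\bar y_i)$ projected to degree $\le D$ retains only multi-indices $(k_1,\dots,k_n)\in(\{0\}\cup\{\k,\k+1,\dots\})^n$ with $\sum_i k_i\le D$.

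Under $\PP_0$, $\bar x$ is standard Gaussian independent of $\bar y$, so Hermite orthogonality in $\bar x$ forces the two multi-indices to coincide when expanding $\E_0[\mathcal{R}_{\le D}^2]$. The second moment reduces to
\[
\norml{\mathcal{R}_{\le D}}_{L^2(\PP_0)}^2=\E_{\tilde W,\tilde W'}\sum_{m=0}^{\lfloor D/\k\rfloor}\binom{n}{m}\!\!\sum_{\substack{k_1,\dots,k_m\ge \k\\ \sum_j k_j\le D}}\prod_{j=1}^{m}T_{k_j}(\tilde W,\tilde W'),
\]
where $T_k(\tilde W,\tilde W')$ is the tensor contraction of $\Lambda_k(\bar S)$ against $\E_{\bar x\sim\gamma}[\h_k(\tilde W^\top \bar x)\otimes\h_k(\tilde W'^\top \bar x)]$. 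Standard identities on Hermite cross-moments combined with Cauchy--Schwarz in $\bar y$ yield the key estimate $|T_k(\tilde W,\tilde W')|\le \lambda_k^2(\bar S)\,\norml{\tilde W^\top\tilde W'}_{\op}^k$.

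For Haar-uniform $\tilde W,\tilde W'\in \mathcal{S}(r-r_0,d-r_0)$, standard Stiefel concentration yields $\norml{\tilde W^\top\tilde W'}_{\op}\lesssim\sqrt{r/d}$ outside an event of probability $\exp(-\Omega(d))$. Combined with $n=O(d^{\k/2-\gamma})$, the leading $(m=1,k=\k)$ contribution is
\[
n\cdot|T_\k(\tilde W,\tilde W')|\lesssim \lambda_\k^2(\bar S)\,d^{-\gamma}=o_d(1),
\]
while the higher-$k$ terms are dominated by $\lambda_k^2(\bar S)\,d^{(\k-k)/2-\gamma}$ and are summable in $k$ after invoking a polynomial-in-$k$ bound on $\lambda_k^2(\bar S)$ (available via the $L^p$ Hermite-coefficient control referenced in the footnote following \Cref{lem:mutual}). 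The full $m$-sum is then majorized by $\sum_{m\ge 0}(nT_{\max})^m/m!=\exp(O(d^{-\gamma}))=1+o_d(1)$, and the atypical Stiefel event is absorbed by combining the a priori bound $\norml{\mathcal{R}_{\le D}}_{L^2}^2\le d^{O(D)}$ with $\exp(-\Omega(d))\cdot d^{O(D)}=o(1)$, which holds for $D=O(\log^2 d)$.

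The principal obstacle is controlling the Hermite tail in $k$: because $\chi^2(\P\|\P_{\bar S})$ may be infinite (e.g.\ for deterministic targets $Y=\sigma(Z)$), the naive sum $\sum_k\lambda_k^2(\bar S)$ diverges, and one must combine the truncation $k\le D=O(\log^2 d)$ with a quantitative $L^p$ Hermite-coefficient estimate on $\zeta_{k,\bar S}$ to keep the series summable. A secondary technical point is that the Cauchy--Schwarz bound $|T_k|\le\lambda_k^2\norml{\tilde W^\top\tilde W'}_{\op}^k$ is loose when $\Lambda_k(\bar S)$ is not aligned with the dominant singular direction of $\tilde W^\top\tilde W'$; a Frobenius-weighted refinement on the tensor contraction can sharpen constants but is unnecessary to reach the $1+o_d(1)$ conclusion.
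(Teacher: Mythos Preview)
Your overall strategy matches the paper's closely: expand the conditional likelihood ratio via \Cref{lem:density}, compute the second moment under $\PP_0$ using Hermite orthogonality, bound each per-sample factor by $|T_k(\tilde W,\tilde W')|\le\lambda_k^2(\bar S)\,\|\tilde W^\top\tilde W'\|_{\op}^k$ (the paper obtains this from $|\langle\Sigma,\tilde\Sigma\rangle|\le\tr[\Sigma]\,\|\tilde\Sigma\|_{\op}$ combined with an explicit computation of the Hermite cross-moment operator norm), and control the tail in $k$ via $\lambda_k^2\lesssim k^{r-1}$ (\Cref{lem:zeta_p_bound}). One minor slip: the expansion of $R_{\tilde W}$ must include the $k=0$ term (which equals $1$), though you recover this correctly when you write the multi-index set as $(\{0\}\cup\{\k,\k+1,\dots\})^n$; and the final majorization should be by $(n\cdot\sum_{k}|T_k|)^m/m!$, not $(nT_{\max})^m/m!$, since you are summing over all $(k_1,\dots,k_m)$.

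There is, however, a genuine gap in your good/bad event split. The claim that $\|\tilde W^\top\tilde W'\|_{\op}\lesssim\sqrt{r/d}$ fails only with probability $\exp(-\Omega(d))$ is false: since $r=O_d(1)$, the subgaussian tail at threshold $C\sqrt{r/d}$ is only $\exp(-\Omega(C^2 r))$, a constant in $d$, which cannot be absorbed by your crude $d^{O(D)}=\exp(O(\log^3 d))$ bound on the bad-event integrand. The repair is either to relax the threshold to $t=d^{-1/2+\epsilon}$ for some $\epsilon\in(0,\gamma/\k)$---then $n\,t^{\k}=O(d^{-\gamma+\epsilon\k})=o(1)$ on the good event, while the bad event has probability $\exp(-\Omega(d^{2\epsilon}))$, enough to kill the $\exp(O(\log^3 d))$ factor---or, as the paper does more cleanly, to skip the event split entirely and integrate the moments directly: writing $z=\|\tilde W^\top\tilde W'\|_{\op}$, the $d^{-1/2}$-subgaussianity of $z$ gives $\E[z^{j\k}]\lesssim(j\k/d)^{j\k/2}$, and the whole sum $\sum_{j\le D/\k}\binom{n}{j}\,\k^{j(r-1)}\,\E[z^{j\k}]$ is bounded termwise without any truncation.
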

In other words, any degree-$D$ polynomial test needs $n \ge \tilde \Omega(d^{\k/2})$ samples to weakly detect $\mathbb{H}_1$ from $\mathbb{H}_0$. Polynomial tests of degree $\omega(\log d)$ are considered a powerful step towards ruling out all noise-tolerant polynomial-time algorithms \cite{bandeira2022franz,kunisky2019notes}. The proof can be found in Appendix \ref{sec:proofs_lower}. 

This low-degree lower bound extends the previous LDP lower-bound from the single-index setting \cite{damian2024computational}. In that single-index setting, this LDP lower bound agrees with a SQ lower bound of $n=\Theta(d^{\k/2})$ samples. While it is possible to translate our LDP lower bounds to SQ lower bounds, eg via \cite{brennan2021statistical}, we note that there is a fundamental distinction arising in the multi-index setting, stemming from the inherent inability to perform certain spectral tasks in SQ. \cite{dudeja2021statistical} illustrated this mismatch in the setting of Tensor PCA, where asymmetric structures (such as the ones faced by multi-index model estimation) incur in additional dimension-factors.
That said, some SQ lower bounds are known for the multi-index setting. \cite{joshi2024complexitylearningsparsefunctions} establishes SQ lower bounds for the number of queries of order $\Theta(d^\k)$, 
and \cite{diakonikolas2025robustlearningmultiindexmodels,diakonikolas2025algorithmssqlowerbounds} obtains sample complexity lower bounds of order $\Theta(d^{\k/2})$ (where the generative leap is replaced by the equivalent $m$ in their notation), thus matching our LDP lower bounds.

\section{Upper Bound via Hermite Kernel U-Statistic}
\label{sec:upper}
We begin by describing a spectral estimator that works for a single leap. To motivate it, recall that the spectral estimator for single index models in \cite{damian2024computational} began by estimating the tensor:
\begin{align*}
    T = \E_{X,Y}\qty[\mathcal{T}(Y) \bs{h}_k(x)].
\end{align*}
For a suitable label transformation $\mathcal{T}$, the true expectation is proportional to $(w^\star)^{\otimes k}$, so estimating $w^\star$ is similar to a single-spike tensor PCA problem. For this problem, the partial trace estimator is an effective way to estimate $w^\star$. This estimator consists in repeatedly contracting indices $T \leftarrow T[I]$ until you are left with a vector whose expectation is $w^\star$ or a matrix whose expectation is $w^\star (w^\star)^\top$. However, this trick does not work in the multi-index setting. For example, consider Gaussian $k$-parity: $y = \sgn(z_1 \cdots z_k).$ For this problem, we can compute the population mean of an order $k$ estimator:\footnote{We note that because the labels lie in $\{0,1\}$ for Gaussian parity, applying a label transformation is equivalent to an affine transformation of $T$ and therefore cannot help estimate the hidden directions.}
\begin{align*}
T = \E[Y h_k(X)] = \qty(\tfrac{2}{\pi})^{k/2} \sqrt{k!} \sym(w^\star_1 \otimes \cdots \otimes w^\star_r).
\end{align*}
Thus, this behaves like a symmetric multi-spike tensor PCA problem. For this problem, note that because the $\{w^\star_i\}$ are mutually orthogonal, $T[I] = 0$ so taking any partial traces of this tensor will fail to produce a consistent estimator. For standard tensor PCA, this can be solved by tensor unfolding \cite{montanari2014statistical}. For example, \cite{zheng2015interpolatingconvexnonconvextensor} showed it was sufficient to unfold $T$ into a $d \times d^{k-1}$ matrix and compute the left singular vectors. Explicitly if $A = \mat_{(d,d^{k-1})}[T]$ denotes $T$ reshaped as a $d \times d^{k-1}$ matrix, then you can perform a spectral decomposition of $AA^\top \in \R^{d \times d}$ and the top eigenvectors will recover the hidden directions.

Returning to the multi-index setting, this would motivate the following estimator. Given $n$ samples $\{(x_i,y_i)\}_{i=1}^n$, we define the embedding $\phi$, the flattened tensor $\Phi$ and the matrix estimator $M_n$ by:
\begin{align*}
    \phi(x) := \mat_{(d,d^{k-1})}[\bs{h}_k(x)] \qc \Phi = \frac{1}{n} \sum_{i=1}^n \mathcal{T}(y_i) \phi(x_i) \in \R^{d \times d^{k-1}} \qc M_n := \Phi \Phi^\top \in \R^{d \times d}.
\end{align*}
We can then perform a spectral decomposition of $M_n$. Note that this is exactly equivalent to estimating the tensor $\frac{1}{n} \sum_{i=1}^n \mathcal{T}(y_i) \bs{h}_k(x_i)$, unfolding it into a $d \times d^{k-1}$ matrix, and computing its left singular vectors. However, this strategy cannot achieve the optimal threshold of $n \gtrsim d^{\frac{k}{2}}$ because the ``diagonal'' terms dominate the matrix and destroy the concentration. More specifically, we can expand $M_n$ as:
\begin{align*}
    M_n &= \frac{1}{n^2}\sum_{i,j} \mathcal{T}(y_i)\mathcal{T}(y_j)\phi(x_i) \phi(x_j)^\top \\
    &= \underbrace{\frac{1}{n^2} \sum_i \mathcal{T}(y_i)^2 \phi(x_i) \phi(x_i)^\top}_{(I)} + \underbrace{\frac{1}{n^2} \sum_{i \ne j} \mathcal{T}(y_i)\mathcal{T}(y_j)\phi(x_i) \phi(x_j)^\top}_{(II)}.
\end{align*}
For this estimator, one can show that the spikes in $\E M_n$ get lost in the bulk of the eigenvalues corresponding to $(I)$ unless $n \gtrsim d^{1 \vee \frac{2k-1}{3}}$, which falls short of the optimal threshold $d^{1 \vee \frac{k}{2}}$. To improve this estimator, we therefore isolate the second term $(II)$:
\begin{align*}
    U_n = \frac{1}{n(n-1)} \sum_{i \ne j} \mathcal{T}(y_i)\mathcal{T}(y_j)\phi(x_i) \phi(x_j)^\top.
\end{align*}
This is an order $2$ matrix $U$-statistic which only sums over the disjoint pairs $i \ne j$. As a result the expectation is preserved: $\E U_n = \E \Phi \E \Phi^\top$ and we prove that $U_n$ does concentrate to its expectation in operator norm with $n \gtrsim d^{k/2}$ samples (\Cref{thm:U_statistic_concentration}).

However, it is not true in general that a single label transformation $\mathcal{T}$ is enough for $\E U_n$ to span the entire space when there are multiple leaps %
i.e. it may be necessary to use a label transformation $\mathcal{T}_1$ to estimate the first direction $w_1^\star$ and $\mathcal{T}_2$ to estimate $w_2^\star$. Rather than computing the top eigenvector of this matrix $U$-statistic for each label transformation $\mathcal{T}_i$, we could simply add them together into $\mathcal{T}(Y) = [\mathcal{T}_1(Y),\ldots,\mathcal{T}_m(Y)] \in \R^m$ and
 form an aggregate matrix:
\begin{align*}
    U_n = \frac{1}{n(n-1)} \sum_{i \ne j} \phi(x_i) \phi(x_j)^\top \ev{\mathcal{T}(y_i),\mathcal{T}(y_j)}.
\end{align*}
Because $\mathcal{T}$ only enters the $U$-statistic through inner products, we can use the kernel trick and replace it with a general PSD kernel $K$:
\begin{align}
\label{eq:kernelised_ustat}
    U_n = \frac{1}{n(n-1)} \sum_{i \ne j} \phi(x_i) \phi(x_j)^\top K(y_i,y_j)~,
\end{align}
which reduces to the above setting by taking $K(y_i,y_j) = \ev{\mathcal{T}(y_i),\mathcal{T}(y_j)}$. However, by allowing more general kernels $K$ which correspond to ``infinite'' embedding vectors $\mathcal{T}$, this allows to automatically average over an ``infinite number'' of label transformations. We will show that this allows us to learn the subspace corresponding to the next leap with the optimal sample complexity of $n \gtrsim d^{\frac{k}{2}}$ \emph{without any knowledge of the multi-index model} $\P$. To begin, we prove the following lemma which controls the expectation of this matrix $U$-statistic:
\begin{restatable}{lemma}{UStatisticExpectation}\label{lem:U_statistic_expectation}
    If $K$ is integrally strictly positive definite,\footnote{We say that $K$ is integrally strictly positive definite if for all finite non-zero signed Borel measures $\mu$, $\int K(x,y) d\mu(x) d\mu(y) > 0$. We remark that many commonly used kernels, including the RBF and Laplacian kernels, satisfy this assumption \citep{sriperumbudur2010hilbert}.} there exist $c(\P,K), C(\P,K) > 0$ independent of $d$ such that if $S := (U^\star)^\top \mathrm{span}[\Lambda_k]$ denotes the subspace corresponding to the next leap then
    \begin{align*}
        c(\P,K) \Pi_S \preceq \E U_n \preceq C(\P,K) \Pi_S.
    \end{align*}
\end{restatable}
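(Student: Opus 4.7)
The plan is to compute $\E U_n$ in closed form via independence and a spectral expansion of $K$, reduce the expression from $\R^d$ to the index space $\R^r$ using the Hermite conditional-expectation identity $\E[\h_k(X) \mid Z] = (W^\star)^{\otimes k} \h_k(Z)$, and finally identify the range of the resulting $r\times r$ matrix with $\mathrm{span}[\Lambda_k]$, where the integrally strict positive definiteness of $K$ enters in a crucial density step.

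By independence of $(X_1,Y_1),(X_2,Y_2)$, we have $\E U_n = \E[\phi(X_1)\phi(X_2)^\top K(Y_1,Y_2)]$. Writing the Mercer-type expansion $K(y_1,y_2) = \sum_m \mu_m \psi_m(y_1)\psi_m(y_2)$, with $\mu_m>0$ and $\{\psi_m\}$ orthonormal in $L^2(\P_Y)$, gives
\[
\E U_n = \sum_m \mu_m A_m A_m^\top, \qquad A_m := \E[\psi_m(Y)\phi(X)] \in \R^{d\times d^{k-1}}.
\]
The conditional expectation identity follows from decomposing $X = W^\star Z + (W^\star)^\perp V$ with $V \sim \N(0,I_{d-r})$ independent of $Z$ and using the fact that integration against $V$ annihilates every Hermite mode touching the orthogonal directions. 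Combined with the multi-index property $Y\perp X\mid Z$, this yields $\E[\psi_m(Y)\h_k(X)] = (W^\star)^{\otimes k}\tau_m$ with $\tau_m := \E[\psi_m(Y)\zeta_k(Y)] \in (\R^r)^{\otimes k}$. Unfolding into matrix form and using that $W^\star$ has orthonormal columns (so $(W^{\star\otimes(k-1)})^\top W^{\star\otimes(k-1)} = I_{r^{k-1}}$), one obtains $A_m A_m^\top = W^\star M_m M_m^\top (W^\star)^\top$ with $M_m := \mat_{(r,r^{k-1})}[\tau_m]$, and hence
\[
\E U_n = W^\star B (W^\star)^\top, \qquad B := \sum_m \mu_m M_m M_m^\top \in \R^{r\times r}, \quad B \succeq 0.
\]

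Since $\Pi_S = W^\star \Pi_{\mathrm{span}[\Lambda_k]} (W^\star)^\top$, the lemma reduces to the sandwich $c\,\Pi_{\mathrm{span}[\Lambda_k]} \preceq B \preceq C\,\Pi_{\mathrm{span}[\Lambda_k]}$ for constants depending only on $\P$ and $K$. The upper bound is immediate because $B$ is a fixed $r\times r$ PSD matrix, with $r$ and $B$ both independent of $d$. The lower bound reduces to showing $\mathrm{range}(B) = \mathrm{span}[\Lambda_k]$, after which the smallest nonzero eigenvalue of $B$ provides the desired $c(\P,K)>0$.

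A vector $v \in \R^r$ annihilates $\mathrm{range}(B)$ iff $v^\top M_m = 0$ for every $m$, iff the scalar function $y\mapsto v^\top \mat_{(r,r^{k-1})}[\zeta_k(y)] e_J$ is $L^2(\P_Y)$-orthogonal to every $\psi_m$, for every basis index $J$ of $\R^{r^{k-1}}$. The main technical point --- and the only place where the kernel hypothesis is used --- is that integral strict positive definiteness of $K$ forces $\overline{\mathrm{span}}\{\psi_m\}$ to equal $L^2(\P_Y)$ (equivalently, the integral operator $T_K$ is injective on $L^2(\P_Y)$; see \citep{sriperumbudur2010hilbert}). This step requires care since $\P_Y$ is arbitrary and can be discrete, continuous, or mixed, so one cannot appeal to the classical compact-domain Mercer theorem. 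Once density is established, the orthogonality condition forces $v^\top \mat_{(r,r^{k-1})}[\zeta_k(y)] \equiv 0$ almost surely, which by $\Lambda_k = \E[\zeta_k \otimes \zeta_k]$ is precisely equivalent to $v \perp \mathrm{span}[\Lambda_k]$. The converse inclusion $\mathrm{range}(B) \subseteq \mathrm{span}[\Lambda_k]$ is immediate from the factorization $B = \sum_m \mu_m M_m M_m^\top$. This identifies $\mathrm{range}(B) = \mathrm{span}[\Lambda_k]$ and completes the proof; the remainder is routine linear algebra.
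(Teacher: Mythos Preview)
Your proof is correct and arrives at the same conclusion as the paper's, but by a slightly different route. The paper works directly with the kernel integral operator $\mathcal{K}f(y) = \E_{Y'}[K(Y',y)f(Y')]$: it computes the quadratic form $v^\top \E U_n v = \langle \zeta_k(\cdot)[u],\, \mathcal{K}\, \zeta_k(\cdot)[u]\rangle_{L^2(\P_Y)}$ with $u = U^\star v$, uses $\|\mathcal{K}\|_{\mathrm{op}}\le 1$ for the upper bound, and for the lower bound invokes strict positivity of $\mathcal{K}$ (which is exactly what integral strict positive definiteness gives, via $\mu = f\cdot \P_Y$) together with compactness of the unit sphere in the finite-dimensional subspace $S$. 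You instead pass through a Mercer-type spectral expansion $K=\sum_m \mu_m \psi_m\otimes\psi_m$, obtain the explicit factorization $\E U_n = W^\star B (W^\star)^\top$ with $B=\sum_m \mu_m M_m M_m^\top$, and identify $\mathrm{range}(B)$ via the density of $\{\psi_m\}$.

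The two arguments are essentially equivalent: your condition ``$g_v := v^\top \mat_{(r,r^{k-1})}[\zeta_k(\cdot)]$ is $L^2$-orthogonal to every $\psi_m$'' is precisely $\langle g_v, \mathcal{K} g_v\rangle = 0$, and density of $\{\psi_m\}$ is equivalent to injectivity of $\mathcal{K}$. The paper's version is a bit leaner because it sidesteps having to justify the Mercer expansion on an arbitrary $\P_Y$ (which, as you correctly flag, requires $T_K$ to be Hilbert--Schmidt, e.g.\ via boundedness of $K$). Your version has the minor upside of making the structure $\E U_n = W^\star B (W^\star)^\top$ explicit. One small point: the ``converse inclusion $\mathrm{range}(B)\subseteq \mathrm{span}[\Lambda_k]$'' is not quite immediate from the factorization alone---it also uses the equivalence $v\perp\mathrm{span}[\Lambda_k]\iff v^\top\mat[\zeta_k(Y)]=0$ a.e.\ that you state just before (which itself deserves one line: contract the defining relation for $\Lambda_k$ with $v$ twice and take the diagonal).
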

We note that commonly used kernels like the RBF kernel automatically satisfy the assumption in \Cref{lem:U_statistic_expectation}. This implies that if we could estimate the span of $\E U_n$, we could recover the next leap. To estimate the span, we use the following theorem which bounds $U_n - \E U_n$ in operator norm:
\begin{restatable}[Concentration of U-Statistic]{theorem}{UStatisticConcentration}\label{thm:U_statistic_concentration}
	Let $K$ be a PSD kernel with $K(y,y) \le 1$ for all $y$. Then if $n \gtrsim_k d^{k/2}/\epsilon + d r^k/\epsilon^2$, we have that $\norm{U_n - \E U_n}_{op} \le \epsilon$ with probability at least $1-\exp(-d^c)$ for an absolute constant $c > 0$.
\end{restatable}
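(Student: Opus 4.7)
The plan is to treat $U_n$ as a matrix-valued order-two $U$-statistic and perform a Hoeffding-type decomposition, separately bounding the linear (Hájek) projection and the degenerate quadratic remainder. Since $K$ is PSD with $K(y,y)\le 1$, factor $K(y,y')=\langle \psi(y),\psi(y')\rangle_{\mathcal H}$ with $\|\psi(y)\|_{\mathcal H}\le 1$, and introduce $M_i:=\phi(x_i)\otimes \psi(y_i)$, viewed as a $d\times D$ matrix with $D=d^{k-1}\dim\mathcal H$. A direct check gives $M_iM_j^\top=\phi(x_i)\phi(x_j)^\top K(y_i,y_j)$, so with $S:=\sum_i M_i$ and $B:=\sum_i M_iM_i^\top$,
\begin{equation*}
n(n-1)\,U_n \;=\; SS^\top - B, \qquad \E U_n = (\E M)(\E M)^\top,
\end{equation*}
and by \Cref{lem:U_statistic_expectation}, $\|\E M\|_{op}\le \sqrt{C(\P,K)}$, with rank at most $r$ and column space in the current-leap subspace.

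Writing $\tilde S:=S-n\E M$, an elementary expansion yields
\begin{equation*}
n(n-1)(U_n-\E U_n) \;=\; n(\E M)\tilde S^\top + n\tilde S(\E M)^\top + \bigl[\tilde S\tilde S^\top - n\Cov(M)\bigr] - \bigl[B - n\E[MM^\top]\bigr].
\end{equation*}
The first two linear (``signal $\times$ noise'') terms will produce the $dr^k/\epsilon^2$ condition: since $\E M$ is low rank with row space $T\subset\R^D$ of dimension $\le r$, one has $\|(\E M)\tilde S^\top\|_{op}\le \|\E M\|_{op}\,\|\Pi_T\tilde S^\top\|_{op}$, and the iid sum $\Pi_T\tilde S^\top$ of effectively $d\times r$ centered random matrices is controlled by matrix Bernstein with variance parameter bounded by the signal-restricted second moment $\Pi_T\E[M^\top M]\Pi_T$, which by the leap structure on $\zeta_k$ has operator norm $O(r^{k-1})$. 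The unboundedness of $\|M_i\|_{op}$ (a degree-$k$ polynomial in a Gaussian) is absorbed by Gaussian hypercontractivity, giving sub-Weibull Orlicz control that only costs polylogarithmic factors. Dividing by $n$ gives a contribution $\lesssim\sqrt{dr^k\log d/n}$ to the operator norm.

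The \emph{main obstacle} is the degenerate remainder: a direct matrix Bernstein on $\tilde S\tilde S^\top-n\Cov(M)$ and $B-n\E[MM^\top]$ gives only $O(d^{k-1}/n)$, suboptimal by $d^{(k-2)/2}$ for $k\ge 3$. To recover the tensor-PCA-optimal rate $d^{k/2}/n$, we apply a moment method: for $p\asymp \log d$, bound
\begin{equation*}
\E\|\tilde S\tilde S^\top - n\Cov(M)\|_{op}^{2p}\;\le\;\E\tr\bigl[(\tilde S\tilde S^\top - n\Cov(M))^{2p}\bigr]
\end{equation*}
and expand the trace combinatorially via the Wick/Isserlis formula on the Hermite-tensor entries. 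The Hoeffding degeneracy eliminates ``diagonal'' (self-pairing) contractions, and each surviving matched contraction contributes $O(d^k)$, reproducing the rectangular-unfolding bound for symmetric Tensor PCA of \cite{zheng2015interpolatingconvexnonconvextensor,montanari2014statistical}. Markov's inequality at $p\asymp d^c$ then converts this into the $\exp(-d^c)$ tail bound at rate $d^{k/2}/n$; an identical moment argument controls $B-n\E[MM^\top]$. Combining the linear and degenerate contributions yields the stated sample complexity $n\gtrsim_k d^{k/2}/\epsilon + dr^k/\epsilon^2$.
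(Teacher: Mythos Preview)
Your Hoeffding-decomposition route is a reasonable alternative strategy, and the linear (H\'ajek) piece is handled plausibly. The gap is entirely in the degenerate remainder, which carries all of the difficulty. First, there is an internal inconsistency: you set up the trace bound ``for $p\asymp\log d$'' but then invoke Markov ``at $p\asymp d^c$'' to reach the $\exp(-d^c)$ tail; the stated probability genuinely requires moment control at $p$ polynomial in $d$, and at that scale the combinatorial counting is far from routine. Second, and more substantively, the Wick/Isserlis expansion you cite applies to Gaussian entries, whereas the entries of $\phi(x_i)$ are degree-$k$ Hermite polynomials. After unpacking, each trace moment $\E\tr[(\cdot)^{2p}]$ becomes a sum over matchings of $\Theta(pk)$ Gaussian legs, grouped into Hermite blocks of size $k$, with the sample-index constraints of a degenerate order-two $U$-statistic layered on top. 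Your assertion that ``each surviving matched contraction contributes $O(d^k)$'' is precisely the content of the theorem; the analogy to the pure-Gaussian noise tensors of \cite{zheng2015interpolatingconvexnonconvextensor,montanari2014statistical} does not transfer directly, because the Hermite blocks create many additional contraction patterns with nontrivial dimension counts. Finally, as written you bound $[\tilde S\tilde S^\top-n\Cov(M)]$ and $[B-n\E MM^\top]$ separately, but each of these individually contains the large diagonal fluctuation $\sum_i(M_iM_i^\top-\E MM^\top)$; only their difference is small, so a separate moment bound on each will not close.

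The paper sidesteps all of this. It decouples via \cite{delaPena1999}, writes the decoupled statistic as $\frac{1}{n}\sum_i\phi(x_i)V_i'(y_i)^\top$ with $V_i'$ a function of the replica sample only, and then applies a corollary of the Brailovskaya--van~Handel matrix universality theorem \cite{brailovskaya2024universality} \emph{twice}: once to concentrate the outer sum conditionally on the replicas (using only the weak variance $\sigma_\ast(Z_i)$ and a crude $p$-norm bound on $\|Z_i\|_{op}$, both obtained from Gaussian hypercontractivity), and once more to bound $\|V_i'(Y)\|_{op}$. The two stages combine multiplicatively to give $\sqrt{d^{k-1}/n}\cdot\sqrt{d/n}=d^{k/2}/n$, while the $r^{k/2}\sqrt{d/n}$ term comes from the deterministic bound $\|\E'V_i'(Y)\|_F\le r^{k/2}$. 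Universality is exactly what absorbs the heavy Hermite-tensor tails without any explicit Wick bookkeeping. The cases $k\le 2$ are treated separately by a direct sub-exponential/Bernstein argument to avoid spurious logarithms, which your sketch does not address.
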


As a corollary, by Davis-Kahan we can recover the subspace up to error $\epsilon$ with $n \gtrsim d^{k/2}/\epsilon + d/\epsilon^2$ samples where the hidden constant is independent of $d$ and depends only on the multi-index model $\P$:

\begin{corollary}[Subspace Recovery]\label{corollary:one_step_guarantee}
    For any multi-index model $\P$, there exists a constant $C(\P,K)$ independent of $d$ such that if $n \ge C(\P,K)\qty[\frac{d^{k/2}}{\epsilon} + \frac{d}{\epsilon^2}]$ then the output $S \subset \R^d$ of \Cref{alg:u-statistic-one-step} satisfies $\mathsf{d}(S,(U^\star)^T\spn[\Lambda_k]) \le \epsilon$ with probability at least $1-\exp(-d^c)$ for an absolute constant $c > 0$.
\end{corollary}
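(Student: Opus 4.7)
The plan is to combine the expectation control from Lemma \ref{lem:U_statistic_expectation} with the operator-norm concentration from Theorem \ref{thm:U_statistic_concentration} and then invoke Davis--Kahan. Throughout, write $S := (U^\star)^\top \spn[\Lambda_k] \subset \R^d$ and $r_S := \dim(S) \le r$, and assume Algorithm \ref{alg:u-statistic-one-step} returns the span of the top $r_S$ eigenvectors of $U_n$ (or more generally the span of all eigenvectors above a suitable threshold).

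First, I would use Lemma \ref{lem:U_statistic_expectation} to extract structure of $\E U_n$. The two-sided sandwich $c(\P,K)\,\Pi_S \preceq \E U_n \preceq C(\P,K)\,\Pi_S$ immediately tells me that $\E U_n$ is supported on $S$ (kernel equals $S^\perp$), its top $r_S$ eigenvalues lie in $[c(\P,K),\, C(\P,K)]$, and all other eigenvalues vanish. In particular there is a hard spectral gap of size at least $c(\P,K)$ between the $r_S$-th and $(r_S{+}1)$-th eigenvalues of $\E U_n$, and the top-$r_S$ eigenspace of $\E U_n$ is exactly $S$.

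Next, I would apply Theorem \ref{thm:U_statistic_concentration} with target accuracy $\epsilon' := c(\P,K)\,\epsilon/4$, which by the theorem requires
\[
n \;\gtrsim_k\; \frac{d^{k/2}}{\epsilon'} + \frac{d\,r^k}{(\epsilon')^{2}}
\;=\; \frac{4\,d^{k/2}}{c(\P,K)\,\epsilon} + \frac{16\,d\,r^k}{c(\P,K)^{2}\,\epsilon^{2}}.
\]
Since $r = O_d(1)$ and $K$ is fixed, the factor $r^k$ and the powers of $c(\P,K)$ can be absorbed into a single constant $C(\P,K)$, giving the advertised bound $n \ge C(\P,K)\,[d^{k/2}/\epsilon + d/\epsilon^2]$. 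On this event, which holds with probability at least $1 - \exp(-d^c)$, we have $\|U_n - \E U_n\|_{op} \le \epsilon'$.

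Finally, I would conclude via the Davis--Kahan $\sin\Theta$ theorem. The spectral gap of $\E U_n$ between its top $r_S$ eigenvalues and the remaining zero eigenvalues is at least $c(\P,K)$, and $\epsilon' \le c(\P,K)/4$ is much smaller than this gap, so Weyl's inequality places the top $r_S$ eigenvalues of $U_n$ in $[3c(\P,K)/4,\, C(\P,K)+\epsilon']$ and all remaining eigenvalues in $[-\epsilon',\epsilon']$, preserving the gap. Davis--Kahan then yields
\[
\mathsf{d}(S,\hat S) \;=\; \|\Pi_S - \Pi_{\hat S}\|_{op} \;\le\; \frac{\|U_n - \E U_n\|_{op}}{\mathrm{gap}} \;\le\; \frac{\epsilon'}{c(\P,K)/2} \;\le\; \epsilon,
\]
which is the claimed bound. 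The only conceptual point to be careful about is the interplay between the concentration accuracy $\epsilon'$ and the intrinsic gap $c(\P,K)$ of the population matrix; everything else is a mechanical bookkeeping of constants depending on $\P$ and $K$ but not on $d$.
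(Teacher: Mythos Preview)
Your proposal is correct and follows exactly the approach the paper indicates: combine the spectral sandwich from Lemma \ref{lem:U_statistic_expectation} with the operator-norm concentration of Theorem \ref{thm:U_statistic_concentration} and conclude via Davis--Kahan. The paper states this corollary as an immediate consequence (``by Davis-Kahan'') without spelling out the constants, so your write-up is simply a more detailed version of the same argument.
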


\begin{algorithm}[t]
{\small 
\SetAlgoLined
\KwIn{dataset $\mathcal{D} = \{(x_i,y_i)\}_{i=1}^n$, moment $k$, recovery dimension $s$, PSD Kernel $K$}
$\phi_i \gets \mat_{d \times d^{k-1}}[\bs{h}_k(x_i)]$ for $i=1,\ldots,n$ \\
$U_n \gets \frac{1}{n(n-1)} \sum_{i \ne j} \phi_i \phi_j^\top K(y_i,y_j)$ \\
$[S,V] \gets \mathtt{eig}(U_n)$ \\
\KwOut{$\mathrm{span}[v_1,\ldots,v_s]$}
}
\caption{A Single Leap}
\label{alg:u-statistic-one-step}
\end{algorithm}

\subsection{Iterating over Leaps}

Once we have recovered an partial subspace $S$, which we hope is approximately contained in $\spn[(U^\star)^\top]$, we need to continue this process to take the next leap. We can consider the augmented label $\bar{Y}_S = (Y,\Pi_S x)$. Then $X, \bar{Y}$ again form a multi-index model with hidden dimension at most $r$ so we can repeat our matrix U-statistic estimator from the previous section. Note that the kernel $K$ now maps $\R^{|S|+1} \times \R^{|S|+1} \to \R$. We will denote the resulting kernel by $U_n^{(S)}$:
\begin{align*}
    U_n^{(S)} := \frac{1}{n(n-1)} \sum_{i=1}^n \phi_i \phi_j^\top K([y_i,\Pi_S x_i],[y_j,\Pi_S x_j]).
\end{align*}

We can directly apply \Cref{corollary:one_step_guarantee} to show that for any subspace $S$, we can recover the span of $(U^\star)^T \Lambda_k(S)$ up to error $\epsilon$ with $n \gtrsim d^{k/2}/\epsilon + d/\epsilon^2$ samples. We will now control the accumulation of errors to show that we can recover the full multi-index model with $n \gtrsim C(\P,K) d^{k^\star/2}/\epsilon$ samples:
\begin{restatable}{lemma}{UStatisticExpectationLipschitz}\label{lem:U_statistic_expectation_lipschitz}
    If the kernel $K$ is $L$-Lipschitz, then there exists a constant $C(\P,K)$ such that the map $S \to \E U_n^{(S)}$ is $C(\P,K) L$-Lipschitz in operator norm.
\end{restatable}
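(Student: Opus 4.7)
The plan is to bound $\|\E U_n^{(S)} - \E U_n^{(S')}\|_\op$ by conditioning on the enlarged augmented label $\bar Y^{S \vee S'} := (Y, \Pi_{S\vee S'} X)$, which reveals a dimension-independent structure: both $K_S$ and $K_{S'}$ are measurable with respect to $(\bar Y_1^{S\vee S'}, \bar Y_2^{S\vee S'})$, and the conditional expectation $\Psi(\bar y) := \E[\phi(X) \mid \bar Y^{S \vee S'} = \bar y]$ is effectively low-dimensional. Using that $(X_1, X_2)$ are conditionally independent given their augmented labels, I would first write
\begin{equation*}
  \E U_n^{(S)} - \E U_n^{(S')} = \E\bigl[(K_S - K_{S'})\, \Psi(\bar Y_1^{S\vee S'})\, \Psi(\bar Y_2^{S\vee S'})^\top\bigr].
\end{equation*}

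The crux is a $d$-independent bound on $\Psi$. Set $A := U^\star + (S \vee S')$, of dimension at most $3r$, and decompose $X = X_A \oplus X_{A^\perp}$. The definition of $\bs h_k$ together with the factorization $\gamma_d = \gamma_{\dim A}\cdot \gamma_{\dim A^\perp}$ yields the additive identity
\begin{equation*}
  \bs h_k(X) = \sum_{k_1 + k_2 = k} \sqrt{\binom{k}{k_1}}\, \sym\bigl(\bs h_{k_1}(X_A) \otimes \bs h_{k_2}(X_{A^\perp})\bigr).
\end{equation*}
Since $\bar Y^{S \vee S'}$ depends on $X$ only through $X_A$ (both $Y$ via $(W^\star)^\top X \in A$ and $\Pi_{S \vee S'} X \in A$ are $X_A$-measurable), $X_{A^\perp}$ is independent of $\bar Y^{S \vee S'}$ and remains standard Gaussian conditionally, so all terms with $k_2 \ge 1$ vanish in expectation. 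Hence $\E[\bs h_k(X) \mid \bar Y^{S \vee S'}] = \E[\bs h_k(X_A) \mid \bar Y^{S\vee S'}] \in A^{\otimes k}$, and by Jensen, $\E\bigl[\|\Psi(\bar Y^{S\vee S'})\|_F^2\bigr] \le \E\bigl[\|\bs h_k(X_A)\|_F^2\bigr] =: C_0(k, r)$, independent of $d$.

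Next, the Lipschitz hypothesis gives
\begin{equation*}
  |K_S - K_{S'}| \le L \bigl(\|(\Pi_S - \Pi_{S'}) X_1\| + \|(\Pi_S - \Pi_{S'}) X_2\|\bigr) \le L\, \mathsf{d}(S, S')\bigl(\|\Pi_{S \vee S'} X_1\| + \|\Pi_{S \vee S'} X_2\|\bigr),
\end{equation*}
since the range of $\Pi_S - \Pi_{S'}$ is contained in $S \vee S'$. With $\dim(S \vee S') \le 2r$, squaring and taking expectation yields $\E[(K_S - K_{S'})^2] \le 8 L^2 r\, \mathsf{d}(S, S')^2$. Combining via Cauchy--Schwarz, together with the pointwise bound $|u^\top \Psi(\bar Y_1) \Psi(\bar Y_2)^\top v| \le \|\Psi(\bar Y_1)\|_F \|\Psi(\bar Y_2)\|_F$ for unit $u,v$ and the independence of $\bar Y_1$ and $\bar Y_2$, gives $|u^\top(\E U_n^{(S)} - \E U_n^{(S')}) v| \le 2\sqrt{2r}\, C_0(k, r)\, L\, \mathsf{d}(S, S')$, and taking the supremum over $u,v$ yields the claim.

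The main obstacle is the second step: showing that $\E[\phi(X) \mid \bar Y^{S \vee S'}]$ is effectively supported on the constant-dimensional subspace $A^{\otimes k}$. This cancellation is what prevents the naive Cauchy--Schwarz estimate---where $\E[\|\phi(X)\|_F^2]$ alone would be of order $d^k$---from producing a $d$-dependent Lipschitz constant, and it is precisely the same dimension-reduction mechanism underlying the proof of \Cref{lem:U_statistic_expectation}.
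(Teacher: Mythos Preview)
Your proof is correct and follows essentially the same strategy as the paper: condition on the enlarged augmented label $(Y,\Pi_{S\cup S'}X)$ so that the kernel difference factors out, reduce $\E[\phi(X)\mid \bar Y^{S\cup S'}]$ to a tensor supported on a subspace of dimension $O(r)$, and combine with the Lipschitz bound on $K$. The one noteworthy difference is that you exploit the independence of $\bar Y_1,\bar Y_2$ together with Cauchy--Schwarz, needing only the $L^2$ bound $\E\|\Psi\|_F^2\le C_0(k,r)$, whereas the paper applies H\"older with exponents $(4,4,2)$ and invokes \Cref{lem:zeta_p_bound} for the $L^4$ bound on $\|\zeta_k\|_F$; your route is thus marginally more elementary and also makes explicit (via the projection onto $S\vee S'$) why the Lipschitz estimate on $K_S-K_{S'}$ is dimension-free, a point the paper's write-up leaves implicit.
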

A common example of a Lipschitz kernel is the RBF kernel which is $1/\sigma$-Lipschitz. Therefore if we run this estimator starting with the wrong subspace $\hat S$ with $d(S,\hat S) \le \epsilon$, then the span of our estimator can only change by $C(\P,K) L \epsilon$. By iterating this argument, \Cref{thm:U_statistic_concentration} implies that \Cref{alg:u-statistic-multi-step} will succeed in recovering $\spn[{U^\star}^\top]$ up to error $\epsilon$ given $n \gtrsim d^{k^\star/2}/\epsilon + d/\epsilon^2$ samples:
\begin{restatable}[Main Result]{theorem}{MultiStepGuarantee}\label{thm:multi_step_guarantee}
    For any multi-index model $\P$, there exists a constant $C(\P,K)$ independent of $d$ such that if $n \ge C(\P,K)\qty[\frac{d^{k^\star/2}}{\epsilon} + \frac{d}{\epsilon^2}]$ then the output $S \subset \R^d$ of \Cref{alg:u-statistic-multi-step} satisfies $\mathsf{d}(S,\mathrm{span}\qty[(U^\star)^\top]) \le \epsilon$ with probability at least $1-\exp(-d^c)$ for some $c = c(\k) > 0$.
\end{restatable}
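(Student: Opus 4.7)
The plan is to induct along the leap decomposition $\emptyset = S_0 \subsetneq S_1 \subsetneq \dots \subsetneq S_L = \R^r$ introduced in Section~\ref{sec:leapgen}, showing that after the $i$-th call of the single-leap routine (\Cref{alg:u-statistic-one-step}) with augmented labels based on the current estimate $\hat{S}_{i-1} \subset \R^d$, the output $\hat{S}_i$ satisfies $\mathsf{d}(\hat{S}_i, (U^\star)^\top S_i) \le \epsilon_i$ for a sequence $\{\epsilon_i\}$ that we control explicitly. Since $L \le r = O_d(1)$, summing the stepwise errors will produce the target bound with only a dimension-independent blow-up absorbed into $C(\P,K)$. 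We will work on the high-probability event on which the concentration bound of \Cref{thm:U_statistic_concentration} holds at every step, taking a union bound over the (constant number of) iterations.

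The base case is immediate from \Cref{corollary:one_step_guarantee}: with $\hat{S}_0 = \emptyset$, the estimator $U_n^{(\emptyset)}$ coincides with the matrix U-statistic of Section~\ref{sec:upper}, and \Cref{lem:U_statistic_expectation} identifies its population span as $(U^\star)^\top S_1$. For the inductive step, assume $\mathsf{d}(\hat{S}_{i-1}, (U^\star)^\top S_{i-1}) \le \epsilon_{i-1}$. The key identity to prove is a triangle inequality in operator norm,
\begin{equation*}
    \norm{U_n^{(\hat{S}_{i-1})} - \E U_n^{(S_{i-1})}}_{\op}
    \;\le\; \underbrace{\norm{U_n^{(\hat{S}_{i-1})} - \E U_n^{(\hat{S}_{i-1})}}_{\op}}_{\text{concentration: \Cref{thm:U_statistic_concentration}}}
    \;+\; \underbrace{\norm{\E U_n^{(\hat{S}_{i-1})} - \E U_n^{(S_{i-1})}}_{\op}}_{\text{Lipschitz: \Cref{lem:U_statistic_expectation_lipschitz}}},
\end{equation*}
where the first term is bounded by a target tolerance $\delta$ provided $n \gtrsim d^{k_i/2}/\delta + d/\delta^2$, and the second term is bounded by $C(\P,K)L\,\epsilon_{i-1}$ since the kernel is assumed Lipschitz. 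Applying Davis--Kahan to compare the top-$s_i$ eigenspace of $U_n^{(\hat{S}_{i-1})}$ to the span of $\E U_n^{(S_{i-1})}$ (which by \Cref{lem:U_statistic_expectation} is exactly $(U^\star)^\top(S_i \setminus S_{i-1})$ with spectral gap at least $c(\P,K)$), we obtain the recursion
\begin{equation*}
    \epsilon_i \;\le\; \frac{1}{c(\P,K)}\bigl(\delta + C(\P,K) L\,\epsilon_{i-1}\bigr).
\end{equation*}
Unrolling the recursion over $L$ steps gives $\epsilon_L \le A^L \delta$ for a constant $A = A(\P,K)$, and after appending the newly identified directions $\hat{S}_i = \hat{S}_{i-1} \oplus \widehat{(U^\star)^\top(S_i \setminus S_{i-1})}$ we conclude $\mathsf{d}(\hat{S}_L, \mathrm{span}[(U^\star)^\top]) \le \epsilon$ upon choosing $\delta = \epsilon / A^L$. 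Plugging this $\delta$ into the sample complexity condition and using $k_i \le k^\star$ at every step yields the claimed requirement $n \ge C(\P,K)\bigl[d^{k^\star/2}/\epsilon + d/\epsilon^2\bigr]$.

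The main obstacle is controlling the error accumulation across the $L$ leaps: the Lipschitz lemma only gives operator-norm stability of the expected U-statistic, but Davis--Kahan then divides by the spectral gap, so naive bookkeeping produces the exponential factor $A^L$. This is harmless because $L \le r$ is bounded independently of $d$, but it must be explicitly verified that the constants $c(\P,K)$ in \Cref{lem:U_statistic_expectation} remain bounded away from zero at every intermediate subspace encountered by the algorithm --- not merely at the exact leap subspaces $S_i$ themselves, but also on a neighborhood of them of radius $\epsilon_{i-1}$. This will follow from a continuity argument for $S \mapsto \E U_n^{(S)}$ combined with the fact that the leap decomposition is defined by a strict positivity condition ($\lambda_{k_i}^2(S_{i-1}) > 0$) which is preserved under small perturbations of the conditioning subspace, again via \Cref{lem:U_statistic_expectation_lipschitz}. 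Finally, a union bound over the at most $r$ applications of \Cref{thm:U_statistic_concentration} yields the claimed failure probability $\exp(-d^c)$ for a constant $c=c(\k)>0$.
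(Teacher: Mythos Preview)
Your proposal is correct and follows essentially the same approach as the paper: induction along the leap decomposition, with the inductive step combining the concentration bound (\Cref{thm:U_statistic_concentration}), the Lipschitz stability of the population U-statistic (\Cref{lem:U_statistic_expectation_lipschitz}), and Davis--Kahan against the gap furnished by \Cref{lem:U_statistic_expectation}. One clarification: the concern in your final paragraph that $c(\P,K)$ must remain bounded away from zero on a neighborhood of the true leap subspaces is unnecessary --- since you (correctly) apply Davis--Kahan comparing $U_n^{(\hat{S}_{i-1})}$ to $\E U_n^{(S_{i-1})}$, the relevant spectral gap is that of the \emph{latter}, which is fixed at $c(\P,K)$ by \Cref{lem:U_statistic_expectation} evaluated at the exact subspace; the perturbation is entirely absorbed into the numerator of the Davis--Kahan bound via the Lipschitz lemma.
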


\begin{remark}
    Our main upper bound, \Cref{alg:u-statistic-multi-step}, requires knowledge of the sizes of each leap $\{k_i\}$ and the dimension of each leap $\{s_i\}$. However, these restrictions can be easily lifted, in the spirit of \cite{dudeja2018learning}. Using the guarantee in \Cref{thm:U_statistic_concentration}, we could start with $k=1$ for each leap and increase $k$ until we detect outlier eigenvalues outside of the $\sqrt{d^{k/2}/n}$-bulk. However, for simplicity we have written the algorithm assuming knowledge of both $\{k_i\}$ and $\{s_i\}$.
\end{remark}

 Our \Cref{alg:u-statistic-multi-step} is thus a streamlined version of a subspace conditioned spectral method. While it shares similarities with recent methods in the literature \cite{chen2020learning,chen2020learningdeeprelunetworks,diakonikolas2025robustlearningmultiindexmodels, diakonikolas2025algorithmssqlowerbounds,troiani2024fundamentallimitsweaklearnability}, it crucially relies on a U-statistic in order to reach the optimal sample complexity of $d^{\k/2}$. An additional feature of our algorithm  -- that to our knowledge is novel in the literature -- is the use of a generic kernel over the already discovered labels, which eliminates the need to perform successive non-parametric regressions during the subspace recovery. 
 At the technical level, the concentration of the U-statistic is a priori challenging due to the heavy tails of the associated Hermite tensors; this is addressed using Gaussian universality results from Brailovskaya and van Handel \cite{brailovskaya2024universality}, with a dedicated analysis in the setting where $\k \leq 2$ to avoid spurious log-factors. %

\begin{algorithm}[t]
{\small 
\SetAlgoLined
\KwIn{dataset $\mathcal{D} = \{(x_i,y_i)\}_{i=1}^n$, moments $\{k_i\}_{i=1}^m$, subspace dimensions $\{s_i\}_{i=1}^m$, Kernels $\{K_i\}_{i=1}^m$}
$S \gets \emptyset$ \\
\For{$i=1,...,m$}{
    Draw $\lfloor n/m \rfloor$ fresh samples $\mathcal{D}_i$ from $\mathcal{D}$ \\
    $y \gets [y, \Pi_S x] \in \R^{|S|+1}$ for $(x,y) \in \mathcal{D}_i$ \\
    $S \gets S \cup $ \Cref{alg:u-statistic-one-step}($\mathcal{D}_i$,$k_i$,$s_i$,$K_i$) \\
}
\KwOut{$S$}
}
\caption{Iterating over Leaps}
\label{alg:u-statistic-multi-step}
\end{algorithm}

\subsection{Towards a Fine Grained Analysis}
\label{sec:fine-grain}
Our analysis of \Cref{alg:u-statistic-multi-step} shows that $n \gtrsim C(\P,K) d^{1 \vee k^\star/2}$ samples suffices to learn a Gaussian multi-index model $\P$ with generative leap $k^\star$. While our dependence on $d$ is tight, as quantified by our low degree lower bound \Cref{thm:lowerbound}, our analysis is not fine-grained enough to produce a tight constant $C(\P,K)$. When the multi-index model is known to the learner, \cite{troiani2024fundamentallimitsweaklearnability,kovavcevic2025spectral} produce precise predictions for the algorithmic weak-learnability threshold in the proportional regime $n/d \to \alpha$ when $k^\star \le 2$. However, in general this constant can scale arbitrarily badly with $r$. As a concrete example, consider the multi-index model defined by:
\begin{align*}
    Y = \sum_{i=1}^{r-1} Z_i^2 + \mathrm{sign}\qty(\prod_{i=1}^r Z_i) \qq{where} Z_i := X \cdot u_i^\star.
\end{align*}
The first leap will recover $\mathrm{span}(u_1^\star,\ldots,u_{r-1}^\star)$ with $n \gtrsim rd$ samples as $\sum_{i=1}^{r-1} Z_i^2 = \norm{Z_{<r}}^2$ has generative exponent $2$. However, to perform the correct label transformation for the next leap, it is necessary to identify the correct rotation for $u_1^\star,\ldots,u_r^\star$, which is a hard algorithmic problem requiring $\exp(r)$ samples. We therefore conjecture that the optimal sample complexity for this problem scales at least $d \exp(r)$, meaning that $C(\P)$ can be exponentially large.

\section{Case Studies}
\label{sec:examples}

We conclude this article by computing the generative leap exponent of representative multi-index models. For some of these models our upper and lower bounds recover known results in the literature, but some are new. For simplicity, we focus here on noiseless models where $Y|Z = \sigma(Z)$ for a given link function $\sigma:\R^r \to \R$. Proofs for this section can be found in Appendix \ref{sec:proofs_examples}.

\subsection{Polynomial and Threshold Functions}

We start by computing the generative leap for `classic' multi-index classes given by parities, intersection of half-spaces and polynomials. 
\begin{proposition}[Generative Leaps for representative models] We have:
\label{prop:gen_examples}
    \begin{enumerate}[label=(\roman*)]
        \item $r$-Gaussian Parity has $\k = l^\star = r$. %
        \item Staircase Parity functions have $\k \leq l^\star = 1$, %
        \item Intersection of halfspaces have $\k \leq 2$, %
        \item Polynomials have $\k \leq 2$. %
    \end{enumerate}
\end{proposition}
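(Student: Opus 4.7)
I address the four claims in turn, relying on Proposition~\ref{prop:gen_leap_label_transformation} and the moment-based reformulation: $k(S) \leq 2$ iff either $\E[Z_{S^\perp} \mid Y, Z_S]$ or $\E[Z_{S^\perp} Z_{S^\perp}^\top \mid Y, Z_S] - I$ is not almost surely zero.

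\emph{Part (i).} Since $Y = \sgn(Z_1 \cdots Z_r) \in \{\pm 1\}$ is binary, any square-integrable label transformation is affine in $Y$, so the first nontrivial $\zeta_{k,\emptyset}$ is controlled by $\E[Y \bs{h}_k(Z)]$. The factorization $\sgn(Z_1 \cdots Z_r) = \prod_{j=1}^r \sgn(Z_j)$ and the tensor-product structure of Hermite polynomials over the product Gaussian give
\[ \E[Y \bs{h}_k(Z)]_{(\alpha_1,\ldots,\alpha_r)} = \prod_{j=1}^r \E[\sgn(Z_j)\, h_{\alpha_j}(Z_j)]. \]
Since $\E[\sgn(Z_j) h_m(Z_j)] = 0$ for every even $m \geq 0$, this vanishes unless every $\alpha_j$ is odd (hence positive), forcing $k \geq r$. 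Taking $\alpha = (1,\ldots,1)$ at $k = r$ yields a nonzero tensor whose matricization has column span equal to $\R^r$, so the first (and only) leap recovers the entire index subspace, giving $\k = L = r$. The analogous Hermite computation shows $l^\star = r$.

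\emph{Part (ii).} Since $\k \leq l^\star$ by Proposition~\ref{prop:gen_to_inf_leap}, it suffices to prove $l^\star = 1$. For a staircase parity $Y = \sum_{i=1}^r Z_1 Z_2 \cdots Z_i$ one computes $\E[Y Z_1] = \E[Z_1^2] = 1 \neq 0$, so $l_1 = 1$; conditional on $Z_1$, $\E[Y Z_2 \mid Z_1] = Z_1$ from the $Z_1 Z_2$ summand, so $l_2 = 1$; this iterates verbatim to all subsequent leaps.

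\emph{Part (iii).} Take $\mathcal{T}(Y) = \mathbf{1}\{Y = +1\}$. At any step of the leap decomposition with identified subspace $S$, the conditional law of $Z_{S^\perp}$ given $(Y=+1, Z_S)$ is a Gaussian truncated to the polytope $P(Z_S) = \{z \in S^\perp : v_j^\top z > \alpha_j - v_j^\top Z_S,\ j \text{ with } v_j \not\in S\}$, a proper subset of $S^\perp$. Either $\E[Z_{S^\perp} \mid Y=+1, Z_S]$ is nonzero as a function of $Z_S$, giving $\zeta_{1,S} \neq 0$ and leap $1$; or it vanishes identically. In the latter case, $P(Z_S)$ being a proper subset strictly reduces the variance along at least one direction, so $\E[Z_{S^\perp} Z_{S^\perp}^\top \mid Y=+1, Z_S] \neq I$ and $\zeta_{2,S} \neq 0$, yielding leap at most $2$.

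\emph{Part (iv).} Suppose for contradiction that $Y = q(Z)$ for a polynomial $q$ and that some step of the leap decomposition has $k(S) > 2$. Then $\zeta_{1,S} = 0$ and $\zeta_{2,S} = 0$, which directly yield $\E[Z_{S^\perp} \mid Y, Z_S] = 0$ and $\E[Z_{S^\perp} Z_{S^\perp}^\top \mid Y, Z_S] = I$ almost surely. The non-degeneracy assumption and $S \subsetneq U^\star$ force $q(z_s, \cdot)$ to have positive degree in $z_{S^\perp}$ for generic $z_s$; therefore the level set $\{z \in S^\perp : q(z_s, z) = y\}$ is a proper algebraic subvariety of $S^\perp$, of dimension strictly less than $\dim(S^\perp)$. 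The conditional distribution of $Z_{S^\perp}$ is supported on such a variety, so its covariance matrix has rank smaller than $\dim(S^\perp)$ and cannot equal $I$: contradiction. Hence $k(S) \leq 2$ at every step, yielding $\k \leq 2$.

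\textbf{Main obstacle.} Part (iv) is the most delicate: rigorously deducing ``conditional covariance is rank-deficient'' from ``support is a proper algebraic subvariety'' requires a careful disintegration of Gaussian measure along polynomial level sets, handled cleanest via the coarea formula together with Sard's theorem to ensure generic smoothness of the level sets. The remaining parts (i)--(iii) reduce to direct moment computations using Gaussian independence, Hermite factorization, and truncated-polytope identities.
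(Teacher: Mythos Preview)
Parts (i) and (ii) are fine and agree with the paper's approach.

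Part (iii): You assert that ``$P(Z_S)$ being a proper subset strictly reduces the variance along at least one direction,'' but this is precisely the nontrivial ingredient. The paper does not prove it from scratch either; it cites a known lemma (Vempala~2010, Lemma~4.7 / Klivans et al., Lemma~B1) stating that conditioning a Gaussian on membership in an intersection of halfspaces strictly reduces variance along any direction in the span of the normals. Your heuristic points in the right direction, but the variance-reduction claim is not obvious and needs either a proof or a citation.

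Part (iv): Here your argument is actually wrong, not merely incomplete. You claim that because the conditional distribution of $Z_{S^\perp}$ given $(Y,Z_S)$ is supported on a proper algebraic variety of dimension $<\dim(S^\perp)$, its covariance has rank $<\dim(S^\perp)$. This is false: a distribution supported on a codimension-one hypersurface can have full-rank covariance whenever the hypersurface is not contained in an affine hyperplane. Concretely, take $q(z)=\|z\|^2$ in $\R^r$: conditionally on $Y=y$, $Z$ is uniform on the sphere of radius $\sqrt{y}$, and $\E[ZZ^\top\mid Y=y]=\tfrac{y}{r}I$, which is full rank for every $y>0$ (and even equals $I$ at $y=r$). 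So neither ``rank-deficient'' nor ``$\neq I$ for each fixed $y$'' follows from your hypersurface observation. The difficulty you flag in your ``Main obstacle'' is not a matter of making the disintegration rigorous; the implication itself does not hold.

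The paper's argument for (iv) avoids conditioning on a single level set altogether. It uses a thick superlevel set $B_\lambda=\{z:|\sigma(z)|\ge\lambda\}$: since a nonconstant polynomial diverges along rays, for any $R$ one can choose $\lambda$ so that $B_\lambda$ misses the ball of radius $R$. Then $\E[\|Z\|^2\mid Z\in B_\lambda]\ge R^2$, and choosing $R^2>r$ forces $\Tr\,\E[h_2(Z)\mid Z\in B_\lambda]>0$, so $\Lambda_2(\emptyset)\neq 0$ via the indicator transformation $\mathbf{1}(|Y|\ge\lambda)$. This iterates over leaps by fixing $z_S$ near zero and repeating the argument for $\sigma(z_S,\cdot)$ as a polynomial in $\bar z_S$.
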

For $r$-Gaussian parity, we thus obtain an efficient learning algorithm that requires $n=\Theta(d^{r/2})$ samples (which is optimal within the LDP class), and is to the best of our knowledge the first result\footnote{The agnostic improper learning algorithm of \cite{chen2020towards} can potentially attain $n= \Theta(d^{\lceil r/2\rceil})$ since the information exponent is $r$, this implies that there is a degree $r$ polynomial with non-trivial correlation with the $r$-Gaussian parity. Thus \cite{chen2020towards} can be used to get error better than random guessing, but not vanishing error.} that succeeds with $\Theta(d^{r/2})$ samples. The sample complexity of learning intersection of half-spaces is thus linear in dimension: this was known since \cite{ vempala2010learning,diakonikolas2017learninggeometricconceptsnasty,klivans2024learning}, and for polynomials the same conclusion was established in \cite{chen2020learning}. We emphasize that while our results do capture the correct dependency in $d$, they are not fine-grained enough to provide the correct dependencies in $r$ (see Section \ref{sec:fine-grain} for discussion on the fine-grained $r$ dependence).

\subsection{Piecewise Linear Functions}

Piecewise linear continuous functions, in part motivated by ReLU architectures, have been extensively studied in the context of Gaussian Multi-index models \cite{chen2020learningdeeprelunetworks, chen2023learning, diakonikolas2024efficiently}. 
When $\sigma$ is $1$-homogeneous, such as in bias-free ReLU networks, it is not hard to see that $\k \leq 2$, by considering diverging level sets $\{ z ; |\sigma(z)| \geq \lambda\}$ with $\lambda \to \infty$. Here we extend this result to the general (not necessarily $1$-homogeneous) piece-wise linear setting, which includes arbitrary ReLU networks with non-zero biases.
\begin{proposition}[Generative Leap for Piecewise Linear Functions]
\label{prop:relu}
Let $y=\sigma(z)$ where $\sigma$ is continuous and piece-wise linear. Then $\k \leq 2$.      
\end{proposition}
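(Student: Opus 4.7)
The plan is to invoke the variational characterization of Proposition~\ref{prop:variational_leapgen} and prove the stronger statement that for every proper subspace $S \subsetneq \R^r$ (which necessarily has $\sigma$ depending nontrivially on $z_{S^\perp}$ by the non-factorization assumption), the relative leap satisfies $k(S) \le 2$. By Proposition~\ref{prop:gen_leap_label_transformation}, this is equivalent to exhibiting a measurable label transformation $\mathcal{T}(y, z_S)$ and a mean-zero polynomial $q(z_{S^\perp})$ of degree $\le 2$ with $\E[\mathcal{T}(Y, Z_S) q(Z_{S^\perp})] \neq 0$. Since piecewise linearity of $\sigma$ induces a finite polyhedral decomposition $\R^r = \bigcup_i P_i$ with $\sigma|_{P_i}(z) = a_i^\top z + b_i$, the non-factorization assumption forces the existence of at least one piece $P$ with $(a_P)_{S^\perp} \neq 0$.

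The core computation is a distributional second-order Stein identity, obtained by first mollifying $\sigma$ to a smooth $\sigma_\epsilon$ and then passing to the limit: for any smooth compactly supported $\psi(y, z_S)$,
\[
\E\!\bracket{(Z_{S^\perp,i} Z_{S^\perp,j} - \delta_{ij})\, \psi(Y, Z_S)} = \E\!\bracket{\partial_y^2 \psi \cdot (\partial_i \sigma)(\partial_j \sigma)} + \bigl\langle \partial_y \psi, \partial_i \partial_j \sigma \bigr\rangle_{\mathrm{distr}},
\]
where the last inner product is integration against the signed measure $\partial_i \partial_j \sigma = \sum_k c_k (n_k)_i (n_k)_j\, \delta_{F_k}$ supported on the codimension-one crease set $C = \R^r \setminus \bigcup_i P_i^\circ$; here $n_k$ is a normal to crease $F_k$ and $c_k$ is the jump of $\nabla\sigma$ across $F_k$, which by continuity of $\sigma$ is parallel to $n_k$. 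I would then choose $\psi(y, z_S) = (y - y_0)^2\, \chi_A(y, z_S)$, where $\chi_A$ is a smooth bump supported on an open ball $A$ contained in $(\sigma, \mathrm{id}_S)(P^\circ)$ (which is open because the affine map has maximal rank when $(a_P)_{S^\perp} \neq 0$). On the preimage of $A$, $\sigma$ is affine, so the first term contributes a positive-definite multiple of $(a_P)_{S^\perp}(a_P)_{S^\perp}^\top$. Contracting with $v := (a_P)_{S^\perp}/\|(a_P)_{S^\perp}\|$ yields the desired correlation $\E[\psi(Y, Z_S)\bigl((v^\top Z_{S^\perp})^2 - 1\bigr)] > 0$, establishing $k(S) \le 2$.

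The main obstacle is controlling the distributional Hessian term, because $(\sigma, \mathrm{id}_S)(C)$ may have nonempty interior in $\R^{|S|+1}$ whenever $|S^\perp| \ge 2$ (a codimension-one face can map to a full-dimensional region under the affine map). In the clean case where one can choose $A$ disjoint from $(\sigma, \mathrm{id}_S)(C)$ — which is always possible when $(\sigma, \mathrm{id}_S)(C)$ has empty interior (e.g.\ when $|S^\perp| = 1$, or more generally for generic pieces) — the distributional term vanishes outright and the proof concludes as above. In the degenerate case, I would argue that the map $\psi \mapsto M(\psi)$ (the Stein RHS, viewed as a matrix-valued functional) is linear and the piece-$P$ contribution to the first summand is a fixed positive-definite rank-one term $c \cdot (a_P)_{S^\perp}(a_P)_{S^\perp}^\top$ whose coefficient can be made arbitrarily large by rescaling $\psi$, while the crease contribution depends on $\psi$ only through the surface integrals of $\partial_y \psi$ against finitely many measures; by varying $\psi$ in a one-parameter family (e.g., interpolating between $\chi_A y$ and $\chi_A y^2$), at least one choice must yield a nonzero matrix, since the first-term rank-one structure cannot be uniformly cancelled by the finite crease sum. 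Iterating over the greedy flag then gives $k^\star \le 2$.
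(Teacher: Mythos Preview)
Your high-level plan (iterate over subspaces $S$ and show $k(S)\le 2$ by exhibiting a label transformation) matches the paper's, and the distributional Stein identity you write is correct. The gaps are in how you extract a nonzero correlation from it.

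\textbf{Clean case.} The assertion ``on the preimage of $A$, $\sigma$ is affine'' is wrong: $A\subset(\sigma,\mathrm{id}_S)(P^\circ)$ does not force $(\sigma,\mathrm{id}_S)^{-1}(A)\subset P^\circ$, because $(\sigma,\mathrm{id}_S)$ is not injective. The preimage typically meets several pieces $P_1',\dots,P_m'$, so the bulk term is $\sum_j c_j\,(a_{P_j'})_{S^\perp}(a_{P_j'})_{S^\perp}^\top$ with $c_j=\E[\partial_y^2\psi\cdot\mathbf{1}_{P_j'}]$. For your choice $\psi=(y-y_0)^2\chi_A$ the corrections $4(y-y_0)\partial_y\chi_A+(y-y_0)^2\partial_y^2\chi_A$ are the same order as the leading $2\chi_A$, so the $c_j$ need not be positive and the sum can vanish in the direction $v$. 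This is likely patchable with a more careful choice of $\psi$, but as written it is not a proof.

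\textbf{Degenerate case.} Here the argument fails outright. Rescaling $\psi$ is void: the entire Stein right-hand side is linear in $\psi$, so neither term can be made to dominate the other. The one-parameter family claim---that the rank-one bulk contribution ``cannot be uniformly cancelled by the finite crease sum''---is precisely the statement you need to prove, and you offer no mechanism to rule out cancellation. Indeed, cancellation for \emph{every} $\psi$ is exactly the statement $\E[Z_{S^\perp}Z_{S^\perp}^\top - I\mid Y,Z_S]=0$ a.s., which is the heart of the proposition. Since $|S^\perp|\ge 2$ is the generic situation at the first leap ($S=\emptyset$), this case cannot be relegated to a footnote.

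The paper supplies the missing mechanism by a different route: via the coarea formula it writes the trace of the conditional second moment as $\int_{\{\sigma=y\}}(\|z\|^2-r)\,\gamma(z)/\|\nabla\sigma\|\,d\mathcal{H}^{r-1}$, parametrizes the level sets near an extremum of $\sigma$ by a homothety in a real parameter $u$, and observes that the result is a finite sum of Gaussian integrals over \emph{fixed} polytopal regions with arguments depending affinely on $u$---hence a real-analytic function of $u$. Vanishing on an interval then forces global vanishing, which contradicts the sign as $u\to\infty$, where every summand becomes strictly positive. This analyticity-plus-sign-at-infinity step is what your localization argument is missing, and it is not clear how to recover it from the Stein formulation.
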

The proof exploits the analytic properties of Hermite functions, ie functions of the form $f(z) = p(z) \gamma(z)$. 
As an immediate corollary, our Algorithm from Section \ref{sec:upper} learns low-rank arbitrary ReLU networks in the proportional regime $n = \Theta(d)$:
\begin{corollary}[Learning Arbitrary low-rank ReLU Networks]
    Let $y = \sigma( W_\star^\top x)$, where $\sigma$ is an arbitrarily deep ReLU network with biases which is fixed, independent of the dimension $d$. Then there exists a constant $C(\sigma)$ such that $n \ge C(\sigma) d/\epsilon^2$ samples are sufficient to  recover $\text{span}[W_\star]$ up to precision $\epsilon$. 
\end{corollary}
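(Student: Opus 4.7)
The plan is a direct two-line chaining of the preceding results. A finite-depth neural network built from affine layers and the $\relu$ activation is a composition of continuous piecewise linear maps and is therefore itself a continuous piecewise linear function $\sigma:\R^r \to \R$. The link function then satisfies the hypothesis of Proposition~\ref{prop:relu}, so the generative leap exponent of the induced multi-index model $\P$ obeys $\k \leq 2$. Crucially, every harmonic quantity entering the definition of $\k$, and hence the implicit constant $C(\P,K)$ appearing below, is a function of the joint law of $(Z, \sigma(Z))$ on $\R^r \times \R$ only; since $r$ and $\sigma$ are fixed independently of $d$, this constant depends on $\sigma$ alone.

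Having reduced to $\k \leq 2$, I would instantiate Theorem~\ref{thm:multi_step_guarantee} with a Lipschitz, integrally strictly positive definite kernel (for example the RBF kernel), so that both Lemma~\ref{lem:U_statistic_expectation} (recovery of the span in expectation) and Lemma~\ref{lem:U_statistic_expectation_lipschitz} (control of the accumulation of errors across leaps) apply. The theorem then yields subspace recovery up to distance $\epsilon$ with probability $1-o(1)$ whenever
\[
n \;\geq\; C(\P,K)\Bigl[\tfrac{d^{\k/2}}{\epsilon} + \tfrac{d}{\epsilon^2}\Bigr].
\]
Plugging in $\k \leq 2$, the first term is at most $d/\epsilon$, which is in turn dominated by $d/\epsilon^2$ for $\epsilon \leq 1$. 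Folding $C(\P,K)$ into a single constant $C(\sigma)$ gives the advertised bound $n \geq C(\sigma)\, d/\epsilon^2$.

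There is essentially no obstacle in the proof of the corollary itself: once Proposition~\ref{prop:relu} and Theorem~\ref{thm:multi_step_guarantee} are in place the argument is bookkeeping. The only point worth noting is the non-degeneracy convention on $\P$: if $\sigma$ happens to be independent of some coordinate of $z$, one reduces to the genuine effective dimension $r' \leq r$, which affects only the hidden constant and not the scaling in $d$ or $\epsilon$. All the genuinely difficult work has been carried out upstream, in Proposition~\ref{prop:relu} (where the analytic structure of Hermite functions is used to rule out $\k \geq 3$ for continuous piecewise linear $\sigma$) and in the concentration analysis of the U-statistic underlying Theorem~\ref{thm:multi_step_guarantee}.
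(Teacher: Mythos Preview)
Your proposal is correct and mirrors the paper's approach exactly: the corollary is stated as an immediate consequence of Proposition~\ref{prop:relu} (piecewise linear $\Rightarrow \k\le 2$) combined with Theorem~\ref{thm:multi_step_guarantee}, with the constant absorbed into $C(\sigma)$ since the link only depends on the fixed $r$-dimensional law. The paper does not spell out any additional steps beyond what you wrote.
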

This improves the result of \cite{chen2020learningdeeprelunetworks} by allowing biases. Once the subspace is recovered, one could `upgrade' to PAC learning the model using a standard non-parametric method, by regressing over the covariates $z = S^\top x$. This would incur in an additional sample complexity with potentially exponential dependencies in $r$ and $\frac{1}{\epsilon}$, but, importantly, independent of $d$.  

\subsection{Generative Leap under Linear Transformations}

An important feature of the generative leap exponent is that the statement ``$\k(\P) \leq k$" is an `open' property, meaning that one should expect the leap exponent to be preserved (or often reduced, as we shall see next) by slightly perturbing the distribution $\P$. 
Focusing on perturbations given by linear input transformations, 
we formalize this intuition in the following result which shows that for almost all weight matrices, the leap generative exponent is $\le 2$.
\begin{restatable}[Generative Leap under linear transformations]{proposition}{lineartransf}\label{prop:linear}
    Let $\sigma(z):\mathbb{R}^r \to \mathbb{R} \in L^2(\gamma_r)$, $\sigma \neq C$, and let $\mathcal{M}_r$ denote the set of $r \times r$ real matrices. 
    \begin{enumerate}[label=(\roman*)]
        \item For $\Theta\in \mathcal{M}_r$, define $y_\Theta = \sigma(\Theta^\top z)$. Then $(z,y_\Theta) \sim \P_\Theta$ satisfies $\k(\P_\Theta) \leq 2$ for every $\Theta$, except possibly for a set of $r^2$-dimensional Lebesgue measure zero, 
        \item Assume that $(z, \sigma(z)) \sim \P$ has a single leap with generative exponent $\k$. Let $\Gamma: D \subseteq \R^s \to \mathcal{M}_r$ be any analytic map such that $I_r \in \mathrm{Im}(\Gamma)$ and $\Gamma(\theta)$ is invertible for all $\theta \in D$. For $\theta \in D$, define $y_\theta = \sigma( \Gamma(\theta)^\top z)$. 
        Then $(z,y_\theta) \sim \P_\theta$ satisfies $\k(\P_\theta) \leq \k$ for every $\theta$, except possibly for a set of $s$-dimensional Lebesgue measure zero.
    \end{enumerate} 
\end{restatable}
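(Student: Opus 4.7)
The plan for both parts is to combine the variational characterization of the leap exponent (Proposition~\ref{prop:variational_leapgen}) with analyticity of the Hermite tensors $\Lambda_k(S;\theta)$ in $\theta$. In both cases it suffices to exhibit one parameter value at which a single \emph{fixed} flag of $\R^r$ witnesses the target leap bound, and then propagate this to a full-measure set via non-vanishing of analytic minors.

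For part~(ii), the natural candidate is the trivial one-step flag $\mathcal{F}^\star = \{0 \subsetneq \R^r\}$. At the base point $\theta_0$ with $\Gamma(\theta_0) = I$, the single-leap hypothesis for $\P$ gives $\mathrm{span}(\Lambda_\k(0;\theta_0)) = \R^r$, so
\[
A(\theta) := \sum_{k' \leq \k} \mathrm{span}(\Lambda_{k'}(0;\theta))
\]
equals $\R^r$ at $\theta_0$. Since $y_\theta = \sigma(\Gamma(\theta)^\top z)$ is analytic in $\theta$, each Hermite inner product $\E[\mathcal{T}(y_\theta)\bs{h}_{k'}(z)]$ is analytic in $\theta$ for every fixed test function $\mathcal{T}$, and hence so is $\Lambda_{k'}(0;\theta)$ (which arises as the Hilbert--Schmidt self-product of the conditional expectation $\zeta_{k',0}$, expressible via the density-ratio expansion of Lemma~\ref{lem:density}). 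The condition $A(\theta) = \R^r$ is equivalent to non-vanishing of at least one $r \times r$ minor of the analytic matrix $M(\theta)$ formed by flattening and stacking the entries of $\Lambda_{k'}(0;\theta)$ for $k' \leq \k$. Since $M(\theta_0)$ has rank $r$, the common vanishing locus of these minors is a proper real-analytic subvariety of the connected domain $D$, and therefore has $s$-dimensional Lebesgue measure zero.

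For part~(i), the same architecture applies once we exhibit a single invertible $\Theta^\star \in \mathcal{M}_r$ with $\k(\P_{\Theta^\star}) \leq 2$. After discarding the measure-zero variety of singular matrices, fix a flag $\mathcal{F}^\star = (0 = S_0 \subsetneq \cdots \subsetneq S_L = \R^r)$ witnessing this bound at $\Theta^\star$. With the $S_j$ held fixed (independent of $\Theta$), each tensor $\Lambda_{k'}(S_j;\Theta)$ for $k' \leq 2$ depends analytically on $\Theta$, and the step-leap condition
\[
S_{j+1}\setminus S_j \subseteq \mathrm{span}(\Lambda_1(S_j;\Theta)) + \mathrm{span}(\Lambda_2(S_j;\Theta))
\]
is an analytic minor non-vanishing condition holding at $\Theta^\star$ for every $j$. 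Hence the bad set $\{\Theta : \k(\P_\Theta) > 2\}$ is contained in a proper analytic subvariety of $\mathcal{M}_r$, of $r^2$-dimensional Lebesgue measure zero.

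To construct the witness $\Theta^\star$ for an arbitrary non-constant $\sigma \in L^2(\gamma_r)$, my plan is to combine polynomial approximation with Proposition~\ref{prop:gen_examples}(iv). Truncating the Hermite expansion of $\sigma$ yields a polynomial $p \in L^2(\gamma_r)$ close to $\sigma$; since $\k(p) \leq 2$ holds automatically for polynomials, some flag and invertible $\Theta^\star_p$ realize this bound for the model with link $p$. A quantitative $L^2(\gamma_r)$-continuity bound for the tensors $\Lambda_{\leq 2}(S;\Theta)$ in the link function---analogous in spirit to the Lipschitz estimate in Lemma~\ref{lem:U_statistic_expectation_lipschitz}---then shows that for $\sigma$ sufficiently close to $p$, the open minor non-vanishing inequalities witnessing the bound at $\Theta^\star_p$ persist at some $\Theta^\star$ near $\Theta^\star_p$. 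The main technical obstacle is exactly this transfer step: minor non-vanishing is open in $\Theta$ for a fixed link function, but $L^2$-stability in $\sigma$ requires care since the spans $\mathrm{span}(\Lambda_{\leq 2}(S;\Theta))$ can a priori jump under $L^2$-perturbation. I expect to address this by framing each step as the strict positivity of a specific analytic minor---an open condition in the product topology on $(\sigma,\Theta)$---together with Hermite-truncation bounds on $\zeta_{k',S}$ in the spirit of Lemma~\ref{lem:zeta_p_bound}.
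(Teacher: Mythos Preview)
Your high-level plan for part~(ii) --- exhibit a real-analytic function of $\theta$ that is nonzero at the base point $\theta_0$ with $\Gamma(\theta_0)=I$, then invoke the zero-set lemma --- is also the paper's plan. But you track the wrong object. The tensor $\Lambda_{k'}(0;\theta)=\E_Y[\zeta_{k'}(Y)^{\otimes 2}]$ involves a conditional expectation onto the $\sigma$-algebra of $Y$, whose law itself varies with $\theta$; its analyticity in $\theta$ is not at all immediate, and neither Lemma~\ref{lem:density} nor the (generally false) pointwise claim ``$y_\theta$ is analytic in $\theta$'' supplies it. The paper instead \emph{fixes} a single tensor-valued label transformation $T$ --- the one that witnesses $\mathrm{span}(\Lambda_{\k})=\R^r$ at the identity --- and tracks the simpler object $F(\Theta)=\mathrm{Mat}_{r,r^{\k-1}}\,\E[T(\sigma(\Theta^\top Z))\,\bs{h}_{\k}(Z)]$. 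After the change of variables $w=\Theta^\top z$ this is an integral of an integrand analytic in $\Theta^{-1}$ against a $\Theta$-independent measure, hence manifestly real-analytic on $\{\Theta:\Theta^\top\Theta\succeq\epsilon I\}$. Since $\mathrm{span}\,F(\Theta)\subseteq\mathrm{span}\,\Lambda_{\k}(0;\Theta)$ for every $T$, nonvanishing of $\det[F(\Gamma(\theta))F(\Gamma(\theta))^\top]$ already forces $\k(\P_\theta)\le\k$, and it holds at $\theta_0$.

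For part~(i) the gap is more serious, and it sits exactly where you place it. $L^2(\gamma_r)$-closeness of links does not control the conditional objects $\Lambda_{\le 2}(S;\Theta)$: the map $\sigma\mapsto\zeta_{k}$ implicitly divides by the $\sigma$-dependent density of $Y$, so an $L^2$-perturbation of $\sigma$ can change $\zeta_k$ drastically, and the minor lower bound you would need to transfer from the approximating polynomial $p$ may degenerate as $p\to\sigma$. The paper avoids this entirely with a different choice of witness. Fixing a threshold transformation $T(y)=\mathbf{1}(\alpha_1\le y\le\alpha_2)$ with level set $\Omega=\sigma^{-1}([\alpha_1,\alpha_2])$, one computes (up to a nonvanishing factor in $\Theta$)
\[
\det\bigl[\E[(ZZ^\top-I)\,T(Y_\Theta)]\bigr]\;\propto\;\phi(\Theta^{-1}),\qquad \phi(M):=\det\!\left[M\Bigl(\int_\Omega zz^\top e^{-\frac12 z^\top M^\top M z}dz\Bigr)M^\top-\Bigl(\int_\Omega e^{-\frac12 z^\top M^\top M z}dz\Bigr)I\right].
\]
The key insight is that $\phi$ extends real-analytically to $M=0$ (informally ``$\Theta=\infty$''), where $\phi(0)=\pm(\mathrm{vol}\,\Omega)^r\neq 0$. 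This \emph{is} the witness: no finite $\Theta^\star$ has to be produced, and no stability in $\sigma$ is required. The zero set of $\phi$ then has Lebesgue measure zero, and pulling back through the diffeomorphism $\Theta\mapsto\Theta^{-1}$ on invertible matrices gives the claim. When $\Omega$ is not compact one restricts to $\lambda_{\min}(M)\ge\epsilon$ and uses that $\phi$ blows up near $M=0$ to conclude it is not identically zero.
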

In words, Proposition \ref{prop:linear} establishes two qualitative facts about the generative leap: part (i) shows that under generic, \emph{unstructured} linear transformations $\Theta$, \emph{any} deterministic model $y=\sigma(z)$ will turn into a simpler model $\tilde{\sigma}(z) = \sigma(\Theta^\top z)$ with $\k \leq 2$ ---implying that its index space can be recovered in the proportional regime $n = \Theta(d)$. In this sense, models with large generative leap are brittle, or,  equivalently, `most' multi-index models are learnable with $n=\Theta(d)$ samples.
Additionally, part (ii) reveals that the generative exponent does not generally increase under \emph{structured} linear transformations, at least in the standard setting where there is only a single leap. 
Both results are simple consequences of the analytic properties of Hermite functions, as in Proposition \ref{prop:relu}.
We will illustrate an application of this last result in the next section, where we focus on shallow NNs.

\subsection{Shallow NNs}

Finally, we study multiindex models $\sigma(z)$ that are sums of single-index models, ie, $\sigma(z) = \sum_j \rho_j( z \cdot \theta_j)$, corresponding to low-rank shallow neural networks. 
Intuitively, when the directions $\theta_j$ are incoherent, the estimation of the index space is essentially reduced to estimating individual neurons, so we are able to relate the leap generative exponent to the generative exponent of the corresponding single-index model defined by $\rho$~\cite{damian2024computational}. 
We first formalize this intuition in the perfectly incoherent setting, where neurons are orthogonal:
\begin{proposition}[Generative Leap for Orthogonal Weights]
\label{prop:shallowNN}
Let $ y = \sum_{j=1}^r a_j \rho(z_j)$. 
Then $\k \geq \k(\rho)$. 
Moreover, if all moments of $\rho(Z)$ exist, then $\k = \k(\rho)$. 
\end{proposition}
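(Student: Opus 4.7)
}
Both bounds reduce to an analysis of the \emph{first} leap $k_1 := k(\emptyset)$ via a single factorization identity. The entries of the Hermite tensor $\h_k(Z)$ for $Z\in\R^r$ are, up to a nonzero combinatorial constant, indexed by multi-indices $\alpha = (\alpha_1,\ldots,\alpha_r)$ with $|\alpha|=k$ via $[\h_k(Z)]_\alpha \propto \prod_{j=1}^r h_{\alpha_j}(Z_j)$, where $h_m$ denotes the $m$-th normalized one-dimensional Hermite polynomial. Since the coordinates of $Z$ are independent and $Y = \sum_{j} a_j \rho(Z_j)$ splits across them, testing against a character yields
\begin{equation*}
\E\bigl[[\h_k(Z)]_\alpha\, e^{itY}\bigr] \;\propto\; \prod_{j=1}^r \E\bigl[h_{\alpha_j}(Z_j)\, e^{it a_j \rho(Z_j)}\bigr].
\end{equation*}

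For the inequality $\k \geq \k(\rho)$: when $k < \k(\rho)$, every nonzero $\alpha_j$ satisfies $1 \leq \alpha_j \leq k < \k(\rho)$, so by definition of the generative exponent of $\rho$ (applied to the bounded test $g(\cdot) = e^{it a_j\cdot}$) the $j$-th factor vanishes. Because $|\alpha|\geq 1$ forces at least one such $\alpha_j$, the entire product is zero, and $\E[\h_k(Z) e^{itY}]=0$ for every $t\in\R$. Density of characters in $L^2(\P_Y)$ then gives $\zeta_k(Y) = \E[\h_k(Z)\mid Y]\equiv 0$ and $\lambda_k^2(\emptyset)=0$, so $k_1\geq \k(\rho)$ and $\k\geq \k(\rho)$.

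For the equality $\k = \k(\rho)$ under the moment hypothesis, apply the same identity at $k=\k(\rho)$. Any multi-index $\alpha$ with two or more strictly positive entries forces every such entry to be $\leq \k(\rho)-1$, and the same argument kills the product. Only the diagonal indices $\alpha = \k(\rho)\, e_j$ survive, giving
\begin{equation*}
\E\bigl[\h_{\k(\rho)}(Z)\mid Y\bigr] \;=\; \sum_{j=1}^r c_j\, \phi_j(Y)\, e_j^{\otimes \k(\rho)}, \qquad \phi_j(Y) := \E\bigl[h_{\k(\rho)}(Z_j)\mid Y\bigr].
\end{equation*}
Matricizing $\Lambda_{\k(\rho)}(\emptyset) = \E[\zeta\otimes\zeta]$ as an $r\times r^{2\k(\rho)-1}$ matrix then reveals $\mathrm{span}[\Lambda_{\k(\rho)}(\emptyset)] = \mathrm{span}\{e_j : \phi_j\not\equiv 0\}$. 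Assuming each $a_j\neq 0$ (otherwise $\P$ would be a lower-dimensional model, contradicting intrinsic dimension $r$), it suffices to prove $\phi_j\not\equiv 0$ for every $j$; then $S_1 = \R^r$, the leap decomposition terminates in one step of size $\k(\rho)$, and $\k = \k(\rho)$.

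The crux, and main obstacle, is this non-vanishing of $\phi_j$. Writing $Y = a_j \rho(Z_j) + V_{-j}$ with $V_{-j} := \sum_{l\neq j} a_l \rho(Z_l)$ independent of $Z_j$, the factorization yields
\begin{equation*}
\E\bigl[\phi_j(Y)\, e^{-itY}\bigr] \;=\; E_j(t)\, \widehat{\mu}_{V_{-j}}(t), \qquad E_j(t) := \E\bigl[h_{\k(\rho)}(Z)\, e^{it a_j \rho(Z)}\bigr],
\end{equation*}
and $\widehat{\mu}_{V_{-j}}$ is continuous with $\widehat{\mu}_{V_{-j}}(0)=1$, hence nonzero on a neighborhood of $0$. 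The hypothesis that all moments of $\rho(Z)$ exist allows differentiation under the integral and gives $E_j^{(n)}(0) = (ia_j)^n \E[h_{\k(\rho)}(Z)\rho(Z)^n]$. If every such derivative vanished, $h_{\k(\rho)}(Z)$ would be orthogonal to every polynomial in $\rho(Z)$; by density of polynomials in $L^2(\P_{\rho(Z)})$ --- the point at which determinacy of the moment problem for $\rho(Z)$ is tacitly used --- this would force $\E[h_{\k(\rho)}(Z)\mid \rho(Z)] \equiv 0$, contradicting the defining property of $\k(\rho)$. Thus $E_j$ does not vanish identically near $0$, a suitable $t$ gives $E_j(t)\widehat{\mu}_{V_{-j}}(t) \neq 0$, and $\phi_j\not\equiv 0$ follows, closing the argument.
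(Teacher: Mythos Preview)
Your proof is correct and follows essentially the same strategy as the paper: both the lower and upper bounds hinge on the product factorization $\E[H_\alpha(Z)e^{itY}]=\prod_j \E[h_{\alpha_j}(Z_j)e^{ita_j\rho(Z_j)}]$, the observation that only the diagonal multi-indices $\alpha=\k(\rho)e_j$ survive, and the use of Fourier characters as test functions together with the (tacit) density of polynomials in $L^2(\P_{\rho(Z)})$.

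The one organizational difference worth noting is in the upper bound. The paper first invokes the $g_R$--Fourier argument from \cite{damian2024computational} to locate a frequency $\xi_0$ where $\psi(\xi)=\E[e^{i\xi\rho(Z)}h_{\k(\rho)}(Z)]\neq 0$ (this step does \emph{not} use the moment hypothesis), and then uses the moment hypothesis and polynomial density to argue that the characteristic function $\varphi_\eta$ of the ``noise'' $\eta=\sum_{l\neq j}a_l\rho(Z_l)$ cannot vanish on the whole interval around $\xi_0$. You invert these roles: you exploit that $\widehat\mu_{V_{-j}}(0)=1$ to get nonvanishing of the noise factor for free near $0$, and then spend the moment hypothesis and polynomial density on showing $E_j$ has a nonzero Taylor coefficient at $0$. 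Both routes ultimately rely on the same determinacy assumption (which you flag explicitly, whereas the paper leaves it implicit); your version is slightly more self-contained in that it avoids importing the $g_R$ lemma.
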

This corresponds to a label $y = a^\top \rho( (W^\star)^\top x)$ with $W^\star$ orthogonal of rank $r$.
In particular, since \cite{damian2024computational} showed that for any $k$, there exists a smooth, bounded $C^\infty$ scalar function $\rho$ with $\k(\rho)=k$, Proposition \ref{prop:shallowNN} directly extends this result to the multivariate setting. 
\begin{remark}
The proof of Proposition \ref{prop:shallowNN} shows that the lower bound holds in the more general setting where $\sigma(z) = F(\rho(z_1), \ldots, \rho(z_r))$ for any channel $F$. 
The upper bound, however, does require additional structure in $F$: as shown in Proposition \ref{prop:gen_examples}, part (i), the Gaussian parity is obtained from the sign activation via $F(y_1, \ldots, y_r) = \prod_j y_j$, in which case $r=\k > \k(\rho)=1$. 
\end{remark}

Let us now examine the situation where the weights are no longer orthogonal.
A natural extension is to consider $y = \sum_{j=1}^r a_j \rho( v_j^\top z)$, there $V=[v_1, \dots, v_r]$ contains unit-norm, correlated but linear independent columns.  
As a byproduct of Proposition \ref{prop:shallowNN}, the proof also shows that $\sigma(z)=\sum_j a_j \rho(z_j)$ has a single leap. We can therefore apply Proposition \ref{prop:linear}, part (ii), by observing that $\{ \Theta \in \R^{r \times r} ; \text{diag}(\Theta^\top \Theta)=1\}$ is an analytic variety.
\begin{corollary}[Non-orthogonal, invertible weights]
    Let $y_V = \sum_{j=1}^r a_j \rho( v_j^\top z)$ with $\|v_j\|=1$. Then $\k_V \leq \k(\rho)$ for all $V$, except possibly for a set of $r(r-1)$-dimensional measure $0$. 
\end{corollary}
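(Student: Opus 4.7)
The plan is to reduce the corollary to Proposition \ref{prop:linear}, part (ii), by building a suitable analytic parametrization of the constraint manifold. From the proof of Proposition \ref{prop:shallowNN} (as noted in the remark preceding the corollary), the reference model $\sigma_0(z) := \sum_j a_j \rho(z_j)$ has a single leap with generative leap exponent equal to $\k(\rho)$; so the hypothesis of Proposition \ref{prop:linear} (ii) is met with base link $\sigma_0$ and exponent $\k(\rho)$. What remains is to exhibit an analytic map $\Gamma: D \subseteq \R^{r(r-1)} \to \mathcal{M}_r$ that is invertible-valued, has $I_r$ in its image, and whose image is a full-measure open subset of $M := \{V \in \R^{r\times r} : \|v_j\|=1 \text{ for all } j\} \cong (S^{r-1})^r$.

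A concrete choice is to stack $r$ inverse stereographic charts. For each column index $j$, let $\phi_j: \R^{r-1} \to S^{r-1} \setminus \{-e_j\}$ be the inverse of the stereographic projection from $-e_j$, normalized so that $\phi_j(0) = e_j$; then set $\Gamma(\theta_1, \ldots, \theta_r)_{:,j} := \phi_j(\theta_j)$. This $\Gamma$ is real-analytic on $\R^{r(r-1)}$, takes values in $M$, and satisfies $\Gamma(0) = I_r$. Restricting the domain to $D := \{\theta : \det \Gamma(\theta) \neq 0\}$, a Zariski-open set containing $0$, ensures that $\Gamma(\theta)$ is invertible throughout $D$ while still having $I_r \in \mathrm{Im}(\Gamma)$. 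Proposition \ref{prop:linear} (ii), applied with $s = r(r-1)$, then produces an exceptional set $E \subseteq D$ of $(r(r-1))$-dimensional Lebesgue measure zero outside which $\k(\P_\theta) \leq \k(\rho)$.

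To conclude, the plan is to transfer the negligibility back to $M$. Restricted to $D$, the map $\Gamma$ is a real-analytic local diffeomorphism (in particular locally bi-Lipschitz) onto an open subset of $M$, so a countable compact exhaustion of $D$ sends measure-zero subsets of $D$ to $(r(r-1))$-Hausdorff-measure-zero subsets of $M$; hence $\Gamma(E)$ is negligible in $M$. The remaining pieces of $M$ not covered by $\Gamma(D)$ are the antipodal locus $\bigcup_j \{V \in M : v_j = -e_j\}$ and the singular locus $\{V \in M : \det V = 0\}$, each a proper real-analytic subvariety of $M$ of positive codimension, hence also negligible. The union of these three sets is the desired exceptional set, establishing $\k_V \leq \k(\rho)$ on a full-measure subset of $M$. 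The only mildly subtle step is distinguishing intrinsic measure zero on $M$ from ambient measure zero in $\R^{r^2}$, which is precisely why we must pass through an explicit analytic chart $\Gamma$ rather than simply restricting the identity map to $M$.
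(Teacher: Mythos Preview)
Your proposal is correct and follows the same route as the paper: reduce to Proposition~\ref{prop:linear}(ii) using that $\sigma_0(z)=\sum_j a_j\rho(z_j)$ has a single leap with exponent $\k(\rho)$, and parametrize the unit-column constraint set $\{V:\mathrm{diag}(V^\top V)=\mathbf{1}\}$ by an analytic chart containing $I_r$. The paper states this in one line (noting the constraint set is an analytic variety of dimension $r(r-1)$), whereas you spell out an explicit chart via stereographic projections and carefully handle the measure transfer and the complementary loci; this is a correct and more detailed instantiation of the same argument.
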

We thus have that $\k_V \leq \k(\rho)$ for almost every linear transformation as above. It is interesting to compare this property with the information exponent analog. Indeed, if $l^\star_V$ denotes the leap information exponent of $y_V$ and $l^\star(\rho)$ is the information exponent of $\rho$, observe first that $l^\star(V) \geq l^\star(\rho)$, since 
$$\E[y_V(Z) \h_k(Z)] = \sum_j a_j\E[ \rho(v_j^\top Z) \h_k(Z)]~,$$
and each term in the RHS is nonzero only for $k \geq l^\star$. 
Additionally, whenever $l^\star(\rho)>1$, we also have that $\text{span}\left[\E[Y_V \h_{l^\star}(Z)] \right] = \R^r$; see \cite[Example 3.24]{bietti2023learning}. Thus $l^\star(V) = l^\star(\rho)$ in this case. 

When $\rho$ is a Boolean scalar link function, we have $\k(\rho) = l^\star(\rho)$. Therefore, via Proposition \ref{prop:gen_to_inf_leap}, we conclude that whenever $\rho$ is Boolean and with $l^\star(\rho)>1$, we have $\k_V \leq \k(\rho)$ for \emph{any} $V$. 
An interesting question left for future work is whether this uniform control of the generative exponent  by $\k(\rho)$ for any $V$ could be extended to general link functions; in other words whether the exclusion of these zero-measure sets is necessary in Proposition \ref{prop:linear}. 

Finally, as the Shallow NN becomes more overparametrised (and thus less well-conditioned), the generative leap is no longer `related' to $\k(\rho)$. Indeed, provided $\rho$ is not a polynomial, we can use the shallow NN to approximate any desired link function $\tilde{\sigma}$ with prescribed generative leap exponent. Since the generative exponent is characterized as the first non-zero of an expansion, $\k(\tilde{\sigma}) \leq \ell$ is an \emph{open} property, meaning it is stable to small perturbations of $\tilde{\sigma}$. This directly leads to the following:
\begin{proposition}[Generative Leap under Universal Approximation]
\label{prop:shallowNN_UA}
    For any non-polynomial $\rho$ and any integer $\ell \ge 1$, there exists a shallow neural network of the form $y = \sum_{j=1}^M a_j \rho( v_j^\top z + b_j)$ such that $(Z,Y)\sim \P$ has generative leap exponent $\k(\P) \leq \ell$. 
\end{proposition}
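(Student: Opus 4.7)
The plan is to combine universal approximation with the openness of the condition $k^\star \le \ell$, sidestepping potential issues with intrinsic-dimension drift in the multi-index setting by reducing everything to the single-index case.

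Start with a one-dimensional target: take $\tilde g(u) = u$, whose first Hermite coefficient is $\alpha_1(\tilde g) = \E[Z^2] = 1 \neq 0$, so its information exponent (and hence, by Proposition~\ref{prop:gen_to_inf_leap}, its generative exponent) equals $1 \le \ell$. Next, since $\rho$ is non-polynomial, the Leshno-Lin-Pinkus-Schocken theorem implies that the class of 1D shallow networks $\{\sum_{j=1}^M a_j \rho(c_j u + b_j)\}$ is dense in $L^2(\gamma_1)$, so for any $\varepsilon > 0$ we can pick $\hat g(u) := \sum_{j=1}^M a_j \rho(c_j u + b_j)$ with $\|\hat g - \tilde g\|_{L^2(\gamma_1)} \le \varepsilon$.

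The key construction is to align all neurons along a single direction: fix any unit vector $v_0 \in \R^r$, set $v_j := c_j v_0$, and define $\hat\sigma(z) := \hat g(v_0^\top z) = \sum_{j=1}^M a_j \rho(v_j^\top z + b_j)$. This is a shallow network of the required form, and because $\hat\sigma$ depends on $z$ only through the scalar $v_0^\top z$, the multi-index model $(Z,\hat\sigma(Z))$ is really a single-index model with intrinsic dimension one, and consequently $k^\star(\hat\sigma) = k^\star(\hat g)$.

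Finally, for the openness step: $\alpha_1(\hat g) = \E[\hat g(Z) Z]$ is linear, hence continuous, in $\hat g$ with respect to $L^2(\gamma_1)$, and satisfies $|\alpha_1(\hat g) - 1| \le \|\hat g - \tilde g\|_{L^2(\gamma_1)} \le \varepsilon$. Choosing $\varepsilon < 1$ forces $\alpha_1(\hat g) \neq 0$, whence the information exponent of $\hat g$ equals $1$, and by Proposition~\ref{prop:gen_to_inf_leap}, $k^\star(\hat\sigma) = k^\star(\hat g) \le 1 \le \ell$. The main potential obstacle — namely, arguing openness of the full leap decomposition under $L^2$ perturbations directly, which is delicate because the flag of subspaces can drift and the intrinsic dimension can jump upon perturbation — is avoided entirely by the aligned-neuron construction; a genuinely multi-dimensional openness argument would require a variational treatment via Proposition~\ref{prop:variational_leapgen}, but it is not needed here.
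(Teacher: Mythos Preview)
Your proof is correct and takes a genuinely simpler route than the paper's. Both arguments share the same skeleton---approximate a target link function by a shallow network in $L^2(\gamma)$ and then argue that the generative-exponent bound is open under small perturbations---but they differ in how the openness step is handled. The paper proves openness for an \emph{arbitrary} target $\sigma$ with $\k(\sigma)=\k_0$: it invokes Lemma~\ref{lem:thresholdfourier} to produce a bounded Lipschitz label transformation $\phi(y)=\cos(\xi y)$ with $\E[\phi(\sigma(Z))h_{\k_0}(Z)]\neq 0$, and then uses the Lipschitz property to transfer this nonvanishing to any $\tilde\sigma$ close to $\sigma$. You instead fix the target to be the identity $u\mapsto u$, whose \emph{information} exponent is $1$; then openness is immediate because the first Hermite coefficient $\hat g\mapsto \E[\hat g(Z)Z]$ is a bounded linear functional on $L^2(\gamma_1)$, and you conclude via $\k\le l^\star$. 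Your aligned-neuron trick ($v_j=c_j v_0$) cleanly reduces the multi-index leap to the single-index generative exponent and sidesteps any worry about the flag in Proposition~\ref{prop:variational_leapgen} drifting.

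What the paper's approach buys is a more general perturbation lemma (``$\k(\tilde\sigma)\le\k(\sigma)$ whenever $\|\tilde\sigma-\sigma\|$ is small enough''), which is interesting in its own right and aligns with the text's emphasis that ``$\k\le\ell$ is an open property'' for \emph{any} $\ell$. What your approach buys is brevity: since the proposition as stated only asks for $\k\le\ell$ with $\ell\ge 1$, exhibiting a network with $\k=1$ suffices, and you avoid the Fourier-label-transformation machinery entirely.
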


\section{Conclusions}
\label{sec:limitations}

In this work, we have extended the generative exponent $\k$ to the general class of Gaussian multi-index models, and established a tight sample complexity $n={\Theta}(d^{\k/2 \wedge 1} )$ for learning their associated index space under no prior knowledge of the link function. We provide a lower bound based on the low-degree polynomial framework, and a matching upper bound obtained with a novel spectral method that incrementally reveals directions of the index space from a kernel U-statistic. The resulting upper bound recovers and extends several dedicated estimation procedures for specific families of multi-index models, such as ReLU networks or intersection of half-spaces. 

There are several avenues for future work. First, this paper focuses on the simple setting of isotropic Gaussian data. Extending both the information leap and generative leap to more complicated data distributions is left to future work. Next, we focus on deriving estimators that work with minimal information about the multi-index model $\P$, and which succeed with the optimal sample complexity in the ambient dimension $d$. As a result, our sample complexity guarantees scale with constants $C(\P)$ which could potentially be exponentially large in the hidden dimension $r$. 
Finally, we focus primarily on subspace estimation, as it is a requirement for full end-to-end learning.

\bibliographystyle{alpha}
\bibliography{neuripsref}

\newcommand{\etalchar}[1]{$^{#1}$}
\begin{thebibliography}{ABAB{\etalchar{+}}21}

\bibitem[AAM22]{abbe2022merged}
Emmanuel Abbe, Enric~Boix Adsera, and Theodor Misiakiewicz.
\newblock The merged-staircase property: a necessary and nearly sufficient condition for sgd learning of sparse functions on two-layer neural networks.
\newblock In {\em Conference on Learning Theory}, pages 4782--4887. PMLR, 2022.

\bibitem[AAM23]{abbe2023sgd}
Emmanuel Abbe, Enric~Boix Adsera, and Theodor Misiakiewicz.
\newblock Sgd learning on neural networks: leap complexity and saddle-to-saddle dynamics.
\newblock In {\em The Thirty Sixth Annual Conference on Learning Theory}, pages 2552--2623. PMLR, 2023.

\bibitem[ABAB{\etalchar{+}}21]{abbe2021staircasepropertyhierarchicalstructure}
Emmanuel Abbe, Enric Boix-Adsera, Matthew~S Brennan, Guy Bresler, and Dheeraj Nagaraj.
\newblock The staircase property: How hierarchical structure can guide deep learning.
\newblock {\em Advances in Neural Information Processing Systems}, 34:26989--27002, 2021.

\bibitem[ADK{\etalchar{+}}24]{arnaboldi2024repetitaiuvantdatarepetition}
Luca Arnaboldi, Yatin Dandi, Florent Krzakala, Luca Pesce, and Ludovic Stephan.
\newblock Repetita iuvant: Data repetition allows sgd to learn high-dimensional multi-index functions, 2024.

\bibitem[AGJ21]{arous2021online}
Gerard~Ben Arous, Reza Gheissari, and Aukosh Jagannath.
\newblock Online stochastic gradient descent on non-convex losses from high-dimensional inference.
\newblock {\em Journal of Machine Learning Research}, 22(106):1--51, 2021.

\bibitem[AGP24]{arous2024stochasticgradientdescenthigh}
Gérard~Ben Arous, Cédric Gerbelot, and Vanessa Piccolo.
\newblock Stochastic gradient descent in high dimensions for multi-spiked tensor pca, 2024.

\bibitem[BAGJ21]{arous2020online}
Gerard Ben~Arous, Reza Gheissari, and Aukosh Jagannath.
\newblock Online stochastic gradient descent on non-convex losses from high-dimensional inference.
\newblock {\em Journal of Machine Learning Research (JMLR)}, 22:106--1, 2021.

\bibitem[BBH{\etalchar{+}}21]{brennan2021statistical}
M~Brennan, G~Bresler, S~Hopkins, J~Li, and T~Schramm.
\newblock Statistical query algorithms and low-degree tests are almost equivalent.
\newblock In {\em Conference on Learning Theory}, 2021.

\bibitem[BBPV25]{bietti2023learning}
Alberto Bietti, Joan Bruna, and Loucas Pillaud-Vivien.
\newblock On learning gaussian multi-index models with gradient flow.
\newblock {\em Communications in Pure and Applied Mathematics}, 2025.

\bibitem[BBSS22]{bietti2022learning}
Alberto Bietti, Joan Bruna, Clayton Sanford, and Min~Jae Song.
\newblock Learning single-index models with shallow neural networks.
\newblock {\em Advances in neural information processing systems}, 35:9768--9783, 2022.

\bibitem[BBvH23]{bandeira2023free}
Afonso~S. Bandeira, March~T. Boedihardjo, and Ramon van Handel.
\newblock Matrix concentration inequalities and free probability.
\newblock {\em Inventiones mathematicae}, 234(1):419--487, June 2023.

\bibitem[BEAH{\etalchar{+}}22]{bandeira2022franz}
Afonso~S Bandeira, Ahmed El~Alaoui, Samuel Hopkins, Tselil Schramm, Alexander~S Wein, and Ilias Zadik.
\newblock The franz-parisi criterion and computational trade-offs in high dimensional statistics.
\newblock {\em Advances in Neural Information Processing Systems}, 35:33831--33844, 2022.

\bibitem[BES{\etalchar{+}}22]{ba2022high}
Jimmy Ba, Murat~A Erdogdu, Taiji Suzuki, Zhichao Wang, Denny Wu, and Greg Yang.
\newblock High-dimensional asymptotics of feature learning: How one gradient step improves the representation.
\newblock {\em arXiv preprint arXiv:2205.01445}, 2022.

\bibitem[BH25]{bruna2025survey}
Joan Bruna and Daniel Hsu.
\newblock Survey on algorithms for multi-index models.
\newblock {\em arXiv preprint arXiv:2504.05426}, 2025.

\bibitem[BvH24]{brailovskaya2024universality}
Tatiana Brailovskaya and Ramon van Handel.
\newblock Universality and sharp matrix concentration inequalities.
\newblock {\em Geometric and Functional Analysis}, 34(6):1734--1838, 2024.

\bibitem[CBL{\etalchar{+}}20]{chen2020towards}
Minshuo Chen, Yu~Bai, Jason~D Lee, Tuo Zhao, Huan Wang, Caiming Xiong, and Richard Socher.
\newblock Towards understanding hierarchical learning: Benefits of neural representations.
\newblock {\em Advances in Neural Information Processing Systems}, 2020.

\bibitem[CDG{\etalchar{+}}23]{chen2023learning}
Sitan Chen, Zehao Dou, Surbhi Goel, Adam Klivans, and Raghu Meka.
\newblock Learning narrow one-hidden-layer relu networks.
\newblock In {\em The Thirty Sixth Annual Conference on Learning Theory}, pages 5580--5614. PMLR, 2023.

\bibitem[CKM22]{chen2020learningdeeprelunetworks}
Sitan Chen, Adam~R Klivans, and Raghu Meka.
\newblock Learning deep relu networks is fixed-parameter tractable.
\newblock In {\em 2021 IEEE 62nd Annual Symposium on Foundations of Computer Science (FOCS)}, pages 696--707. IEEE, 2022.

\bibitem[CL02]{cook2002dimension}
R~Dennis Cook and Bing Li.
\newblock Dimension reduction for conditional mean in regression.
\newblock {\em The Annals of Statistics}, 30(2):455--474, 2002.

\bibitem[CM20]{chen2020learning}
Sitan Chen and Raghu Meka.
\newblock Learning polynomials in few relevant dimensions.
\newblock In {\em Conference on Learning Theory}, pages 1161--1227. PMLR, 2020.

\bibitem[Coo00]{cook2000save}
R.~Dennis Cook.
\newblock {SAVE}: a method for dimension reduction and graphics in regression.
\newblock {\em Communications in Statistics - Theory and Methods}, 29(9-10):2109--2121, 2000.

\bibitem[DDM{\etalchar{+}}25]{defilippis2025optimal}
Leonardo Defilippis, Yatin Dandi, Pierre Mergny, Florent Krzakala, and Bruno Loureiro.
\newblock Optimal spectral transitions in high-dimensional multi-index models.
\newblock {\em arXiv preprint arXiv:2502.02545}, 2025.

\bibitem[DH18]{dudeja2018learning}
Rishabh Dudeja and Daniel Hsu.
\newblock Learning single-index models in gaussian space.
\newblock In {\em Thirty-First Annual Conference on Learning Theory}, 2018.

\bibitem[DH21]{dudeja2021statistical}
Rishabh Dudeja and Daniel Hsu.
\newblock Statistical query lower bounds for tensor pca.
\newblock {\em Journal of Machine Learning Research}, 22(83):1--51, 2021.

\bibitem[DH24]{dudeja2024statisticalcomputational}
Rishabh Dudeja and Daniel Hsu.
\newblock Statistical-computational trade-offs in tensor {PCA} and related problems via communication complexity.
\newblock {\em The Annals of Statistics}, 52(1):131--156, 2024.

\bibitem[DIKR25]{diakonikolas2025algorithmssqlowerbounds}
Ilias Diakonikolas, Giannis Iakovidis, Daniel~M. Kane, and Lisheng Ren.
\newblock Algorithms and sq lower bounds for robustly learning real-valued multi-index models, 2025.

\bibitem[DIKZ25]{diakonikolas2025robustlearningmultiindexmodels}
Ilias Diakonikolas, Giannis Iakovidis, Daniel~M. Kane, and Nikos Zarifis.
\newblock Robust learning of multi-index models via iterative subspace approximation, 2025.

\bibitem[DK24]{diakonikolas2024efficiently}
Ilias Diakonikolas and Daniel~M Kane.
\newblock Efficiently learning one-hidden-layer relu networks via schur polynomials.
\newblock In {\em The Thirty Seventh Annual Conference on Learning Theory}, pages 1364--1378. PMLR, 2024.

\bibitem[DKL{\etalchar{+}}24]{dandi2023twolayer}
Yatin Dandi, Florent Krzakala, Bruno Loureiro, Luca Pesce, and Ludovic Stephan.
\newblock How two-layer neural networks learn, one (giant) step at a time.
\newblock {\em Journal of Machine Learning Research}, 25(349):1--65, 2024.

\bibitem[DKS17]{diakonikolas2017learninggeometricconceptsnasty}
Ilias Diakonikolas, Daniel~M. Kane, and Alistair Stewart.
\newblock Learning geometric concepts with nasty noise, 2017.

\bibitem[dlPG99]{delaPena1999}
V{\'i}ctor~H. de~la Pe{\~{n}}a and Evarist Gin{\'e}.
\newblock {\em Decoupling of U-Statistics and U-Processes}, pages 97--152.
\newblock Springer New York, New York, NY, 1999.

\bibitem[DLS22]{damian2022neural}
Alexandru Damian, Jason Lee, and Mahdi Soltanolkotabi.
\newblock Neural networks can learn representations with gradient descent.
\newblock In {\em Conference on Learning Theory}, 2022.

\bibitem[DMM25]{daniely2025online}
Amit Daniely, Idan Mehalel, and Elchanan Mossel.
\newblock Online learning of neural networks.
\newblock {\em arXiv preprint arXiv:2505.09167}, 2025.

\bibitem[DNGL23]{damian2023smoothing}
Alex Damian, Eshaan Nichani, Rong Ge, and Jason~D Lee.
\newblock Smoothing the landscape boosts the signal for sgd: Optimal sample complexity for learning single index models.
\newblock {\em Advances in Neural Information Processing Systems}, 36:752--784, 2023.

\bibitem[DPVLB24]{damian2024computational}
Alex Damian, Loucas Pillaud-Vivien, Jason Lee, and Joan Bruna.
\newblock Computational-statistical gaps in gaussian single-index models.
\newblock In {\em The Thirty Seventh Annual Conference on Learning Theory}, pages 1262--1262. PMLR, 2024.

\bibitem[DTA{\etalchar{+}}24]{Dandi2024TheBO}
Yatin Dandi, Emanuele Troiani, Luca Arnaboldi, Luca Pesce, Lenka Zdeborova, and Florent Krzakala.
\newblock The benefits of reusing batches for gradient descent in two-layer networks: breaking the curse of information and leap exponents.
\newblock In {\em Proceedings of the 41st International Conference on Machine Learning}, pages 9991--10016, 2024.

\bibitem[HJPS01]{hristache2001structure}
Marian Hristache, Anatoli Juditsky, Jorg Polzehl, and Vladimir Spokoiny.
\newblock Structure adaptive approach for dimension reduction.
\newblock {\em The Annals of Statistics}, 29(6):1537--1566, 2001.

\bibitem[HKP{\etalchar{+}}17]{hopkins2017power}
Samuel~B Hopkins, Pravesh~K Kothari, Aaron Potechin, Prasad Raghavendra, Tselil Schramm, and David Steurer.
\newblock The power of sum-of-squares for detecting hidden structures.
\newblock In {\em 2017 IEEE 58th Annual Symposium on Foundations of Computer Science (FOCS)}, pages 720--731. IEEE, 2017.

\bibitem[Hop18]{hopkins2018statistical}
Samuel Hopkins.
\newblock {\em Statistical inference and the sum of squares method}.
\newblock PhD thesis, Cornell University, 2018.

\bibitem[HSS15]{hopkins2015tensor}
Samuel~B Hopkins, Jonathan Shi, and David Steurer.
\newblock Tensor principal component analysis via sum-of-square proofs.
\newblock In {\em Conference on Learning Theory}, pages 956--1006. PMLR, 2015.

\bibitem[JMS24]{joshi2024complexitylearningsparsefunctions}
Nirmit Joshi, Theodor Misiakiewicz, and Nati Srebro.
\newblock On the complexity of learning sparse functions with statistical and gradient queries.
\newblock {\em Advances in Neural Information Processing Systems}, 37:103198--103241, 2024.

\bibitem[KOS08]{klivans2008learning}
Adam~R Klivans, Ryan O'Donnell, and Rocco~A Servedio.
\newblock Learning geometric concepts via gaussian surface area.
\newblock In {\em 49th Annual IEEE Symposium on Foundations of Computer Science}, pages 541--550, 2008.

\bibitem[KSV24]{klivans2024learning}
Adam Klivans, Konstantinos Stavropoulos, and Arsen Vasilyan.
\newblock Learning intersections of halfspaces with distribution shift: Improved algorithms and sq lower bounds.
\newblock In {\em The Thirty Seventh Annual Conference on Learning Theory}, pages 2944--2978. PMLR, 2024.

\bibitem[KWB19]{kunisky2019notes}
Dmitriy Kunisky, Alexander~S Wein, and Afonso~S Bandeira.
\newblock Notes on computational hardness of hypothesis testing: Predictions using the low-degree likelihood ratio.
\newblock {\em arXiv preprint arXiv:1907.11636}, 2019.

\bibitem[KZM25]{kovavcevic2025spectral}
Filip Kova{\v{c}}evi{\'c}, Yihan Zhang, and Marco Mondelli.
\newblock Spectral estimators for multi-index models: Precise asymptotics and optimal weak recovery.
\newblock {\em arXiv preprint arXiv:2502.01583}, 2025.

\bibitem[Li91]{li1991sliced}
Ker-Chau Li.
\newblock Sliced inverse regression for dimension reduction.
\newblock {\em Journal of the American Statistical Association}, 86(414):316--327, 1991.

\bibitem[LOSW24]{lee2024neuralnetworklearnslowdimensional}
Jason~D Lee, Kazusato Oko, Taiji Suzuki, and Denny Wu.
\newblock Neural network learns low-dimensional polynomials with sgd near the information-theoretic limit.
\newblock {\em Advances in Neural Information Processing Systems}, 37:58716--58756, 2024.

\bibitem[Mit15]{mityagin2015zero}
Boris Mityagin.
\newblock The zero set of a real analytic function.
\newblock {\em arXiv preprint arXiv:1512.07276}, 2015.

\bibitem[MOS03]{mossel2003learning}
Elchanan Mossel, Ryan O'Donnell, and Rocco~P Servedio.
\newblock Learning juntas.
\newblock In {\em Proceedings of the thirty-fifth annual ACM symposium on Theory of computing}, pages 206--212, 2003.

\bibitem[MR14]{montanari2014statistical}
Andrea Montanari and Emile Richard.
\newblock A statistical model for tensor pca.
\newblock {\em Advances in neural information processing systems}, 27, 2014.

\bibitem[RL24]{ren2024learning}
Yunwei Ren and Jason~D Lee.
\newblock Learning orthogonal multi-index models: A fine-grained information exponent analysis.
\newblock {\em arXiv preprint arXiv:2410.09678}, 2024.

\bibitem[RNWL25]{ren2025emergence}
Yunwei Ren, Eshaan Nichani, Denny Wu, and Jason~D Lee.
\newblock Emergence and scaling laws in sgd learning of shallow neural networks.
\newblock {\em arXiv preprint arXiv:2504.19983}, 2025.

\bibitem[SGF{\etalchar{+}}10]{sriperumbudur2010hilbert}
Bharath~K. Sriperumbudur, Arthur Gretton, Kenji Fukumizu, Bernhard Sch{\" o}lkopf, and Gert R.~G. Lanckriet.
\newblock Hilbert {Space} {Embeddings} and {Metrics} on {Probability} {Measures}.
\newblock {\em Journal of Machine Learning Research (JMLR)}, 11:1517--1561, 2010.

\bibitem[TDD{\etalchar{+}}24]{troiani2024fundamentallimitsweaklearnability}
Emanuele Troiani, Yatin Dandi, Leonardo Defilippis, Lenka Zdeborová, Bruno Loureiro, and Florent Krzakala.
\newblock Fundamental limits of weak learnability in high-dimensional multi-index models, 2024.

\bibitem[Vem10]{vempala2010learning}
Santosh~S Vempala.
\newblock Learning convex concepts from gaussian distributions with pca.
\newblock In {\em 51st Annual IEEE Symposium on Foundations of Computer Science}, pages 124--130, 2010.

\bibitem[VX12]{vempala2012structurelocaloptimalearning}
Santosh~S. Vempala and Ying Xiao.
\newblock Structure from local optima: Learning subspace juntas via higher order pca, 2012.

\bibitem[Xia08]{xia2008multiple}
Yingcun Xia.
\newblock A multiple-index model and dimension reduction.
\newblock {\em Journal of the American Statistical Association}, 103(484):1631--1640, 2008.

\bibitem[XTLZ02]{xia2002adaptive}
Yingcun Xia, Howell Tong, Wai~Keung Li, and Li-Xing Zhu.
\newblock An adaptive estimation of dimension reduction space.
\newblock {\em Journal of the Royal Statistical Society Series B: Statistical Methodology}, 64(3):363--410, 2002.

\bibitem[ZT15]{zheng2015interpolatingconvexnonconvextensor}
Qinqing Zheng and Ryota Tomioka.
\newblock Interpolating convex and non-convex tensor decompositions via the subspace norm.
\newblock {\em Advances in Neural Information Processing Systems}, 28, 2015.

\end{thebibliography}

\appendix

\section{Proofs of Section \ref{sec:leapgen}}
\label{sec:proofs_leapgen}

\begin{proof}[Proof of Lemma \ref{lem:density}]
This is a direct generalization of \cite[Lemma D.1]{damian2024computational}. The $k$-th Hermite expansion of the likelihood ratio, viewed as a function of the label $\bar{Y}_S$, is directly
\begin{align}
    \E_{\P_S}\left[\frac{d\P}{d\P_S}(\bar{Z}_S, \bar{Y}_S) h_k(\bar{Z}_S) | \bar{Y}_S \right] &= \E_{\P}[h_S(\bar{Z}_S) | \bar{Y}_S] = \zeta_{k,S}~,
\end{align}
and thus, in $L^2(\P_S)$, we have 
\begin{align}
    \frac{d\P}{d\P_S}(\bar{z}_S, \bar{y}_S) &= \sum_k \langle \zeta_{k,S}(\bar{y}_S), h_k{\bar{z}_S} \rangle~.
\end{align}
\end{proof}

\begin{proof}[Proof of Lemma \ref{lem:mutual}]
	By orthogonality of Hermite polynomials:
	\begin{align*}
		\chi^2(\P || \P_S) = \E_{\P_S}\qty[\frac{d\P}{d\P_S}[X,Y]^2] - 1 = \sum_{k \ge 1} \E_{Y,Z_S}\qty[\norm{\zeta_k(Y;Z_S)}^2] = \sum_{k \ge 1} \lambda_k^2(S).
	\end{align*}
\end{proof}

\begin{proof}[Proof of Proposition \ref{prop:variational_leapgen}]
    
    The first statement follows immediately from the definition. 
    To prove \ref{eq:genleap_equiv}, 
    we need to show that $\k \leq \max_j k(R_j,R_{j+1})$ for any flag $\mathcal{F}$. Let $\bar{S}$ the subspace associated with the generative leap $\k$, %
    and let $j'$ be the largest index such that 
    $R_{j'} \subseteq \bar{S}$.%
    
    We claim that for any $k$ and any pair of subspaces $T \subseteq T'$, we have $\textrm{span}(\Lambda_k(T)) \subseteq T' \cup \textrm{span}(\Lambda_k(T'))$. Indeed, writing $Y' \in T'$ as $Y'=(Y, \tilde{Y})$ with $Y \in T$ and $\tilde{Y} \in T'\setminus T$, we have $\zeta_{k,T}(Y) = \E_{\tilde{Y}} \zeta_{k, T'}(Y,\tilde{Y})$ when restricted to $(T')^\perp$ (a subset of $T^\perp$). 
    Now, suppose towards contradiction that $k_{j'}=k( R_{j'}, R_{j'+1}) < \k$. Since $R_{j'} \subseteq \bar{S}$, we have $\textrm{span}(\Lambda_{k}(R_{j'})) \subseteq \bar{S} \cup \textrm{span}(\Lambda_{k}(\bar{S}))$ for $k \leq k_{j'}$. But from the definition of $\k$ we have $\textrm{span}(\Lambda_{k}(\bar{S}))=\emptyset$ for $k \leq k_{j'}$, which implies that $R_{j'+1} \subseteq \bar{S}$, which is a contradiction. 

    Finally, we verify that the generative leap is invariant to rotation by observing that $\Lambda_k$ are covariant to rotations, and therefore their associated spans preserve the same dimensions for all $k$. 

\end{proof}

\begin{proof}[Proof of Proposition \ref{prop:gen_leap_label_transformation}]
    The proof is an extension of \cite[Prop 2.6]{damian2024computational}. 
    
    To prove $k(S)[\P] \leq \inf_{\mathcal{T} \in L^2( \P_{\bar{y}_S})} l(S)[(\mathrm{Id}_{{z}} \otimes \mathcal{T}_{{y}})_\# \P]~$, consider  $k < k(S)$ and any $\mathcal{T}\in L^2( \P_{\bar{y}})$. 
    We have 
    \begin{align}
        \E\left[\mathcal{T}(Y,Z_S) \h_k(\bar{Z}_S) | Z_S \right] &= \E_{Y}\left( \E[\mathcal{T}(Y,Z_S) \h_k(\bar{Z}_S)  |Y,Z_S ] \right) = \E_Y \left( \mathcal{T}(\bar{Y}_S) \zeta_{k,S}(\bar{Y}_S) \right) = 0~,
    \end{align}
    since $\zeta_{k,S} =0$. 
    To prove $k(S)[\P] \geq \inf_{\mathcal{T} \in L^2( \P_{\bar{y})}} l(S)[(\mathrm{Id}_{\bar{z}} \otimes \mathcal{T}_{\bar{y}})_\# \P]~$, consider $\mathcal{T}= (\zeta_{k(S),S})_{\beta}$, where $\beta$ is a multiindex  such that $(\zeta_{k(S),S})_{\beta} \neq 0$ (this $\beta$ must exist by definition of $k(S)$). We verify that
    \begin{align}
        \E\left( \mathcal{T}(\bar{Y}) H_{\beta}(\bar{Z}) \right) &= \E_{\bar{Y}} \left[ \mathcal{T}(\bar{Y}) (\zeta_{k(S),S})_\beta(\bar{Y}) \right] \\
        &= \E_{\bar{Y}} \left[| (\zeta_{k(S),S})_\beta(\bar{Y})|^2 \right] > 0~,
    \end{align}
    which shows that $\tilde{\zeta}_{k(S), S} \neq 0$ for the model with label transformation $\mathcal{T}$. 
\end{proof}

\begin{proof}[Proof of Proposition \ref{prop:gen_to_inf_leap}]

Consider the flag $\tilde{F} = \{ \emptyset, \tilde{S}_1, \ldots, \tilde{S}_J =\R^r\}$ associated with the leap information exponent. 
We claim that $k(\tilde{S}_j,\tilde{S}_{j+1}) \leq l(\tilde{S}_j)$ for all $j\in [J]$, or  
 equivalently that
$$\text{span}(\tilde{\Lambda}_{k}(\tilde{S}_j)) \subseteq \text{span}({\Lambda}_{k}(\tilde{S}_j)) $$
for $k \leq l(\tilde{S}_j)$. 
Indeed, observe that $\tilde{\zeta}_{k,S} = \E_{Y}[Y\zeta_{k,S}]$. 
As a consequence, from Proposition \ref{prop:variational_leapgen} 
we have that $\k \leq \max_j l(\tilde{S}_j) = l^\star$.

\end{proof}

\section{Proofs of Section \ref{sec:lowerbound}}
\label{sec:proofs_lower}

\begin{proof}[Proof of Theorem \ref{thm:lowerbound}]
    Let $\L_{\tilde{W}} := \frac{d\PP_{\tilde{W}}}{d\PP_0}$ denote the likelihood ratio conditioned on $\tilde{W}$. 
    We begin by computing the full likelihood ratio:
	\begin{align*}
		\L\qty((x_1,y_1),\ldots,(x_n,y_n))
		 & = \frac{\E_{\tilde{W}}\qty[\prod_{i=1}^n \PP_{\tilde{W}}[x_i,y_i]]}{\prod_{i=1}^n \PP[x_i]\PP[y_i]} = \E_{\tilde{W}}\qty[\prod_{i=1}^n \L_W(x_i,y_i)].
	\end{align*}
	Then by Lemma \ref{lem:density}, we can expand this as
	\begin{align*}
		\L
		 & = \E_{\tilde{W}}\qty[\prod_{i=1}^n \qty(\sum_{k \ge 0} \langle \zeta_k(\bar{y}_i), {\bf h}_k(\tilde{W}^\top \bar{x}_i)\rangle )].
	\end{align*}
	We will isolate the low degree part with respect to $\{\bar{x}_1,\ldots,\bar{x}_n\}$, which we denote by $\L_{\le D}$. To compute this, we need to switch the product and the summation:
	\begin{align*}
		\L
		 & = \E_{\tilde{W}}\qty[\sum_{p=0}^\infty \sum_{k_1 + \ldots + k_n = p} \qty(\prod_{i=1}^n \langle \zeta_{k_i}(\bar{y}_i), {\bf h}_{k_i}(\tilde{W}^\top \bar{x}_i ) \rangle )].
	\end{align*}
	We note that each term on the right hand side is a polynomial in $\bar{x}_1,\ldots,\bar{x}_n$ of degree $p$ which is orthogonal to all polynomials of degree less than $p$. Therefore $\L_{\le D}$ is given by:
	\begin{align*}
		\L_{\le D}
		 & = \E_{\tilde{W}}\qty[\sum_{p=0}^D \sum_{k_1 + \ldots + k_n = p} \qty(\prod_{i=1}^n \langle \zeta_{k_i}(\bar{y}_i), {\bf h}_{k_i}(\tilde{W}^\top \bar{x}_i) \rangle )].
	\end{align*}
	We can now use the orthogonality property of Hermite polynomials to compute the norms with respect to the null distribution $\PP_0$. If if $\tilde{W},\tilde{W}'$ are independent draws from the prior on $\tilde{W}$ then:
	\begin{align*}
		\norm{\L_{\le D}}_{L^2(\PP_0)}^2
		 & = \E_{\tilde{W},\tilde{W}'}\qty[\sum_{p=0}^D \sum_{k_1 + \ldots + k_n = p} \qty(\prod_{i=1}^n \mathbb{E}_{\PP_0}\langle \zeta_{k_i}(\bar{y}_i) \otimes \zeta_{k_i}(\bar{y}_i) , {\bf h}_{k_i}(\tilde{W}^\top \bar{x}_i) \otimes {\bf h}_{k_i}((\tilde{W}')^\top \bar{x}_i) \rangle )] \\
        & = \E_{\tilde{W},\tilde{W}'}\qty[\sum_{p=0}^D \sum_{k_1 + \ldots + k_n = p} \qty(\prod_{i=1}^n \langle \E [\zeta_{k_i} \otimes \zeta_{k_i}] , \E[{\bf h}_{k_i}(\tilde{W}^\top \bar{x}_i) \otimes {\bf h}_{k_i}((\tilde{W}')^\top \bar{x}_i)] \rangle )]~. 
	\end{align*}
    For a pair $\Sigma, \tilde{\Sigma}$ of operators where $\Sigma$ is PSD, observe that 
    \begin{align*}
        |\langle \Sigma, \tilde{\Sigma} \rangle|  & \leq \Tr{\Sigma} \| \tilde{\Sigma}\|_{\mathrm{op}}~,
    \end{align*}
    thus 
    \begin{align*}
        \norm{\L_{\le D}}_{L^2(\PP_0)}^2
		 & \leq \E_{\tilde{W},\tilde{W}'}\qty[\sum_{p=0}^D \sum_{k_1 + \ldots + k_n = p} \qty(\prod_{i=1}^n \lambda_k^2 \left\| \E[{\bf h}_{k_i}(\tilde{W}^\top \bar{x}_i) \otimes {\bf h}_{k_i}((\tilde{W}')^\top \bar{x}_i)] \right\| )]~.
    \end{align*}
    Now, let $M = \tilde{W}^\top \tilde{W}' \in \R^{r \times r}$. We have the following control on the Hermite correlation term:
    \begin{lemma}
    \label{lem:hermite_correls}
    \begin{align}
        \left\| \E[{\bf h}_{k}(\tilde{W}^\top \bar{x}) \otimes {\bf h}_{k}((\tilde{W}')^\top \bar{x})] \right\|_{\mathrm{op}} &= \| M \|_{\mathrm{op}}^k~.
    \end{align}        
    \end{lemma}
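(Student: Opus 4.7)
\textbf{Proof Plan for Lemma \ref{lem:hermite_correls}.} The plan is to reduce everything to a single generating function identity and then use polarization to identify the tensor $T := \E[\h_k(\tilde W^\top \bar x) \otimes \h_k((\tilde W')^\top \bar x)]$, viewed as an operator on $(\R^r)^{\otimes k}$, with the tensor power $M^{\otimes k}$ on symmetric tensors.

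First, I would observe that $z := \tilde W^\top \bar x$ and $z' := (\tilde W')^\top \bar x$ are jointly Gaussian in $\R^r$ with $\E[z z^\top] = \E[z'(z')^\top] = I_r$ (since $\tilde W, \tilde W' \in \mathcal{S}(r, d-r_0)$) and cross-covariance $\E[z(z')^\top] = \tilde W^\top \tilde W' = M$. Recall that the normalized Hermite tensor has generating function
\begin{equation*}
    \sum_{k\ge 0} \frac{1}{\sqrt{k!}} \langle u^{\otimes k}, \h_k(z) \rangle = \exp\!\bigl(u\cdot z - \tfrac{1}{2}\|u\|^2\bigr) ~,
\end{equation*}
for any $u \in \R^r$. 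Multiplying the generating functions for $(u, z)$ and $(v, z')$ and taking expectation yields $\E[\exp(u\cdot z + v\cdot z' - \tfrac{1}{2}\|u\|^2 - \tfrac{1}{2}\|v\|^2)] = \exp(u^\top M v)$. Matching coefficients of $t^k$ on both sides (equivalently, using orthogonality of distinct-degree Hermite components) gives the key identity
\begin{equation*}
    \E\bigl[\langle u^{\otimes k}, \h_k(z)\rangle\, \langle v^{\otimes k}, \h_k(z')\rangle \bigr] = (u^\top M v)^k = \langle u^{\otimes k}, M^{\otimes k} v^{\otimes k}\rangle_F ~.
\end{equation*}

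Next, I would use polarization. Both sides of the identity above are symmetric $2k$-linear forms in $(u, v)$ (viewing each as $k$ copies), and they agree on the diagonal $u_1 = \cdots = u_k = u$, $v_1 = \cdots = v_k = v$. By the polarization identity for symmetric multilinear forms, they agree as bilinear forms on $\mathrm{Sym}^k(\R^r) \times \mathrm{Sym}^k(\R^r)$. Since $\h_k$ is symmetric, $T$ sends anti-symmetric tensors to $0$ and is supported on $\mathrm{Sym}^k$; therefore the operator norm of $T$ on $(\R^r)^{\otimes k}$ equals the operator norm of its restriction to symmetric tensors, which coincides with that of $M^{\otimes k}|_{\mathrm{Sym}^k}$.

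Finally, I would identify $\|M^{\otimes k}|_{\mathrm{Sym}^k}\|_{\op} = \|M\|_{\op}^k$. The operator $(M^{\otimes k})^\top M^{\otimes k} = (M^\top M)^{\otimes k}$ has top eigenvalue $\sigma_1(M)^{2k}$ with top eigenvector $v_1^{\otimes k}$, where $v_1$ is the top right singular vector of $M$. Since $v_1^{\otimes k}$ is symmetric, this maximizer already lies in $\mathrm{Sym}^k$, so the operator norm restricted to $\mathrm{Sym}^k$ is $\sigma_1(M)^k = \|M\|_{\op}^k$, which concludes the proof. The only delicate point is the polarization step, and specifically checking that the sup is indeed attained on $\mathrm{Sym}^k$ rather than being lost to the projection onto symmetric tensors; this is handled cleanly by the observation that the top singular vector of $M^{\otimes k}$ is itself symmetric.
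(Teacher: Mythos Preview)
Your proof is correct and takes a genuinely different route from the paper. The paper identifies the matrix $\mathbf{M}$ with the ``averaging operator'' $\mathsf{A}_M$ restricted to degree-$k$ harmonics and invokes an external decomposition result (their reference to Corollary~2.8 of Bietti et al.) to write $\mathbf{M} = \sum_{|\beta|=k} \lambda^{\beta} H_\beta(U) \otimes H_\beta(V)$, from which the operator norm $\lambda_{\max}^k$ is read off. Your argument is instead fully self-contained: you derive the rank-one identity $\langle u^{\otimes k}, T v^{\otimes k}\rangle = (u^\top M v)^k$ directly from the generating function, use polarization together with the symmetry of $\h_k$ to identify $T$ with $P_{\mathrm{sym}} M^{\otimes k} P_{\mathrm{sym}}$, and then observe that the top singular vectors $u_1^{\otimes k}, v_1^{\otimes k}$ of $M^{\otimes k}$ are already symmetric, so restricting to $\mathrm{Sym}^k$ does not lower the operator norm. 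The two approaches ultimately encode the same structure (the paper's $\lambda^\beta$ are precisely the singular values of $M^{\otimes k}$ on $\mathrm{Sym}^k$), but yours avoids the external reference and makes the link to tensor powers of $M$ more explicit, at the cost of spelling out the polarization and symmetry bookkeeping.
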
 
    Let $z$ be a random variable with distribution $\| \tilde{W}^\top \tilde{W}'\|$, where $\tilde{W},\tilde{W}'$ are drawn independently from the uniform prior on $\tilde{W}$, and let $\mathcal{P}_{\le D}$ be the projection operator onto polynomials of degree at most $D$ in $z$. Note that $z$ is subgaussian satisfying $\PP\left(b\sqrt{\frac{1}{d}} \leq z \leq a\sqrt{\frac{1}{d}} \right) \leq 1 - c_r b - \bar{c}_r e^{-a^2/4} $ for explicit constants $c_r, \bar{c}_r$; see eg \cite[Lemma 3.14]{bietti2023learning}.   
    Then we can upper bound the above expression as:
    \begin{align*}
		\norm{\L_{\le D}}_{L^2(\PP_0)}^2
		 & \leq \E_z \qty[\mathcal{P}_{\le D} \qty[\qty(\sum_{k \ge 0} \lambda_k^2 z^k)^n]].
	\end{align*}
    By linearity of expectation and of the projection operator $\mathcal{P}_{\le D}$, we can expand this using the binomial theorem:
    \begin{align*}
		\norm{\L_{\le D}}_{L^2(\PP_0)}^2
		 & \leq \sum_{j \ge 0} \binom{n}{j} \E\qty[\mathcal{P}_{\le D}\qty[\qty(\sum_{k \ge \k} \lambda_k^2 z^k)^j]].
	\end{align*}
    
    We can further upper bound this expression by using that $\lambda_k^2 \le \binom{r+k-1}{k}$ (Lemma \ref{lem:zeta_p_bound}). Plugging this in for $k \geq \k$ gives:
    \begin{align*}
		\norm{\L_{\le D}}_{L^2(\PP_0)}^2 -1 
		 &\lesssim \sum_{j=1}^{\lfloor D/\k \rfloor} \binom{n}{j} \E_z\qty[\mathcal{P}_{\le D}\qty[\qty(\sum_{k\geq \k} k^{r-1} z^k)^j]] \\
         & \lesssim \sum_{j=1}^{\lfloor D/\k \rfloor} \binom{n}{j} \E_z\qty[\mathcal{P}_{\le D}\qty[\qty(\k)^{j(r-1)} z^{j \k} (1-z)^{-j r}]] \\
         & \lesssim \sum_{j=1}^{\lfloor D/\k \rfloor} \binom{n}{j} \qty[\qty(\k)^{j(r-1)} \E_z [z^{j \k}] ]~,
	\end{align*}    
    where the last line follows from Lemma \ref{lem:subgaussian_upper_bound}.
Finally, since $z$ is $\Theta(\sqrt{1/d})$-subgaussian, we have 
$\E [ z^{j \k}] \lesssim (j \k /d)^{j \k/2} $. 

Now, if $n =O( d^{\k/2-\gamma})$ with $\gamma>0$ and $D = O((\log d)^2)$, we have
\begin{align*}
    \norm{\L_{\le D}}_{L^2(\PP_0)}^2 -1 & \lesssim \sum_{j=1}^{\lfloor D/\k \rfloor} \binom{n}{j} \qty[\qty(\k)^{j(r-1)} (j \k /d)^{j \k/2}] \\
    &\lesssim n^{D/\k} \k^{(r-1)D/\k} (D /d)^{D/2} \\
    &= \k^{(r-1)D/\k} (D)^{D/2} ( n^{1/\k} d^{-1/2} )^{D} \\
    &= o_d(1)~.
\end{align*}

\end{proof}

\begin{proof}[Proof of Lemma \ref{lem:hermite_correls}]
    Let $\mathbf{M} \in \R^{d^{k} \times d^{k}}$ be the matrix representation of 
    $\E[{\bf h}_{k}(\tilde{W}^\top \bar{x}) \otimes {\bf h}_{k}((\tilde{W}')^\top \bar{x})]$. Let $\mathcal{H}_k \subset L^2(\R^r, \gamma)$ be the space spanned by harmonics of degree $k$. Observe that for $f,\tilde{f} \in \mathcal{H}_k$, $f = \sum_{|\beta|=k} c_\beta h_\beta$, $\tilde{f} = \sum_{|\beta|=k} \tilde{c}_\beta h_\beta$ with $c_\beta=\langle f, h_{\beta} \rangle$, $\tilde{c}_\beta=\langle \tilde{f}, h_{\beta} \rangle$ we have 
    \begin{align}
        c^\top \mathbf{M} \tilde{c} &= \langle P_W f, P_{W'} \tilde{f} \rangle_{\gamma_d}~,
    \end{align}
    where $P_W f(x) = f(W^\top x)$. We deduce that $\mathbf{M}$ is the `averaging operator' $\mathsf{A}_M$ from \cite[Definition 1.1]{bietti2023learning}, restricted at harmonic $k$. From the SVD of $M=U \Lambda V^\top$, we have \cite[Corollary 2.8]{bietti2023learning} that 
    \begin{align}
        \mathbf{M} &= \sum_{|\beta|=k} \lambda^{\beta} H_\beta(U) \otimes H_\beta(V)~,
    \end{align}
    with $\lambda^\beta = \prod_j \lambda_j^{\beta_j}$. 
    We thus conclude that $\| \mathbf{M} \|_{\mathrm{op}} = \lambda_{\max}^k = \|M\|^k$.
\end{proof}

\begin{lemma}
    \label{lem:subgaussian_upper_bound}
    Let $z = \| W^\top W'\|_{\mathrm{op}}$, where $W, W'$ are drawn iid from the Haar measure of $\mathcal{S}(r,d)$. Then, for $l, \tilde{l} \leq d/4$, we have 
\begin{align}
    \E_z \left[z^l (1-z)^{-\tilde{l}} \right] &\lesssim \E_{z} \left[ z^l \right]~.
\end{align}
\end{lemma}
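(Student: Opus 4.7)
The plan is to split $\E_z[z^l(1-z)^{-\tilde l}]$ according to the size of $z$, leveraging the fact that $z$ concentrates on the scale $\sqrt{r/d}$ whereas $(1-z)^{-\tilde l}$ only becomes problematic when $z$ approaches $1$. Explicitly, I would decompose
\[
\E_z[z^l(1-z)^{-\tilde l}] = \E_z[z^l(1-z)^{-\tilde l}\mathbf{1}_{z\le 1/2}] + \E_z[z^l(1-z)^{-\tilde l}\mathbf{1}_{z>1/2}] =: I_1 + I_2,
\]
and argue that $I_1 \lesssim \E_z[z^l]$ while $I_2$ is negligible by comparison.

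For the bulk contribution $I_1$, the elementary estimate $(1-z)^{-\tilde l}\le 2^{\tilde l}$ on $\{z\le 1/2\}$ immediately yields $I_1 \le 2^{\tilde l}\,\E_z[z^l]$. A finer estimate follows from Taylor expanding $(1-z)^{-\tilde l}=\sum_{j\ge 0}\binom{\tilde l+j-1}{j}z^j$ and bounding each term using the subgaussian moment estimate $\E_z[z^p]\lesssim (Cp/d)^{p/2}$; when $\tilde l\sqrt{l/d}=o(1)$ --- which is exactly the relevant regime in the application to \Cref{thm:lowerbound} (where $l,\tilde l = O(\polylog(d))$) --- the $j=0$ term dominates and one recovers an absolute constant in front of $\E_z[z^l]$.

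For the tail $I_2$, the key input is the distribution of $z$. Since $W,W'\in\mathcal{S}(r,d)$ are independent Haar samples, classical matrix-integration (namely the Jacobian of the SVD of $W^\top W'$, or equivalently the joint density of the principal angles between two random $r$-dimensional subspaces of $\R^d$) gives that the density of $z=\cos\theta_1$ near $1$ decays like $(1-z^2)^{(d-2r-1)/2}$ up to a polynomial prefactor. Using $z^l\le 1$ and integrating,
\[
I_2\lesssim \int_{1/2}^{1}(1-z)^{(d-2r-1)/2-\tilde l}\,dz \;\lesssim\; \exp(-cd)
\]
for an absolute $c>0$, provided $\tilde l \le d/4$ and $d \gg r$ (so that the exponent is at least $d/5$ and the integral is dominated by the endpoint at $1/2$). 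Finally, via Jensen or a direct Beta-moment computation $\E_z[z^l]\gtrsim (cl/d)^{l/2}\ge 2^{-c'd/4}$ for $l\le d/4$, which dominates $I_2$ and completes the bound.

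The main obstacle is making the tail density estimate precise in the multi-index case $r>1$; this requires invoking the joint density of principal angles between two Haar-random $r$-subspaces (a matrix-variate Beta/Jacobi distribution) and controlling its behavior at the boundary $z=1$. For $r=1$ this reduces to the fact that $z^2\sim\mathrm{Beta}(1/2,(d-1)/2)$, which is immediate; for $r>1$ it is standard but requires a bit of random-matrix bookkeeping. Once this structural fact is in hand, the rest of the argument is routine integration combined with the elementary moment bounds above.
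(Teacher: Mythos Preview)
Your bulk/tail split is a valid route, but it differs from the paper's argument, which is shorter. The paper never splits: it simply observes that $(1-z)^{-\tilde l}\le 2^{\tilde l}(1-z^2)^{-\tilde l}$, and then uses the explicit joint density of singular values
\[
p_{r,d}(\lambda_1,\dots,\lambda_r)\;\propto\;\prod_{i<j}(\lambda_i^2-\lambda_j^2)\prod_i(1-\lambda_i^2)^{(d-2r-1)/2}
\]
to absorb the factor $(1-\lambda_1^2)^{-\tilde l}$ directly into the density. This turns the expectation under $p_{r,d}$ into the same expectation under $p_{r,\,d-2\tilde l}$, up to the ratio $Z_{r,d-2\tilde l}/Z_{r,d}$ of normalizing constants. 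In other words, the paper gets $\E_z[z^l(1-z)^{-\tilde l}]\le 2^{\tilde l}\frac{Z_{r,d-2\tilde l}}{Z_{r,d}}\E_{\tilde z}[\tilde z^l]$ in one line, with $\tilde z$ the operator norm in dimension $d-2\tilde l$, and then invokes $d\gg 1$. This sidesteps both your separate tail analysis and your lower bound on $\E_z[z^l]$.

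One point of caution in your sketch: the displayed bound $I_2\lesssim\int_{1/2}^1(1-z)^{(d-2r-1)/2-\tilde l}\,dz$ silently drops the normalizing constant $Z_{r,d}^{-1}$ (polynomial in $d$) and, more importantly, the factor $(1+z)^{(d-2r-1)/2}$ coming from $(1-z^2)=(1-z)(1+z)$, which is of size $\sim 2^{d/2}$ on $[1/2,1]$. If you keep track of these, the tail bound you actually obtain is of order $C_{r,d}\cdot 2^{\tilde l}/d$ rather than $e^{-cd}$, so $I_2$ does not dominate the bulk and your lower-bound-on-$\E[z^l]$ step is not needed. In the application regime $l,\tilde l=O(\polylog d)$ both routes yield the same $2^{\tilde l}$-type loss and both are adequate; neither your argument nor the paper's quite delivers an absolute constant uniformly over the full stated range $\tilde l\le d/4$ (indeed, for $r=1$, $l=0$, $\tilde l=d/4$ a direct Laplace computation shows the left side grows like $e^{cd}$).
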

\begin{proof}
      The proof is adapted from \citep[Lemma 26]{damian2023smoothing} to the $r>1$ setting. From \cite[Eq (197)]{bietti2023learning}, the joint distribution of singular values $0 \leq \lambda_r \leq \lambda_{r-1} \dots \leq \lambda_1$ of $M$ is given by 
      \begin{align}
          p_{r,d}(\lambda_1, \ldots , \lambda_r) &= Z_{r,d}^{-1} \prod_{i < j} (\lambda_i^2 - \lambda_j^2) \prod_{i=1}^r (1-\lambda_i^2)^{(d-2r-1)/2} {\bf 1}(0 \leq \lambda_r \leq \dots \leq \lambda_1 \leq 1)~, 
      \end{align}
      with $Z_{r,d} = \frac{\Gamma_r^2(r/2)}{\pi^{r^2/2}} \frac{\Gamma_r((d-r)/2)}{\Gamma_r(d/2)}$. 
      We have
      \begin{align}
          \E \left( z^l (1-z)^{-\tilde{l}} \right) &= \int \lambda_1^l (1-\lambda_1)^{-\tilde{l}} p_{r,d}(\lambda_1, \ldots , \lambda_r) d\lambda_1 \dots d\lambda_r
      \end{align}
    From $\lambda_1 \leq 1$ we have 
    $1-\lambda_1^2  \leq 2 (1-\lambda_1)$
    so $(1-\lambda_1)^{-\tilde{l}} \leq 2^{\tilde{l}} (1-\lambda_1^2)^{-\tilde{l}} $, thus 
      \begin{align}
          \E \left( z^l (1-z)^{-\tilde{l}} \right) &\leq 2^{\tilde{l}} \int \lambda_1^l (1-\lambda_1^2)^{-\tilde{l}} p_{r,d}(\lambda_1, \ldots , \lambda_r) d\lambda_1 \dots d\lambda_r \\
          &=  2^{\tilde{l}} Z_{r,d}^{-1} \int \lambda_1^l \prod_{i < j} (\lambda_i^2 - \lambda_j^2) \prod_{i=1}^r (1-\lambda_i^2)^{(d-2r-1-2\tilde{l})/2} {\bf 1}(0 \leq \lambda_r \leq \dots \leq \lambda_1 \leq 1) d\lambda_1 \dots d\lambda_r \\
          &= 2^{\tilde{l}} \frac{Z_{r, d-2\tilde{l}}}{Z_{r, d}} \E_{\tilde{z}} [\tilde{z}^l]~,
      \end{align}
        where $\tilde{z}$ is the largest singular value of $M=W^\top W'$ with $W, W' \in \mathcal{S}(r, d-2\tilde{l})$. For $d \gg 1$, we thus conclude that 
        $\E \left( z^l (1-z)^{-\tilde{l}} \right) \lesssim \E_{{z}} [{z}^l]$. 
\end{proof}

\section{Proofs of Section \ref{sec:upper}}
\label{sec:proofs_upper}

\UStatisticExpectation*
\begin{proof}
    Let $\mathcal{K}$ be the kernel operator:
    \begin{align*}
        (\mathcal{K} f)(y) = \E_Y[K(Y,y)f(y)].
    \end{align*}
    Using that $\ev{\E[\bs{h}_k(X)|Y],v^{\otimes k}} = \ev{\zeta_k(Y),u^{\otimes k}}$ where $u = U^\star v \in \R^r$, we have that for any $v$:
    \begin{align*}
        v^\top \E M_n v &= u^\top \mathbb{E}\qty[\mat_{(1,k-1)}[\zeta_k(Y)]\mat_{(1,k-1)}[\zeta_k(Y')]^\top K(Y,Y')] u \\
        &= \ev{\zeta_k(\cdot)[u], \mathcal{K} \zeta_k(\cdot)[u]}.
    \end{align*}
    First, because $K(y,y) \le 1$ we have that $\norm{\mathcal{K}}_{op} \le 1$ so this is upper bounded by
    \begin{align*}
        \E_Y\norm{\zeta_k(Y)[u]}^2 \le \lambda_k^2 \norm{u}^2.
    \end{align*}
    Therefore $\E M_n \preceq C(\P,K) \Pi_S$ with $C(\P,K) = \lambda_k^2$. Next, let $v \in S$ with $\norm{v}=1$ so that $\zeta_k(Y)[v] \ne 0 \in L^2(\P_y)$. Then because $K$ is injective we have that
    \begin{align*}
        c(v) := \ev{\zeta_k(\cdot)[v], \mathcal{K} \zeta_k(\cdot)[v]} > 0.
    \end{align*}
    Therefore by compactness, if $C(\P,K)$ denotes the minimum value of $c(v)$ over the unit vectors in $S$, we have that $C(\P,K) > 0$. In addition we have that $\E M_n \succeq C(\P,K) \Pi_S$ which completes the proof.
\end{proof}

\UStatisticConcentration*
\begin{proof}
    The cases $k=1,2$ are deffered to \Cref{prop:U_statistic_concentration_k12} so we will assume that $k > 2$. Note that by the standard decoupling argument (\cite[Theorem 3.4.1]{delaPena1999}), it suffices to control the tails of the decoupled $U$-statistic:
    \begin{align*}
        M_n := \frac{1}{n(n-1)} \sum_{i \ne j} \phi(x_i) \phi(x_j')^T K(y_i,y_j')
    \end{align*}
    where $\{(x_i',y_i')\}_{i=1}^n$ are an i.i.d. copy of $\{(x_i,y_i)\}_{i=1}^n$. We will begin by applying \Cref{lem:concentration_hammer} with respect to the randomness in $\{(x_i,y_i)\}_{i=1}^n$, treating the replicas $\{(x_i',y_i')\}_{i=1}^n$ as fixed. Define
    \begin{align*}
        V_i'(Y) := \frac{1}{n-1}\sum_{j \ne i} \phi(x_j') K(Y,y_j')
    \end{align*}
    so that
    \begin{align*}
        M_n = \frac{1}{n} \sum_i \phi(x_i) V_i'(y_i)^T = \frac1n \sum_i Z_i
    \end{align*}
    where $Z_i = \phi(x_i) V_i'(y_i)^T$. Then:
    \begin{align*}
        \norm{Z_i}_{op}^2 \le \norm{Z_i}_F^2 = \sum_{a,b=1}^d \ev{\phi(X)^T e_a, V_i'(Y)^T e_b}^2.
    \end{align*}
    Taking $p/2$ norms and using \Cref{lem:low_deg_dependency_bound} gives for $p \ge r \log r$:
    \begin{align*}
        \norm{\norm{Z_i}_{op}}_p^2
        &= \norm{\norm{Z_i}_{op}^2}_{p/2} \\
        &\le \sum_{a,b=1}^d \norm{\ev{\phi(X)^T e_a, V'_i(Y)^T e_b}^2}_{p/2} \\
        &= \sum_{a,b=1}^d \norm{\ev{\phi(X)^T e_a, V'_i(Y)^T e_b}}_p^2 \\
        &\lesssim_k p^k \sum_{a,b=1}^d \norm{\norm{V'_n(Y)^T e_b}_F}_{2p}^2 \\
        &\le p^k d^2 \norm{\norm{V'_n(Y)}_{op}}_{2p}^2.
    \end{align*}
    Next we will compute $\sigma_\ast(Z_i)$:
    \begin{align*}
        \sigma_\ast(Z_i)^2
        = \sup_{\norm{u}=\norm{v}=1}\E\qty[\ev{\phi^T u, V'_i(Y)^T v}^2] \lesssim_k \norm{\norm{V'_i(Y)}_{op}}_{4}^2
    \end{align*}
    by the same argument as above. Therefore applying \Cref{lem:concentration_hammer} gives that if $c = \frac{k-2}{2(k+4)} \ge \frac{1}{14}$ then for $p \le d^c$,
    \begin{align*}
        \norm{\norm{M_n - \E M_n}_{op}}_p \lesssim \norm{\norm{V'_i(Y)}_{op}}_{2p} \sqrt{\frac{d}{n}}.
    \end{align*}
    Now let $\E'$ denote the expectation with respect to the replicas $\{(x_i',y_i')\}_{i=1}^n$. Then by \Cref{lem:concentration_hammer}:
    \begin{align*}
        (\E'\E\norm{U - \E U}_{op}^p)^{1/p}
        \lesssim \sup_i \qty(\E' \norm{\norm{V'_i(Y)}_{op}}_{2p}^p)^{1/p} \sqrt{\frac{d}{n}}
        = \sup_i \qty(\E_Y \E' \norm{V'_i(Y)}_{op}^{2p})^{\frac{1}{2p}} \sqrt{\frac{d}{n}}.
    \end{align*}
    Now we decompose:
    \begin{align*}
        \norm{V_i'(Y)}_{op} \le \norm{\E' V_i'(Y)} + \norm{V_i'(Y) - \E V_i'(Y)}.
    \end{align*}
    Because $|K(Y,y_j')| \le 1$, we can use \Cref{lem:sum_phi_operator_norm} and a standard symmetrization argument to show that the second term has $p$-norms bounded by $O(\sqrt{\max(d,d^{k-1})/n})$ for $p < d^c$. For the first term we have
    \begin{align*}
        \norm{\E' V_i'(Y)}_{op} \le \norm{\E' V_i'(Y)}_{F} = \norm{\E_{Y'} \zeta_k(Y') K(Y,Y')}_F \le \sqrt{\E_{Y'}[\norm{\zeta_k(Y')}_F^2]} \le r^{k/2}.
    \end{align*}
    Combining everything and applying Markov's inequality gives that with probability at least $1-\poly(n)e^{-d^c}$,
    \begin{align*}
        \norm{U_n - \E U_n}_{op} \lesssim \qty[r^{k/2} +\sqrt{\frac{\max(d,d^{k-1})}{n}}] \sqrt{\frac{d}{n}} \lesssim \frac{d^{k/2}}{n} + r^{k/2} \sqrt{\frac{d}{n}}.
    \end{align*}
\end{proof}

\begin{proposition}\label{prop:U_statistic_concentration_k12}
    If $k \le 2$ and $n \ge d$, we have with probability at least $1-2ne^{-d}$,
    \begin{align*}
        \norm{U_n - \E U_n}_{op} \le r^{k/2} \sqrt{\frac{d}{n}}.
    \end{align*}
\end{proposition}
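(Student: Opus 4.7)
The plan is to follow the decoupling strategy of \Cref{thm:U_statistic_concentration} but to replace the heavy-tailed universality bound of \Cref{lem:concentration_hammer} (which carries polylogarithmic overhead) by a sharper Gaussian-matrix concentration bound available when $k \le 2$. By the classical de la Peña--Giné decoupling inequality \cite[Theorem 3.4.1]{delaPena1999}, it suffices to control
\[
\tilde{M}_n := \frac{1}{n(n-1)} \sum_{i \ne j} \phi(x_i) \phi(x_j')^\top K(y_i, y_j'),
\]
where $\{(x_j',y_j')\}_{j=1}^n$ is an independent copy of the dataset. Conditioning on the primed samples, I would write $\tilde{M}_n = \frac{1}{n}\sum_i Z_i$ with $Z_i = \phi(x_i) V_i'(y_i)^\top$ and $V_i'(y) = \frac{1}{n-1}\sum_{j \ne i} \phi(x_j') K(y, y_j')$, so the summands are conditionally independent given the primed samples.

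The first step is a uniform operator-norm bound on $V_i'(y)$. Its conditional mean equals $\E_{Y'}[\phi(X') K(y, Y')]$, whose Frobenius norm is at most $r^{k/2}$ (combining $\|\zeta_k(y)\|_F \le r^{k/2}$ with $|K| \le 1$). The centered part is a sub-Gaussian ($k=1$) or sub-exponential ($k=2$) sum of i.i.d.\ matrices, so Gaussian concentration (resp.\ Hanson--Wright) yields a fluctuation of order $\sqrt{d^{k-1}/n}$. Under $n \ge d$ this implies $\|V_i'(y_i)\|_{\op} \lesssim r^{k/2}$ simultaneously for all $i$ with probability at least $1 - n e^{-d}$.

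On this event, the centered sum $\tilde{M}_n - \E \tilde{M}_n = \frac{1}{n}\sum_i (Z_i - \E Z_i)$ is a linear ($k=1$) or degree-two ($k=2$) Gaussian polynomial in the i.i.d.\ $x_i$'s whose coefficient matrices are bounded by $O(r^{k/2})$ in operator norm. Direct Gaussian-matrix concentration---Gordon's inequality for $k=1$, and a Hanson--Wright / sample-covariance style bound for $k=2$---then gives
\[
\|\tilde{M}_n - \E \tilde{M}_n\|_{\op} \lesssim r^{k/2} \sqrt{d/n}
\]
with conditional probability at least $1 - e^{-d}$, \emph{without} a $\sqrt{\log d}$ factor. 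A union bound over the two failure events produces the claimed probability $1 - 2n e^{-d}$.

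The main obstacle is precisely avoiding the logarithmic overhead at the matrix-concentration step, and this is where the restriction $k \le 2$ is essential: for $k > 2$, the embedding $\phi(x)$ is only sub-exponential of order $\psi_{2/k}$, so the matrix-polynomial concentration inevitably picks up $\log d$ factors and one must invoke the universality machinery of \Cref{lem:concentration_hammer}. In the regime $k \le 2$, $\phi$ is at most $\psi_1$ and the coefficient matrices become deterministic after conditioning on the primed samples, so log-free Gaussian concentration applies directly and recovers the sharp constant in front of $\sqrt{d/n}$.
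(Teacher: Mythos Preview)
Your overall architecture---decoupling, writing $\tilde M_n=\frac1n\sum_i\phi(x_i)V_i'(y_i)^\top$, bounding $\|V_i'\|_{\op}\lesssim r^{k/2}$ with high probability over the primed samples, then concentrating the sum---is exactly the paper's. The first two steps are fine.

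The gap is in the last step. You claim that, after conditioning on the primed samples, $\tilde M_n-\E\tilde M_n$ is a degree-$k$ Gaussian polynomial in the $x_i$'s with \emph{deterministic} coefficient matrices, and then invoke Gordon/Hanson--Wright. This is not correct: $V_i'(y_i)$ still depends on $y_i$, which in turn depends on $x_i$ through the multi-index model. So the ``coefficient matrix'' $V_i'(y_i)$ is random and correlated with $\phi(x_i)$, and neither Gordon's inequality nor Hanson--Wright applies directly. (One could rescue the idea by further conditioning on $z_i=(W^\star)^\top x_i$, which fixes $y_i$ and leaves $\phi(x_i)$ a polynomial in the remaining $d-r$ Gaussian coordinates, but this requires additional work you have not indicated.)

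The paper handles this dependence differently: it introduces a truncation indicator $\rho_i=\1\{\|V_i'(y_i)\|_{\op}\le R\}$, and then shows that each \emph{scalar} $u^\top\phi_i V_i'(y_i)^\top v\,\rho_i$ is sub-exponential with parameter $\lesssim R$ by invoking the low-degree hypercontractivity bound (\Cref{lem:low_deg_dependency_bound}), which is precisely designed to control moments of $\langle\h_k(X),F(Y)\rangle$ when $F$ depends on the label. It then applies scalar Bernstein and union-bounds over a $\tfrac14$-net of $S^{d-1}\times S^{d-1}$ to pass to the operator norm. Finally, the truncation bias $\|\E\widetilde M_n-\E M_n\|$ is shown to be negligible by Cauchy--Schwarz against $\PP[\|V_i'\|>R]$. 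This route avoids any $\log d$ factor because the sub-exponential constant is $O(R)=O(r^{k/2})$ and the net contributes only an additive $d$ inside the Bernstein exponent.
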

\begin{proof}
    We can use \cite[Theorem 3.4.1]{delaPena1999} to reduce the problem to concentrating:
    \begin{align*}
        M_n := \frac{1}{n(n-1)}\sum_{i \ne j} \phi_i {\phi_j'}^T K(y_i,y_j').
    \end{align*}
    where $\mathcal{D}' = \{(x_i',y_i')\}_{i=1}^n$ are an i.i.d. copy of $\mathcal{D} = \{(x_i,y_i)\}_{i=1}^n$.
    If we define $V_i'(y) := \frac{1}{n-1}\sum_{j \ne i} \phi_j' K(y,y_j')$, we can rewrite this as:
    \begin{align*}
        M_n = \frac{1}{n} \sum_i \phi_i V_i'(y_i)^T.
    \end{align*}

    Now let $R$ be a truncation radius to be chosen later and let $\rho_i := \1_{\norm{V_i'(y_i)}_{op} \le R}$. Then define:
    \begin{align*}
        \widetilde M_n = \frac{1}{n} \sum_i \bs{h}_2(x_i) V_i'(y_i) \rho_i.
    \end{align*}
    First, we have that for any unit vectors $u,v$ and any $p \ge r^2$,
    \begin{align*}
        \E_\mathcal{D}\qty[(u^T \phi_i V_i'(y_i)^T v)^p \rho_i]^{1/p} \le (2p-1) \binom{r+2p}{r}^{\frac{1}{2p}} \E_\mathcal{D}[(u^T V_i'(y_i)^T v)^{2p} \rho_i]^\frac{1}{2p} \lesssim R p.
    \end{align*}
    Therefore the summands in $u^T \widetilde{M_n} v$ are subexponential so by Bernstein's inequality we have that with probability at least $1-\delta$ over the randomness in $\mathcal{D}$,
    \begin{align*}
        \abs{u^T [\widetilde{M}_n - \E \widetilde{M}_n] v} \lesssim R\qty[\sqrt{\frac{\log(2/\delta)}{n}} + \frac{\log(2/\delta)}{n}].
    \end{align*}
    We can now union bound over a $1/4$-net of $S^{d-1}$ to get that with probability at least $1-\delta$ over $\mathcal{D}$,
    \begin{align*}
        \norm{\widetilde{M}_n - \E \widetilde{M}_n} \lesssim R \qty[\sqrt{\frac{d + \log(2/\delta)}{n}} + \frac{d + \log(2/\delta)}{n}].
    \end{align*}
    Taking $\delta = e^{-c d}$ and using $n \ge d$ gives that with probability at least $1-e^{-c d}$,
    \begin{align*}
        \norm{\widetilde{M}_n - \E \widetilde{M}_n} \lesssim R \sqrt{\frac{d}{n}}.
    \end{align*}

    Next, note that for any fixed unit vectors $u,v$ (possibly degenerate if $k=1$), $u^T \phi_j' v K(y,y_j')$ is a sub-exponential random variable so by Bernstein's inequality we have with probability at least $1-\delta$ over the randomness in $\{(x_i',y_i')\}$,
    \begin{align*}
        \abs{u^T (V_i'(y)-\E V_i'(y)) v} \lesssim \sqrt{\frac{\log(2/\delta)}{n}} + \frac{\log(2/\delta)}{n}.
    \end{align*}
    Taking a union bound over a $1/4$-net of $S^{d-1}$ gives that with probability at least $1-\delta$ over the randomness in $\mathcal{D}'$,
    \begin{align*}
        \norm{V_i'(y) - \E V_i'(y)}_{op} \lesssim \sqrt{\frac{d + \log(2/\delta)}{n}} + \frac{d + \log(2/\delta)}{n}.
    \end{align*}
    In addition, as in the proof of \Cref{thm:U_statistic_concentration}, we have that $\norm{\E V_i'(y)}_{op} \lesssim r^{k/2}$. Therefore if we take $R = C r^{k/2}$ for a sufficiently large constant $C$, we have that with probability at least $1-e^{-cd}$ that $\norm{V_i(Y)}_{op} \le R$. Therefore with probability at least $1-n e^{-cd}$, $\widetilde{M}_n = M_n$. Furthermore, if $1-ne^{-cd} > 0$ so that the theorem is not vacuous,
    \begin{align*}
        u^T \qty[\E \widetilde{M_n} - \E M_n] v
        &= \frac{1}{n} \sum_{i=1}^n \E\qty[u^T \phi_i V'(y_i)^T v (1-\rho_i)] \\
        &\le \frac{1}{n} \sqrt{\sum_{i=1}^n \E\qty[(u^T \phi_i V'(y_i)^T v)^2] \sum_{i=1}^n \PP[\norm{V_i'(y_i)} \ge R]} \tag{Cauchy} \\
        &\le r^{k/2} e^{-cd/2} \\
        &\le \frac{r^{k/2}}{\sqrt{n}} \tag{$n e^{-cd} < 1$}.
    \end{align*}
    Putting everything together gives that with probability at least $1-2ne^{-cd}$,
    \begin{align*}
        \norm{M_n - \E M_n}_{op} \lesssim r^{k/2} \sqrt{\frac{d}{n}}.
    \end{align*}
\end{proof}

\UStatisticExpectationLipschitz*
\begin{proof}
    Let $Z(Y;X) := \mat_{{(1,k)}}[\zeta_k(Y;X_{S \cup S'})]$. Then,
    \begin{align*}
        \norm{\E U_n^{(S)} - \E U_n^{(S')}}_F = \norm{\E_{X,Y} Z(Y;X)Z(Y';X')^T E(Y,X)}_F
    \end{align*}
    where
    \begin{align*}
        E(Y,X) := K((Y,X_S),(Y',X_S'))-K((Y,X_{S'}),(Y',X_{S'}')).
    \end{align*}
    Note that
    \begin{align*}
        E(Y,X) \le L \sqrt{\norm{X_S - X_{S'}}^2 + \norm{X'_S - X'_{S'}}^2} \le 2 d(S,S') L \max(\norm{X},\norm{X'}).
    \end{align*}
    Then by Holder's inequality,
    \begin{align*}
        \norm{\E U_n^{(S)} - \E U_n^{(S')}}_F
        &\le \norm{\norm{Z(Y;X)}_F}_4^2 \norm{E(Y,X)}_2 \\
        &\le 2 (3r)^k \sqrt{r} L d(S,S') \\
        &\lesssim_k r^{k+1} L d(S,S').
    \end{align*}
\end{proof}

\MultiStepGuarantee*
\begin{proof}
    Recall the leap decomposition $\mathcal{F} = \{\emptyset = S_0^\star \subsetneq S_1^\star \subsetneq \cdots \subsetneq S_L^\star = \R^r\}$. We will prove by induction that for any $\epsilon > 0$, there exists a constant $C(\P,K)$ such that the output $S_i$ of \Cref{alg:u-statistic-multi-step} at step $i$ satisfies $d(S_i,(U^\star)^T S_i^\star)$ with high probability whenever
    \begin{align*}
        n \ge C(\P,K)\qty[\frac{d^{k/2}}{\epsilon} + \frac{d}{\epsilon^2}].
    \end{align*}
    Note that for $i=1$ the result is implied directly by \Cref{corollary:one_step_guarantee}. Now assume the result for $i > 1$. By \Cref{lem:U_statistic_expectation}
    \begin{align*}
        \E U_n^{(U^\star)^T S_i^\star} \succeq c(\P,K) \Pi_{(U^\star)^T S_{i+1}}.
    \end{align*}
    In addition we have by \Cref{thm:U_statistic_concentration},
    \begin{align*}
        \norm{U_n^{(S_i)} - \E U_n^{(S_i)}}_{op} \lesssim_k \frac{d^{k/2}}{n} + r^{k/2} \sqrt{\frac{d}{n}}.
    \end{align*}
    Finally by \Cref{lem:U_statistic_expectation_lipschitz},
    \begin{align*}
        \norm{\E U_n^{S_i} - \E U_n^{(U^\star)^T S_i^\star}}_{op} \lesssim_k r^{k+1} \times L \times d(S_i,(U^\star)^T S_i^\star).
    \end{align*}
    Putting it all together we have that:
    \begin{align*}
        \norm{U_n^{(S_i)} - \E U_n^{(U^\star)^T S_i^\star}} \lesssim_k \frac{d^{k/2}}{n} + r^{k/2} \sqrt{\frac{d}{n}} + r^{k+1} \times L \times d(S_i,(U^\star)^T S_i^\star).
    \end{align*}
    In addition, by the induction hypothesis, $d(S_i,(U^\star)^T S_i^\star) \le C(\P,K) \qty[\frac{d^{k/2}}{n} + r^{k/2} \sqrt{\frac{d}{n}}]$. Therefore,
    \begin{align*}
        \norm{U_n^{(S_i)} - \E U_n^{(U^\star)^T S_i^\star}} \lesssim_k C(\P,K) \qty[\frac{d^{k/2}}{n} + r^{k/2} \sqrt{\frac{d}{n}}]
    \end{align*}
    and the result again follows from the Davis-Kahan inequality.
\end{proof}

\subsection{Auxiliary Lemmas for Concentration}

We will start with this simple inequality on the Frobenius norm of a Hermite tensor:
\begin{lemma}\label{lem:hermite_tensor_frobenius_bound}
    $\norm{\bs{h}_k(X)}_F \lesssim_k \norm{X}^k + d^{k/4}$.
\end{lemma}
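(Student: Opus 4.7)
The plan is to expand the normalized Hermite tensor explicitly in the monomial basis. Starting from the definition $\bs{h}_k(x) = \frac{(-1)^k}{\sqrt{k!}}\frac{\nabla^{\otimes k}\gamma_d(x)}{\gamma_d(x)}$, Rodrigues' formula (or equivalently the generating-function identity $\sum_k \langle \bs{h}_k(x), t^{\otimes k}\rangle/\sqrt{k!} = \exp(\langle t, x\rangle - \tfrac{1}{2}\|t\|^2)$) yields the closed form
\begin{equation*}
    \bs{h}_k(x) = \frac{1}{\sqrt{k!}} \sum_{j=0}^{\lfloor k/2\rfloor} \frac{(-1)^j\, k!}{2^j\,j!\,(k-2j)!}\; \sym\!\bigl(x^{\otimes(k-2j)} \otimes I_d^{\otimes j}\bigr),
\end{equation*}
where $\sym$ denotes symmetrization over all permutations of the $k$ tensor factors and $I_d$ is viewed as a symmetric order-$2$ tensor.

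First, I would bound each summand in Frobenius norm. Since symmetrization is an average over permutations of the tensor indices, it is non-expansive in Frobenius norm, and the factorization $\|x^{\otimes m}\|_F = \|x\|^m$ combined with $\|I_d^{\otimes j}\|_F = d^{j/2}$ gives
\begin{equation*}
    \bigl\|\sym\!\bigl(x^{\otimes(k-2j)} \otimes I_d^{\otimes j}\bigr)\bigr\|_F \leq C_k\, \|x\|^{k-2j}\, d^{j/2},
\end{equation*}
for a combinatorial constant $C_k$ depending only on $k$. Next, I would apply Young's inequality with exponents $p = k/(k-2j)$ and $q = k/(2j)$ (handling the boundary cases $j=0$ and $j=k/2$ trivially) to convert the mixed term: writing $d^{j/2} = (d^{1/4})^{2j}$,
\begin{equation*}
    \|x\|^{k-2j} (d^{1/4})^{2j} \leq \tfrac{k-2j}{k}\|x\|^k + \tfrac{2j}{k}\, d^{k/4} \leq \|x\|^k + d^{k/4}.
\end{equation*}
Summing over the finitely many $j \in \{0,\dots,\lfloor k/2\rfloor\}$ and absorbing all $k$-dependent coefficients into the $\lesssim_k$ notation yields the bound.

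The only real obstacle is bookkeeping: verifying the explicit expansion of $\bs{h}_k$ in terms of $\sym(x^{\otimes(k-2j)} \otimes I_d^{\otimes j})$ and tracking the combinatorial prefactor from symmetrization (which counts the $\binom{k}{2j}(2j-1)!!$ equivalent rearrangements of indices). Conceptually, the bound is tight and reflects the correct two-regime behavior: $\|x\|^k$ dominates in the bulk regime $\|x\| \gtrsim d^{1/4}$, while $d^{k/4}$ captures the small-norm regime, e.g.\ at $x = 0$ the $j = k/2$ term reduces to $\sym(I_d^{\otimes k/2})$, whose Frobenius norm is of order $d^{k/4}$ for even $k$.
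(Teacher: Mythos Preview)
Your proof is correct and complete, but it follows a different route from the paper's. The paper exploits the complex-Gaussian representation
\[
\bs{h}_k(X) = \frac{1}{\sqrt{k!}}\,\E_{Z\sim N(0,I_d)}\bigl[(X+iZ)^{\otimes k}\bigr],
\]
computes $\norm{\bs{h}_k(X)}_F^2 = \tfrac{1}{k!}\E_{Z,Z'}\bigl[\bigl((X+iZ)\cdot(X+iZ')\bigr)^k\bigr]$ by pairing two independent Gaussian copies, expands the inner product as $\norm{X}^2 - Z\cdot Z' + iX\cdot(Z+Z')$, and bounds the three pieces separately. Your approach instead writes out the monomial expansion of $\bs{h}_k$ explicitly and bounds each summand via the contraction property of $\sym$ and Young's inequality. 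Both arguments are short; the paper's avoids any combinatorial bookkeeping at the price of knowing the complex-Gaussian identity, while yours is entirely elementary once the explicit expansion is in hand. Your treatment of the interpolation term $\norm{x}^{k-2j}d^{j/2}$ via Young is arguably cleaner than the paper's $3^{k-1}$ splitting, and makes the two regimes $\norm{x}\gtrsim d^{1/4}$ and $\norm{x}\lesssim d^{1/4}$ transparent.
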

\begin{proof}
    We will use the identity:
    \begin{align*}
        \bs{h}_k(X) = \frac{1}{\sqrt{k!}} \mathbb{E}_{Z \sim N(0,I_d)}[(X + i W)^{\otimes k}].
    \end{align*}
    Therefore,
    \begin{align*}
        \norm{\bs{h}_k(X)}_F^2
        &= \frac{1}{k!} \mathbb{E}_{Z,Z'} \qty[((X + i Z) \cdot (X + i Z'))^k] \\
        &= \frac{1}{k!} \mathbb{E}_{Z,Z'} \qty[(\norm{X}^2 - Z \cdot Z' + i X \cdot (Z + Z'))^k] \\
        &\le \frac{3^{k-1}}{k!} \qty[\norm{X}^{2k} + \E_{Z,Z'}[|Z \cdot Z'|^k] + \E_{Z,Z'}[\abs{X \cdot (Z + Z')}^k]] \\
        &\lesssim_k \norm{X}^{2k} + d^{k/2} + \norm{X}^k \\
        &\lesssim_k \norm{X}^{2k} + d^{k/2}.
    \end{align*}
\end{proof}

We can use this to concentrate sums of $\sum_{i=1}^n c_i \phi(x_i)$ in operator norm:

\begin{lemma}\label{lem:sum_phi_operator_norm}
    There exists an absolute constant $C_k$ such that if $n = d^{1+\epsilon}$ with $\epsilon > 0$ and for any constants $c_i$ with $\abs{c_i} \le 1$ and $p = d^c$ where $c = \min(1,\epsilon/4)$,
    \begin{align*}
        \E\qty[\norm{\frac{1}{n} \sum_{i=1}^n c_i \phi(x_i)}_{op}^p]^{1/p} \le C_k \sqrt{\frac{\max(d,d^{k-1})}{n}}.
    \end{align*}
\end{lemma}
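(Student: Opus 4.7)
The plan is to bound the $p$-th moment $(\E\norm{A}_{op}^p)^{1/p}$, where $A := \frac{1}{n}\sum_{i=1}^n c_i \phi(x_i)$, by comparing $A$ to a Gaussian matrix with the same covariance profile. The starting observation is that $\E\phi(x) = 0$, since $\bs{h}_k$ is a $k$-th normalized Hermite tensor with $k\ge 1$, so $A$ is a normalized sum of i.i.d.\ centered random matrices $X_i := c_i\phi(x_i)/n$ of shape $d \times d^{k-1}$, and a Gaussian comparison is natural.

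First I would compute the two matrix variance profiles. By orthogonality of normalized Hermite tensors, $\E[\bs{h}_k(x)_\alpha \bs{h}_k(x)_\beta] = \frac{1}{k!}\sum_{\sigma\in S_k} \prod_j \delta_{\alpha_j,\beta_{\sigma(j)}}$, and a direct combinatorial calculation gives
\begin{align*}
    \norm{\E[\phi(x)\phi(x)^\top]}_{op} \asymp_k d^{k-1}, \qquad \norm{\E[\phi(x)^\top\phi(x)]}_{op} \asymp_k \max(d,d^{k-1}).
\end{align*}
Hence the matching Gaussian matrix $\tilde A$---a centered Gaussian of shape $d\times d^{k-1}$ with the same second-order structure as $A$---has variance parameter $\sigma^2 \lesssim_k \max(d,d^{k-1})/n$, and standard Gaussian matrix concentration yields $(\E\norm{\tilde A}_{op}^p)^{1/p} \lesssim_k \sqrt{\max(d,d^{k-1})/n}$ so long as $p \le \max(d,d^{k-1})$, which is automatic from $p = d^c \le d$.

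Second, I would pass from the Gaussian moments to moments of $A$ itself via the Brailovskaya--van Handel universality principle \cite{brailovskaya2024universality}, which bounds the spectral distance between $A$ and $\tilde A$ in terms of the tails of $\norm{X_i}_{op}$. To control these tails, \Cref{lem:hermite_tensor_frobenius_bound} combined with Gaussian concentration of $\norm{x}$ gives, for any $q$,
\begin{align*}
    \qty(\E \norm{\phi(x)}_F^q)^{1/q} \lesssim_k d^{k/2} + q^{k/2},
\end{align*}
so that a preliminary truncation of $\phi(x_i)$ at radius $R \asymp_k d^{k/2}$ preserves the covariance up to exponentially small corrections, and the universality error contributes an additional term of order $R \cdot p^{k/2}/n \asymp_k d^{k/2} p^{k/2}/n$. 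Note that a naive matrix Bernstein argument would instead produce the wrong scaling $\sigma\sqrt{p}$, which already exceeds the target $\sigma$ for $p = d^c$; the extra mileage is gained precisely from the Gaussian comparison, which replaces the $\sqrt p$ factor by $\sqrt{\max(d,d^{k-1})}$.

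The main obstacle will be checking that the universality error is dominated by the main Gaussian term $\sqrt{\max(d,d^{k-1})/n}$ under the hypothesis $n \ge d^{1+\epsilon}$ and $p \le d^c$ with $c = \min(1,\epsilon/4)$. The critical inequality $d^{k/2}p^{k/2}/n \lesssim_k \sqrt{\max(d,d^{k-1})/n}$ rearranges (for $k\ge 2$) to $n \gtrsim_k d \cdot p^k$; substituting $p = d^{\epsilon/4}$ gives $n \gtrsim_k d^{1+k\epsilon/4}$, which is implied by $n \ge d^{1+\epsilon}$ with the fixed $k$-dependence absorbed into $C_k$. The upper bound $c\le 1$ is separately required so that $p \le \max(d,d^{k-1})$, keeping the Gaussian bound itself at the right scale.
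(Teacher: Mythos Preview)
Your high-level strategy matches the paper's: both use Brailovskaya--van Handel universality (the paper via its \Cref{lem:universality_corollary}), both compute the matrix variance $\sigma(Y)^2 \asymp \max(d,d^{k-1})/n$, and both control the summand tails via $\norm{\phi(x)}_F \lesssim_k \norm{x}^k + d^{k/4}$ from \Cref{lem:hermite_tensor_frobenius_bound}. The paper handles $k\le 2$ separately by elementary means and only runs the universality argument for $k>2$.

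There is, however, a genuine gap in your bookkeeping of the universality error. After truncating each summand at level $R$, the Brailovskaya--van Handel bound produces three corrections on top of $\sigma(Y)$: namely $\sigma_\ast(Y)\,t^{1/2}$, $R^{1/3}\sigma(Y)^{2/3}\,t^{2/3}$, and $R\,t$ (see \Cref{lem:universality_corollary}); integrating the tail replaces $t$ by $p$. There is no term of the form $R\,p^{k/2}$---the exponent $k$ enters only through the size of $R$, not through the power of $p$. With $R \asymp d^{k/2}/n$ (for the normalized summands), $\sigma(Y) \asymp \sqrt{d^{k-1}/n}$, and $\sigma_\ast(Y) \le n^{-1/2}$, the binding correction is the middle one:
\[
\bigl(d^{k/2}/n\bigr)^{1/3}\bigl(d^{k-1}/n\bigr)^{1/3}\,p^{2/3} \le \sqrt{d^{k-1}/n}
\iff d\,p^{4} \le n
\iff p \le d^{\epsilon/4},
\]
which is exactly the origin of $c=\min(1,\epsilon/4)$ in the statement. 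Your stated error $d^{k/2}p^{k/2}/n$ instead forces $n \gtrsim d\,p^{k}=d^{1+k\epsilon/4}$; for $k>4$ this is strictly stronger than $n=d^{1+\epsilon}$, and the discrepancy $d^{(k/4-1)\epsilon}$ is a growing power of $d$ that cannot be ``absorbed into $C_k$.'' As written, your verification only goes through for $k\le 4$.
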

\begin{proof}
    Note that for $k=1,2$ this follows from the standard bounds for a Gaussian covariance matrix $(k=2)$ and the norm of a Gaussian vector ($k=1$). Therefore we will assume $k > 2$. We will begin by computing $\sigma_\ast(\phi)$:
    \begin{align*}
        \sigma_\ast(\phi) = \sup_{\norm{u}=\norm{v}=1} \E[(u^T \phi v)^2] = \E\ev{\mathrm{vec}[\bs{h}_k(X)],\mathrm{vec}[u \otimes v]}^2 \le 1.
    \end{align*}
    Next, $\norm{\phi}_{op} \le \norm{\phi}_F \lesssim_k \norm{X}^k + d^{k/4}.$
    Therefore the $p$-norms of $\norm{\phi}_{op}$ are bounded by $d^{k/2}$ for any $p \le d$. Plugging this into \Cref{lem:universality_corollary} gives that for $p \le d$,
    \begin{align*}
        \norm{\norm{\frac{1}{n} \sum_{i=1}^n c_i \phi(x_i)}_{op}}_p
        &\lesssim \sqrt{\frac{d^{k-1}}{n}} + \qty(\frac{d^{k/2}}{n})^{1/3} \qty(\frac{d^{k-1}}{n})^{1/3} p^{2/3} + \frac{d^{k/2} p}{n}.
    \end{align*}
    Plugging in $p = d^c$ gives that the second and third terms are dominated by the first which completes the proof.
\end{proof}

We will also use the following simple lemma:
\begin{lemma}\label{lem:low_deg_dependency_bound}
Let $(X,Y)$ follow a Gaussian multi-index model with hidden dimension $r$. Then for any $k$-tensor-valued random variable $F(Y)$,
\begin{align*}
    \norm{\ev{\bs{h}_k(X),F(Y)}}_p \le (2p-1)^\frac{k}{2} \binom{r+kp}{r}^\frac{1}{2p} \norm{\norm{F(Y)}_F}_{2p}.
\end{align*}
\end{lemma}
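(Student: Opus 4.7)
The plan is to reduce to Gaussian hypercontractivity of a degree-$k$ polynomial while exploiting that $Y$ depends on $X$ only through the $r$-dimensional projection $Z=(W^\star)^\top X$. I would choose coordinates aligned with the hidden subspace $U^\star$ so that $X=(Z,X_\perp)$ with $Z\sim\mathcal{N}(0,I_r)$ marginally and $X_\perp\sim\mathcal{N}(0,I_{d-r})$ independent of $(Y,Z)$. By monotonicity of $L^p$ norms, it suffices to prove the bound when $p$ is an even positive integer (with a harmless constant when adjusting to the nearest even integer). Writing $P(X):=\ev{\bs{h}_k(X),F(Y)}$, for each fixed $Y$ this is a degree-$k$ polynomial in $X$, so $P(X)^p\ge 0$ is a degree-$pk$ polynomial admitting the orthonormal tensor Hermite expansion
\begin{align*}
    P(X)^p=\sum_{|\beta|\le pk}d_\beta(Y)\,h_\beta(X),
\end{align*}
with coefficients $d_\beta(Y)$ depending only on $Y$.

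The crucial step is to integrate out $X_\perp$. Split $\beta=(\beta',\beta'')$ with $\beta'$ indexing the $Z$-coordinates and $\beta''$ the $X_\perp$-coordinates; then $h_\beta(X)=h_{\beta'}(Z)\,h_{\beta''}(X_\perp)$, and using independence of $X_\perp$ from $(Y,Z)$ together with $\E h_{\beta''}(X_\perp)=\mathbf{1}\{\beta''=0\}$ the expectation collapses to
\begin{align*}
    \E[P(X)^p]=\sum_{|\beta'|\le pk}\E_{Y,Z}\bigl[d_{(\beta',0)}(Y)\,h_{\beta'}(Z)\bigr].
\end{align*}
Now $\beta'$ ranges only over multi-indices in $r$ variables, of which there are $N:=\binom{r+pk}{r}$ many --- this is the core reduction that replaces an unaffordable $d$-dependent count by the $r$-dependent one.

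Two applications of Cauchy--Schwarz then close the argument. Since $Z$ is marginally standard Gaussian and $\E h_{\beta'}(Z)^2=1$, we have $|\E_{Y,Z}[d_{(\beta',0)}(Y)h_{\beta'}(Z)]|\le(\E_Y d_{(\beta',0)}(Y)^2)^{1/2}$, and Cauchy--Schwarz over the $N$ terms yields $\E P^p\le\sqrt{N}\,\bigl(\E_Y\sum_{\beta'}d_{(\beta',0)}(Y)^2\bigr)^{1/2}$. By Parseval, $\sum_{\beta'}d_{(\beta',0)}(Y)^2\le\sum_\beta d_\beta(Y)^2=\E_X P(X;Y)^{2p}$, and Gaussian hypercontractivity applied to the degree-$k$ polynomial $P(\cdot;Y)$ gives $\E_X P(X;Y)^{2p}\le(2p-1)^{pk}\norm{F(Y)}_F^{2p}$ since $\E_X P^2=\norm{F(Y)}_F^2$. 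Combining and taking $p$-th roots delivers $\norm{P}_p\le(2p-1)^{k/2}\binom{r+pk}{r}^{1/(2p)}\norm{\norm{F(Y)}_F}_{2p}$, as required.

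The main obstacle is that $X\mid Y$ is not Gaussian, so one cannot directly apply hypercontractivity conditional on $Y$. The trick is to expand $P^p$ in the marginally-Gaussian Hermite basis of $X$ and use the conditional independence $X_\perp\perp(Y,Z)$ to annihilate every Hermite mode with a nonzero $X_\perp$-component. This annihilation is exactly what reduces the dimension of the relevant multi-index space from $d$ to $r$ and produces the $\binom{r+kp}{r}^{1/(2p)}$ factor in the bound.
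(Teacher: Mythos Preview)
Your proof is correct and follows essentially the same route as the paper: reduce to even $p$, exploit that only Hermite modes supported on the $r$ hidden coordinates survive (yielding the $\binom{r+kp}{r}^{1/(2p)}$ factor), and finish with Gaussian hypercontractivity applied to the degree-$k$ polynomial $\ev{\bs{h}_k(\cdot),F(Y)}$ under the marginal Gaussian law of $X$. The only cosmetic difference is that the paper packages the dimension-reduction step as a change of measure to the null distribution $\PP_0=\PP_X\otimes\PP_Y$ together with the bound $\|\mathcal{P}_{\le D}(d\PP/d\PP_0)\|_{L^2(\PP_0)}^2\le\binom{r+D}{r}$, whereas you carry out the equivalent Hermite-expansion argument inline.
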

\begin{proof}
    Without loss of generality we can assume $p$ is even. Now $\ev{\bs{h}_k(X), f(Y)}^p$ is a polynomial of degree $kp$ in $X$. Therefore by \Cref{lem:density_ratio},
    \begin{align*}
        \E \ev{\bs{h}_k(X),F(Y)}^p \le \sqrt{\E_0 \ev{\bs{h}_k(X),F(Y)}^{2p} \binom{r+kp}{r}} \le (2p-1)^{\frac{kp}{2}} \sqrt{\binom{r+kp}{r} \E_Y\norm{F(Y)}^{2p}}.
    \end{align*}
    Taking $p$th roots gives:
    \begin{align*}
        \norm{\ev{\bs{h}_k(X),F(Y)}}_p \le (2p-1)^\frac{k}{2} \binom{r+kp}{r}^\frac{1}{2p} \norm{\norm{F(Y)}_F}_{2p}.
    \end{align*}
\end{proof}

\begin{lemma}[Gaussian hypercontractivity]\label{lem:gaussian_hypercontractivity}
	Let $f$ be a polynomial of degree $k$ and let $X \sim N(0,I_d)$. Then for $p \ge 2$,
	\begin{align*}
		\E_X[|f(X)|^p]^{2/p} \le (p-1)^k \E_{X}[f(X)^2].
	\end{align*}
\end{lemma}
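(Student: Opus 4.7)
The plan is to deduce this from Nelson's hypercontractive inequality for the Ornstein-Uhlenbeck semigroup. Recall that on $L^2(\gamma_d)$, the OU semigroup $\{T_t\}_{t \ge 0}$ is diagonalized by the Hermite basis: for any multi-index $\alpha$, $T_t H_\alpha = e^{-|\alpha| t} H_\alpha$. Nelson's theorem asserts that for $1 < q \le p < \infty$ and any $t$ satisfying $e^{-2t} \le (q-1)/(p-1)$, one has $\|T_t g\|_{L^p(\gamma_d)} \le \|g\|_{L^q(\gamma_d)}$ for every $g \in L^q(\gamma_d)$. I would quote this as a standard fact.

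To apply it with $q = 2$, I would first expand $f$ in the (orthonormal) Hermite basis: since $\deg f \le k$, we may write $f = \sum_{|\alpha| \le k} c_\alpha H_\alpha$ where the sum is finite. Define the ``inverse-semigroup image''
\[
g \;:=\; \sum_{|\alpha| \le k} e^{|\alpha| t} c_\alpha H_\alpha,
\]
which is a well-defined polynomial of degree $\le k$ satisfying $T_t g = f$. By Plancherel in the Hermite basis,
\[
\|g\|_{L^2(\gamma_d)}^2 \;=\; \sum_{|\alpha| \le k} e^{2|\alpha| t} c_\alpha^2 \;\le\; e^{2kt} \sum_{|\alpha| \le k} c_\alpha^2 \;=\; e^{2kt}\,\|f\|_{L^2(\gamma_d)}^2.
\]

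Next, I would choose $t$ so that $e^{-2t} = 1/(p-1)$, which is exactly the Nelson threshold for $(q,p) = (2,p)$. Then Nelson's inequality gives $\|f\|_{L^p(\gamma_d)} = \|T_t g\|_{L^p(\gamma_d)} \le \|g\|_{L^2(\gamma_d)} \le e^{kt}\|f\|_{L^2(\gamma_d)} = (p-1)^{k/2}\,\|f\|_{L^2(\gamma_d)}$. Raising to the $p$-th power and then taking the $(2/p)$-th power yields exactly $\E[|f(X)|^p]^{2/p} \le (p-1)^k\,\E[f(X)^2]$.

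The only nontrivial ingredient is Nelson's inequality itself; everything else is bookkeeping in the Hermite basis. If one did not wish to quote Nelson, the standard alternative is to prove the two-point hypercontractive inequality on $\{\pm 1\}$ (a direct calculus exercise on a one-variable function of the $L^p$ norm), then tensorize to $\{\pm 1\}^N$, and finally pass to the Gaussian measure via the central limit theorem applied to normalized sums of Rademachers — this route is longer but self-contained. In either case, the degree restriction enters only through the uniform bound $e^{2|\alpha| t} \le e^{2kt}$ on the Hermite spectrum of $g$, which is the only place the hypothesis $\deg f \le k$ is used.
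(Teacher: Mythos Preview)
Your proof is correct: this is exactly the standard derivation of the polynomial hypercontractivity bound from Nelson's theorem for the Ornstein--Uhlenbeck semigroup, and every step (the Hermite inversion $T_t g = f$, the Plancherel bound $\|g\|_2 \le e^{kt}\|f\|_2$, and the choice $e^{-2t} = 1/(p-1)$) is valid. The paper itself does not supply a proof of this lemma --- it is stated as a known result in the auxiliary-lemmas section and invoked as a black box --- so there is no paper proof to compare against; your argument is precisely the textbook justification one would give.
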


\begin{lemma}\label{lem:holder_log}
	Let $X,Y$ be random variables with $\norm{Y}_p \le B p^{k/2}$ for
	\begin{align*}
		p = \min\qty(2, \frac{1}{k} \cdot \log\qty(\frac{\norm{X}_2}{\norm{X}_1})).
	\end{align*}
	Then,
	\begin{align*}
		\E[XY] \le \norm{X}_1 \cdot B \cdot (ep)^{k/2}.
	\end{align*}
\end{lemma}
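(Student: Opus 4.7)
The plan is to combine H\"older's inequality with a log-convexity (interpolation) bound for $\|X\|_q$, and then verify that the prescribed value of $p$ is chosen to exactly balance the polynomial cost $p^{k/2}$ coming from the hypothesis on $Y$ against the cost of pushing $\|X\|_q$ above $\|X\|_1$. (I read the $\min$ in the statement as a $\max$, which is the only way the bound can be meaningful: for $p<1$ the right-hand side is smaller than $\|X\|_1 B$ and the standard H\"older duality breaks down.)

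\textbf{Step 1 (H\"older).} For the conjugate exponent $q = p/(p-1)$, H\"older's inequality together with the hypothesis $\|Y\|_p \le Bp^{k/2}$ yields
\[
\E[XY] \;\le\; \|X\|_q\,\|Y\|_p \;\le\; B\,p^{k/2}\,\|X\|_q.
\]

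\textbf{Step 2 (Interpolation between $L^1$ and $L^2$).} Since $p \ge 2$, we have $q \in [1,2]$, and the standard log-convexity of $r \mapsto \log\|X\|_r$ as a function of $1/r$ gives
\[
\log\|X\|_q \;\le\; \tfrac{p-2}{p}\log\|X\|_1 + \tfrac{2}{p}\log\|X\|_2,
\]
which exponentiates to $\|X\|_q \le \|X\|_1 \bigl(\|X\|_2/\|X\|_1\bigr)^{2/p}$. Substituting back,
\[
\E[XY] \;\le\; B\,p^{k/2}\,\|X\|_1 \exp\!\left(\tfrac{2}{p}\log\tfrac{\|X\|_2}{\|X\|_1}\right).
\]

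\textbf{Step 3 (Choice of $p$).} With $p = \max\bigl(2,\tfrac{1}{k}\log(\|X\|_2/\|X\|_1)\bigr)$, the argument of the exponential is $O(k)$, so the exponential factor is bounded by a constant to the power $k/2$ and is absorbed into $(ep)^{k/2} = e^{k/2}p^{k/2}$ in the target bound. This yields $\E[XY] \le \|X\|_1 \cdot B \cdot (ep)^{k/2}$, as claimed.

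There is no serious obstacle: the argument is the usual Paley--Zygmund / interpolation recipe, and the only delicate point is constant bookkeeping in Step 3, where one must check that the ratio $(\|X\|_2/\|X\|_1)^{2/p}$ is controlled by $e^{O(k)}$ for the stated $p$. An equivalent, essentially identical argument can be carried out by splitting $\E|XY|$ via the event $\{|Y|\le t\}$, bounding the small part by $t\|X\|_1$ and the large part by Cauchy--Schwarz plus Markov on $|Y|^p$, and then optimizing $t$; that route gives the same final inequality with the same threshold $p$.
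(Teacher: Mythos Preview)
The paper states this lemma without proof (and does not appear to invoke it elsewhere), so there is no ``paper's own proof'' to compare against. Your approach---H\"older, then log-convexity interpolation of $\|X\|_q$ between $\|X\|_1$ and $\|X\|_2$, then balancing $p$---is the standard and correct one, and your diagnosis that the $\min$ in the statement must be a $\max$ is right: with $p<2$ the conjugate $q$ exceeds $2$ and the interpolation step collapses.

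One concrete point you glossed over in Step~3: with $p=\max\bigl(2,\tfrac{1}{k}\log(\|X\|_2/\|X\|_1)\bigr)$ the interpolation factor satisfies
\[
\bigl(\|X\|_2/\|X\|_1\bigr)^{2/p}=\exp\!\Bigl(\tfrac{2}{p}\log\tfrac{\|X\|_2}{\|X\|_1}\Bigr)\le e^{2k},
\]
not $e^{k/2}$, so the bound you actually obtain is $\|X\|_1\cdot B\cdot (e^{4}p)^{k/2}$ rather than $\|X\|_1\cdot B\cdot (ep)^{k/2}$. To get the constant exactly as stated one needs $p\ge\tfrac{4}{k}\log(\|X\|_2/\|X\|_1)$; then $(\|X\|_2/\|X\|_1)^{2/p}\le e^{k/2}$ and the $(ep)^{k/2}$ factor is on the nose. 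So the lemma as written appears to have two typos (the $\min$ and the prefactor $1/k$), and your ``constant bookkeeping'' caveat is hiding a genuine factor-of-$4$ discrepancy in the exponent, not merely a universal multiplicative constant. The argument itself is sound.
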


For any mean zero random matrix $Y$ we define:
\begin{align*}
	\sigma(Y) &:= \max\qty(\norm{\E[YY^\top]}_2, \norm{\E[Y^\top Y]}_2)^{1/2} \\
	\sigma_\ast(Y) &:= \sup_{\norm{u} = \norm{v} = 1} \E[(u^\top Y v)^2]
\end{align*}
For non-centered matrices, we define $\sigma(Y) := \sigma(Y - \E Y)$ and $\sigma_\ast(Y) := \sigma_\ast(Y - \E Y)$.

We will rely on the following simple corollary of \cite[Theorem 2.6]{brailovskaya2024universality}:
\begin{lemma}\label{lem:universality_corollary}
	Let $Y = \sum_{i=1}^n Z_i$ where $Z_i$ are mean zero independent random matrices. Assume that for all $i$, $\PP[\norm{Z_i} > R] \le \delta$. Then there exists an absolute constant $C$ such that for any $t \ge 0$, with probability at least $1- n\delta - de^{-t}$,
	\begin{align*}
		\norm{Y} \le C \qty[\sigma(Y) + \sigma_\ast(Y) t^{1/2} + R^{1/3} \sigma(Y)^{2/3} t^{2/3} + R t].
	\end{align*}
\end{lemma}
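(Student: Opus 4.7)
The plan is to realise \Cref{lem:universality_corollary} as a soft-truncation consequence of Brailovskaya--van Handel's Theorem 2.6, which establishes \emph{exactly} the displayed inequality in the regime where each summand is almost surely bounded by $R$. Our weaker hypothesis only controls this deviation in probability, so the main task is to convert an in-probability bound on $\|Z_i\|$ into a sure bound suitable for plugging into \cite[Thm.~2.6]{brailovskaya2024universality}.

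The first step is to introduce the truncated, re-centred summands
\begin{equation*}
    \bar Z_i := Z_i\,\mathbf 1\{\|Z_i\|\le R\} - \E\bigl[Z_i\,\mathbf 1\{\|Z_i\|\le R\}\bigr], \qquad \bar Y := \sum_{i=1}^n \bar Z_i.
\end{equation*}
By construction the $\bar Z_i$ are independent, mean zero, and obey $\|\bar Z_i\|\le 2R$ almost surely. Next I would verify that the variance parameters behave well under this truncation, i.e.\ that $\sigma(\bar Y)\lesssim \sigma(Y)$ and $\sigma_\ast(\bar Y)\lesssim \sigma_\ast(Y)$. For $\sigma_\ast$, this follows by comparing $\E[(u^\top\bar Z_iv)^2]$ with $\E[(u^\top Z_iv)^2]$: truncation removes a nonnegative tail contribution, and the recentring costs only an additive $(\E[u^\top Z_i v\,\mathbf 1\{\|Z_i\|>R\}])^2$, which by Cauchy--Schwarz is controlled by $\sigma_\ast(Z_i)^2\cdot\PP[\|Z_i\|>R]\le\sigma_\ast(Z_i)^2\delta$. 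The same argument handles $\sigma(\bar Y)$ via the operator norms of $\sum_i\E[\bar Z_i\bar Z_i^\top]$ and $\sum_i\E[\bar Z_i^\top\bar Z_i]$.

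With these two steps in place, I would apply \cite[Thm.~2.6]{brailovskaya2024universality} to $\bar Y$ with the sure bound $2R$, obtaining, for some absolute $C'$,
\begin{equation*}
    \|\bar Y\| \le C'\!\left[\sigma(Y) + \sigma_\ast(Y)t^{1/2} + R^{1/3}\sigma(Y)^{2/3}t^{2/3} + R\,t\right]
\end{equation*}
with probability at least $1-de^{-t}$. To return to $Y$, I would work on the event $\mathcal A := \bigcap_i\{\|Z_i\|\le R\}$, which by the union bound has probability at least $1-n\delta$. On $\mathcal A$, using that $\E Z_i=0$,
\begin{equation*}
    Y = \bar Y \;-\; \sum_{i=1}^n \E\bigl[Z_i\,\mathbf 1\{\|Z_i\|>R\}\bigr].
\end{equation*}

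The main (and only) delicate point is bounding the residual bias $B:=\sum_i\E[Z_i\mathbf 1\{\|Z_i\|>R\}]$, because the hypothesis only gives a probability tail, not a moment tail, on $\|Z_i\|$. The resolution is to express $\|\E[Z_i\mathbf 1\{\|Z_i\|>R\}]\|$ via a Cauchy--Schwarz estimate against the operator norm,
\begin{equation*}
    \bigl\|\E[Z_i\,\mathbf 1\{\|Z_i\|>R\}]\bigr\| \le \sqrt{\sigma_\ast(Z_i)^2\cdot\PP[\|Z_i\|>R]} \le \sigma_\ast(Z_i)\sqrt{\delta},
\end{equation*}
so that $\|B\|\lesssim \sigma_\ast(Y)\sqrt{n\delta}$, which in the regime where the theorem is non-vacuous ($n\delta\lesssim 1$) is absorbed into the $\sigma_\ast(Y)t^{1/2}$ term. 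Combining the two events via a final union bound of cost $n\delta+de^{-t}$ yields the stated inequality. The key obstacle is this last bias control, but it is handled cleanly because we only need the lemma on the regime where $n\delta$ is small, which is precisely when the $1-n\delta-de^{-t}$ guarantee is informative.
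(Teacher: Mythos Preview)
Your proposal is correct and follows essentially the same route as the paper: truncate each summand at level $R$, apply \cite[Theorem~2.6]{brailovskaya2024universality} (together with \cite[Lemma~4.10]{bandeira2023free} to control the free-probability term by $\sigma(Y)$) to the truncated sum, then bound the residual bias $\sum_i\E[Z_i\mathbf 1\{\|Z_i\|>R\}]$ by $\sigma_\ast(Y)\sqrt{n\delta}$ via Cauchy--Schwarz and absorb it in the non-vacuous regime $n\delta\le 1$. Two minor clean-ups: recentering actually \emph{decreases} $\sigma_\ast$ and $\sigma$ (variance is at most the second moment), so your $\delta$-correction there is unnecessary; and the bias term should be absorbed into $C\sigma(Y)$ (using $\sigma_\ast(Y)\le\sigma(Y)$), not into $\sigma_\ast(Y)t^{1/2}$, which vanishes at $t=0$.
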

\begin{proof}
	Define $\tilde Z_i := Z_i \1_{\norm{Z_i}_2 \le R}$ and let $\tilde Y := \sum_{i=1}^n \tilde Z_i$. Then,
	\begin{align*}
		\sigma(\tilde Y) = n^{1/2} \sigma(\tilde Z) \le n^{1/2} \sigma(Z) = \sigma(Y)
	\end{align*}
	and similarly for $\sigma_\ast$. In addition, by definition, $\norm{\tilde Z_i} \le R$. Therefore, by \citep[Theorem 2.6]{brailovskaya2024universality} and \citep[Lemma 4.10]{bandeira2023free}, there exists a constant $C$ such that for any $t \ge 0$, with probability at least $1-de^{-t}$,
	\begin{align*}
		\norm{\tilde Y - \E \tilde Y} \le C \qty[\sigma(Y) + \sigma_\ast(Y) t^{1/2} + R^{1/3} \sigma(Y)^{2/3} t^{2/3} + R t].
	\end{align*}
    Next, note that
    \begin{align*}
        \norm{\E Y - \E \tilde Y}_{op} = \norm{\sum_i \E[Z_i\1_{\norm{Z_i}_2 > R}]} \le \sum_i \sigma_\ast(Z_i) \sqrt{\delta} \le \sigma_\ast(Y) \sqrt{n\delta}.
    \end{align*}
    Now if $\delta > 1/n$, then $1-n\delta < 0$ so the result is trivially true. Otherwise, $\|\E Y - \E \tilde Y\|_{op} \le \sigma_\ast(Y)$. Finally, as $\tilde Y = Y$ on the event that $\max_i \norm{Z_i} \le R$, a union bound completes the proof.
\end{proof}

We will use the following simple lemmas about $\sigma,\sigma_\ast$:
\begin{lemma}\label{lem:sigma_to_sigma_ast}
    For any random matrix $A \in \R^{d \times s}$, $\sigma(A)^2 \le \max(d,s) \sigma_\ast(A)^2$.
\end{lemma}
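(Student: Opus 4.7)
The plan is to reduce both operator norms appearing in the definition of $\sigma(A)^2$ to sums of bilinear forms along coordinate axes, each of which is at most $\sigma_\ast(A)^2$ by definition. Without loss of generality, replace $A$ by $A - \E A$, so I may assume $A$ is mean zero; this changes neither quantity.

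First I would expand $\|\E[A A^\top]\|_{op}$ using its variational characterization as a supremum over unit vectors $u \in \R^d$:
\begin{align*}
    \|\E[A A^\top]\|_{op} = \sup_{\|u\| = 1} u^\top \E[A A^\top] u = \sup_{\|u\|=1} \E\bigl[\|A^\top u\|^2\bigr].
\end{align*}
Writing $\|A^\top u\|^2 = \sum_{j=1}^s (u^\top A e_j)^2$ and applying the definition of $\sigma_\ast$ termwise yields
\begin{align*}
    \E\bigl[\|A^\top u\|^2\bigr] = \sum_{j=1}^s \E[(u^\top A e_j)^2] \le s \cdot \sigma_\ast(A)^2,
\end{align*}
since each $e_j$ is a unit vector in $\R^s$. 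Thus $\|\E[AA^\top]\|_{op} \le s \, \sigma_\ast(A)^2$.

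Symmetrically, expanding $\|\E[A^\top A]\|_{op} = \sup_{\|v\|=1} \E[\|A v\|^2] = \sup_{\|v\|=1} \sum_{i=1}^d \E[(e_i^\top A v)^2]$ and applying the same bound gives $\|\E[A^\top A]\|_{op} \le d \, \sigma_\ast(A)^2$. Taking the maximum yields $\sigma(A)^2 \le \max(d,s) \, \sigma_\ast(A)^2$, as claimed. There is no real obstacle here: the lemma is purely a consequence of the fact that the operator norm of a PSD matrix equals its maximal quadratic form and that the trace of such a form decomposes as a sum over an orthonormal basis of at most $\max(d,s)$ entries.
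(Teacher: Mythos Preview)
Your proof is correct and essentially identical to the paper's: you reduce to the mean-zero case, use the variational formula for the operator norm, expand $\|A^\top u\|^2$ (resp.\ $\|Av\|^2$) along the coordinate basis, and bound each of the $s$ (resp.\ $d$) terms by $\sigma_\ast(A)^2$.
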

\begin{proof}
    Without loss of generality we can assume $\E[A] = 0$. Expanding the definition gives:
    \begin{align*}
        \sigma(A)^2 &= \max\qty(\norm{\E[AA^\top]}, \norm{\E[A^\top A]}).
    \end{align*}
    First,
    \begin{align*}
        \norm{\E[AA^\top]} = \sup_{\norm{v} = 1} \E[v^\top AA^\top v] = \sup_{\norm{v} = 1} \E\qty[\norm{A^\top v}^2] = \sup_{\norm{v} = 1} \sum_{i=1}^s \E\qty[(e_i^\top A^\top v)^2] \le s \sigma_\ast(A)^2.
    \end{align*}
    Performing the same calculation for $A^\top$ in place of $A$ gives that $\norm{\E[AA^\top]} \le d \sigma_\ast(A)^2$. Combining these inequalities gives the desired result.
\end{proof}

\begin{corollary}\label{lem:concentration_hammer}
    Let $Y = \frac{1}{n} \sum_{i=1}^N Z_i$ where $Z_i \in \R^{d \times d}$ are mean zero independent random matrices. Assume that for some $R,k$, $\norm{Z_i}_{op} \le R t^{k/2}$ with probability at least $1-e^{-t}$ for all $t \ge 0$. Then if $n = d^{1+\epsilon}$ with $\epsilon > 0$ and $c = \min(1,\frac{\epsilon}{k+4})$, then for all $p \le d^c$,
    \begin{align*}
        \E\qty[\norm{Y}_{op}^p]^{1/p} \le C \max\qty(\sigma_\ast(Z),\frac{R}{d}) \sqrt{\frac{d}{n}}.
    \end{align*}
    where $C$ is an absolute constant.
\end{corollary}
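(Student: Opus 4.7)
The plan is to apply Lemma~\ref{lem:universality_corollary} to $\tilde Y := \sum_{i=1}^{n} Z_i$ after truncating the summands at a level matched to the target $p$-th moment, and then convert the resulting high-probability bound into a moment bound.

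First, set the truncation threshold $R_0 := R\, t_0^{k/2}$ with $t_0 := p + \log(n/d)$, so that the hypothesis $\PP[\norm{Z_i}_{op} > R t^{k/2}] \le e^{-t}$ gives $\PP[\norm{Z_i}_{op} > R_0] \le \delta := e^{-t_0}$ and hence $n\delta \le d e^{-p}$. Applying Lemma~\ref{lem:universality_corollary} at confidence $t = p$ yields, with probability at least $1 - 2 d e^{-p}$,
\begin{align*}
\norm{\tilde Y}_{op} \le C\bigl[\sigma(\tilde Y) + \sigma_\ast(\tilde Y)\, p^{1/2} + R_0^{1/3}\, \sigma(\tilde Y)^{2/3}\, p^{2/3} + R_0\, p\bigr].
\end{align*}
Independence and mean-zero give $\sigma_\ast(\tilde Y) = \sqrt{n}\,\sigma_\ast(Z)$, and Lemma~\ref{lem:sigma_to_sigma_ast} applied to the $d \times d$ matrix $\tilde Y$ gives $\sigma(\tilde Y) \le \sqrt{d}\,\sigma_\ast(\tilde Y) = \sqrt{nd}\,\sigma_\ast(Z)$. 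Dividing by $n$ and using Young's inequality $a^{1/3} b^{2/3} \le (a + 2b)/3$ to absorb the mixed third term into the second and fourth, one gets
\begin{align*}
\norm{Y}_{op} \lesssim \sigma_\ast(Z)\sqrt{d/n} + \sigma_\ast(Z)\sqrt{p/n} + R_0\, p/n.
\end{align*}

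For $p \le d^c \le d$ the second summand is absorbed into the first, which already matches the $\sigma_\ast(Z)\sqrt{d/n}$ branch of the target. For the fourth, $R_0\, p/n \lesssim R\, p^{(k+2)/2}/n$ up to a $(\log(n/d))^{k/2}$ factor, and comparing to the other branch of the target, $R/\sqrt{nd} = (R/d)\sqrt{d/n}$, it is under control as soon as $p^{(k+2)/2} \lesssim \sqrt{n/d} = d^{\epsilon/2}$, i.e., $p \lesssim d^{\epsilon/(k+2)}$. The choice $c = \epsilon/(k+4)$ leaves polynomial slack of $d^{\epsilon/((k+2)(k+4))}$, which comfortably absorbs both the log factor and the absolute constant.

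It remains to promote this high-probability bound at $t = p$ to a moment bound. Since the universality inequality of Lemma~\ref{lem:universality_corollary} is valid at every level $t \ge 0$ for the same truncated ensemble, the tail $\PP[\norm{Y}_{op} > x]$ decays exponentially in the inverse of the linear-plus-square-root envelope in $t$, and the standard identity $\E\norm{Y}_{op}^p = \int_0^\infty p\, x^{p-1}\, \PP[\norm{Y}_{op} > x]\, dx$ combined with the change of variables $x = M(t)$ yields $(\E\norm{Y}_{op}^p)^{1/p} \lesssim M(p)$, the integral being dominated by its value near $t = p$. The main technical obstacle throughout is the log-factor bookkeeping in this last step; the exponent $c = \epsilon/(k+4)$ rather than the naive $\epsilon/(k+2)$ is chosen precisely to leave enough room, which is a quantitative check once the three reductions above are in place.
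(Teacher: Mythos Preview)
Your overall strategy matches the paper's: apply Lemma~\ref{lem:universality_corollary} to $\sum_i Z_i$, reduce $\sigma$ to $\sigma_\ast$ via Lemma~\ref{lem:sigma_to_sigma_ast}, and convert the high-probability bound to a moment bound. However, there is a real error in your Young's inequality step that invalidates the ``polynomial slack'' story.

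The third term from Lemma~\ref{lem:universality_corollary} is $R_0^{1/3}\sigma(\tilde Y)^{2/3}p^{2/3}$, which involves $\sigma$, not $\sigma_\ast$. Your proposed split into the \emph{second} term $\sigma_\ast(\tilde Y)p^{1/2}$ and the \emph{fourth} term $R_0 p$ yields $(R_0 p)^{1/3}(\sigma_\ast(\tilde Y)p^{1/2})^{2/3}=R_0^{1/3}\sigma_\ast(\tilde Y)^{2/3}p^{2/3}$, which is too small by a factor $d^{1/3}$ (since $\sigma(\tilde Y)\le \sqrt{d}\,\sigma_\ast(\tilde Y)$, not equality from below). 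The correct split is into the \emph{first} term $\sigma(\tilde Y)$ and an \emph{enlarged} fourth term: writing $R_0^{1/3}\sigma(\tilde Y)^{2/3}p^{2/3}=(R_0 p^2)^{1/3}\sigma(\tilde Y)^{2/3}$, Young gives the contribution $R_0 p^2/n$ after dividing by $n$, not $R_0 p/n$. With $R_0\asymp Rp^{k/2}$ this is $Rp^{(k+4)/2}/n$, and comparing to $(R/d)\sqrt{d/n}$ forces exactly $p\le d^{\epsilon/(k+4)}$ with no slack. So the threshold $c=\epsilon/(k+4)$ is tight for this route, and your argument that the extra room ``comfortably absorbs both the log factor and the absolute constant'' is based on an incorrect exponent.

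The paper avoids Young altogether by coupling the truncation level to the confidence parameter (taking $R\to Rt^{k/2}$ so that $\delta=e^{-t}$), then factoring $\max(\sigma_\ast(Z),R/d)\sqrt{d/n}$ out of all four terms simultaneously; the third term then reads $t^{(k+4)/6}/d^{\epsilon/6}$ and the constraint $p\le d^{\epsilon/(k+4)}$ drops out directly. This coupling also sidesteps the probability floor $de^{-p}$ that your fixed truncation leaves in the tail, which otherwise requires a separate argument to control the contribution to $\E\norm{Y}_{op}^p$ from the event $\{\max_i\norm{Z_i}_{op}>R_0\}$.
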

\begin{proof}
    First by a union bound, we have that $\max_i \norm{Z_i}_2 \lesssim R t^k$ with probability at least $1-ne^{-t}$. Substituting this and \Cref{lem:sigma_to_sigma_ast} into \Cref{lem:universality_corollary} gives that with probability at least $1-2ne^{-t}$,
    \begin{align*}
        \norm{Y} \le C \max\qty(\sigma_\ast(Z),\frac{R}{d}) \qty[\sqrt{\frac{d + t}{n}} + \qty(\frac{d t^{k/2}}{n})^{1/3} \qty(\frac{d}{n})^{1/3} t^{2/3} + \frac{d t^{\frac{k}{2}+1}}{n}].
    \end{align*}
    We can factorize this by pulling out the $\sqrt{d/n}$ and using $n \ge d^{1+\epsilon}$:
    \begin{align*}
        \norm{Y} \le C \max\qty(\sigma_\ast(Z),\frac{R}{d}) \sqrt{\frac{d}{n}} \qty[1 + \frac{t^{1/2}}{d^{1/2}} + \frac{t^{\frac{k+4}{6}}}{d^{\frac{\epsilon}{6}}} + \frac{t^{\frac{k}{2}+1}}{n^{\frac{\epsilon}{2}}}].
    \end{align*}
    We can convert this to an $p$-norm bound for $p \ge \log n$
    \begin{align*}
        \E[\norm{Y}^p]^{1/p} \le C \max\qty(\sigma_\ast(Z),\frac{R}{d}) \sqrt{\frac{d}{n}} \qty[1 + \frac{p^{1/2}}{d^{1/2}} + \frac{p^{\frac{k+4}{6}}}{d^{\frac{\epsilon}{6}}} + \frac{p^{\frac{k}{2}+1}}{n^{\frac{\epsilon}{2}}}].
    \end{align*}
    Now if $p = d^c$ where $c = \min(1,\frac{\epsilon}{k+4})$ then the error terms are all less than $1$ so we are done.
\end{proof}

\begin{lemma}\label{lem:zeta_p_bound}
	For any $p \ge 2$, $\norm{\norm{\zeta_k(Y)}_F}_{p}^2 \le (p-1)^k \binom{r+k-1}{k} \le ((p-1)r)^k$.
\end{lemma}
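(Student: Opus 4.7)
The plan is to reduce the bound to a pointwise bound on the Hermite tensor via Jensen's inequality, and then unfold $\norm{\bs{h}_k}_F^2$ in an orthonormal Hermite basis and invoke Gaussian hypercontractivity (Lemma \ref{lem:gaussian_hypercontractivity}).

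First, since $\zeta_k(Y) = \E[\bs{h}_k(\bar{Z}_S) \mid \bar{Y}_S]$ and $\norm{\cdot}_F$ is convex, conditional Jensen gives $\norm{\zeta_k(Y)}_F \leq \E[\norm{\bs{h}_k(\bar{Z}_S)}_F \mid \bar{Y}_S]$, and the $L^p$-contraction property of conditional expectation yields $\norm{\norm{\zeta_k(Y)}_F}_p \leq \norm{\norm{\bs{h}_k(Z)}_F}_p$, where $Z$ is standard Gaussian on the relevant orthogonal complement. Since $\zeta_k$ and $\bs{h}_k$ here live in the ambient complement $S^\perp$ of dimension at most $r$, we may harmlessly work with $\gamma_r$ below.

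Next, identifying $(\bs{h}_k)_{i_1 \ldots i_k} = H_\alpha(Z)/\sqrt{k!}$ with $\alpha$ the multiplicity count of $(i_1,\ldots,i_k)$ and collecting tuples of a given type (multinomial count $\binom{k}{\alpha} = k!/\alpha!$) gives the pointwise identity
\begin{equation*}
    \norm{\bs{h}_k(Z)}_F^2 = \sum_{|\alpha|=k} \frac{H_\alpha(Z)^2}{\alpha!},
\end{equation*}
where $H_\alpha = \prod_j H_{\alpha_j}(z_j)$ are products of monic Hermite polynomials, with $N := \binom{r+k-1}{k}$ summands (and the sanity check $\E \norm{\bs{h}_k(Z)}_F^2 = \sum_\alpha \alpha!/\alpha! = N$ holds). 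For $p \geq 2$, Minkowski in $L^{p/2}$ followed by Gaussian hypercontractivity applied to each degree-$k$ polynomial $H_\alpha$ (using $\E H_\alpha(Z)^2 = \alpha!$) gives
\begin{equation*}
    \norm{\norm{\bs{h}_k(Z)}_F}_p^2 \;=\; \norm{\norm{\bs{h}_k(Z)}_F^2}_{p/2} \;\leq\; \sum_{|\alpha|=k} \frac{\norm{H_\alpha}_p^2}{\alpha!} \;\leq\; \sum_{|\alpha|=k} \frac{(p-1)^k \alpha!}{\alpha!} \;=\; N (p-1)^k,
\end{equation*}
which combined with the Jensen step yields the claimed $\norm{\norm{\zeta_k(Y)}_F}_p^2 \leq (p-1)^k \binom{r+k-1}{k}$. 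The secondary inequality $\binom{r+k-1}{k} \leq r^k$ is immediate, since the binomial counts multisets of size $k$ from $r$ elements, which inject into the $r^k$ ordered tuples.

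There is no genuine obstacle; the only subtle choice is to split $\norm{\bs{h}_k}_F^2$ into its $N$ \emph{scalar} components before invoking hypercontractivity at degree $k$, rather than treating $\norm{\bs{h}_k}_F^2$ as a single degree-$2k$ polynomial, which would give a strictly weaker $(p-1)^{2k}$ factor.
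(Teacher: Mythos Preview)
Your proof is correct and follows essentially the same approach as the paper's: Jensen to pass from $\zeta_k(Y)=\E[\bs{h}_k(Z)\mid Y]$ to $\bs{h}_k(Z)$, then Gaussian hypercontractivity to bound $\norm{\norm{\bs{h}_k(Z)}_F}_p^2$ by $(p-1)^k \E\norm{\bs{h}_k(Z)}_F^2 = (p-1)^k\binom{r+k-1}{k}$. The only cosmetic difference is that the paper invokes hypercontractivity directly on the (vector-valued) Hermite tensor in one line, whereas you spell out the underlying Minkowski-in-$L^{p/2}$ decomposition into scalar components $H_\alpha$ before applying scalar hypercontractivity termwise; your version is simply the explicit justification of that step, and your closing remark about why one must work at degree $k$ rather than $2k$ is exactly the point.
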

\begin{proof}
	By Jensen's inequality and Gaussian hypercontractivity we have
	\begin{align*}
		\norm{\norm{\E[He_k(Z)|Y]}_F}_{p}^2
		&\le \E[\norm{He_k(Z)}_F^p]^{1/p} \\
		&\le (p-1)^k \E \norm{He_k(Z)}_F^2 \\
		&= (p-1)^k k! \binom{r+k-1}{k}.
	\end{align*}
	Dividing by $k!$ to revert to the normalized Hermite polynomials $\{h_k\}$ completes the proof.
\end{proof}

We will now bound the low-degree density ratio between the joint distribution of $(X,Y)$ and the null distribution $\PP_0 := \PP_X \otimes \PP_Y$:

\begin{lemma}\label{lem:density_ratio}
	Let $\PP_0 := \PP_X \otimes \PP_Y$ be the null distribution and let $\mathcal{P}_{\le D}$ denote the orthogonal projection onto polynomials in $X$ of degree at most $D$. Then:
	\begin{align*}
		\mathcal{P}_{\le D}\qty(\frac{d\PP}{d\PP_0})[X,Y] = \sum_{k=0}^D \ev{h_k(Z), \zeta_k(Y)}
	\end{align*}
	and
	\begin{align*}
		\norm{\mathcal{P}_{\le D}\qty(\frac{d\PP}{d\PP_0})}_2^2 \le \binom{r+D}{r}.
	\end{align*}
\end{lemma}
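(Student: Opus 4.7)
The plan is to deduce both statements directly from the Hermite expansion of Lemma \ref{lem:density} specialized to $S = \emptyset$, together with Jensen's inequality and a standard Hermite normalization calculation. For the first identity, since $\frac{d\PP}{d\PP_0}[X,Y] = \frac{\PP(Y\mid Z)}{\PP(Y)}$ depends on $X$ only through $Z = (W^\star)^\top X$, and since under $\PP_0$ the complementary component $\bar X := (W^\star)^{\perp} X$ is an independent standard Gaussian, also independent of $Y$, the multivariate Hermite expansion of $\frac{d\PP}{d\PP_0}$ in $X$ collapses onto the expansion in $Z$: all tensor components with any nonzero index along $\bar X$ integrate to zero. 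Lemma \ref{lem:density} at $S=\emptyset$ then gives $\tfrac{d\PP}{d\PP_0}[X,Y] = \sum_{k\ge 0} \ev{\h_k(Z),\zeta_k(Y)}$, and because $\ev{\h_k(Z),\zeta_k(Y)}$ is a polynomial in $X$ of pure degree $k$, orthogonality of Hermite tensors of distinct total degrees in $L^2(\gamma_d)$ identifies $\mathcal{P}_{\le D}$ with the truncation at $k=D$.

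For the norm bound, orthonormality of the entries of $\{\h_k(Z)\}_k$ in $L^2(\gamma_r)$ together with independence of $Y$ and $Z$ under $\PP_0$ yields
$$\left\|\mathcal{P}_{\le D}\!\left(\tfrac{d\PP}{d\PP_0}\right)\right\|_{L^2(\PP_0)}^{2} \;=\; \sum_{k=0}^D \E_Y\|\zeta_k(Y)\|_F^{2} \;=\; \sum_{k=0}^D \lambda_k^2(\emptyset).$$
Applying Jensen's inequality to the definition $\zeta_k(Y) = \E[\h_k(Z)\mid Y]$ gives $\lambda_k^2 \le \E\|\h_k(Z)\|_F^2$. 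The latter is computed by enumerating index tuples $(i_1,\ldots,i_k)$ according to their multiplicity profile $\alpha = (\alpha_1,\ldots,\alpha_r)$ with $|\alpha|=k$: each tuple contributes $\tfrac{1}{k!}\prod_j He_{\alpha_j}(Z_j)^2$, and there are $\binom{k}{\alpha}$ such tuples per $\alpha$. Using $\E[He_{\alpha_j}(Z_j)^2] = \alpha_j!$ collapses the sum to $\E\|\h_k(Z)\|_F^2 = \sum_{|\alpha|=k} 1 = \binom{r+k-1}{k}$. The hockey-stick identity $\sum_{k=0}^D \binom{r+k-1}{k} = \binom{r+D}{r}$ then delivers the stated bound.

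There is no real obstacle here; the only bookkeeping point is the identification of the Hermite expansion in $X$ with the one in $Z$ on the range of $W^\star$, which is classical. Notably, the inequality $\lambda_k^2 \le \binom{r+k-1}{k}$ is precisely the $p=2$ case of Lemma \ref{lem:zeta_p_bound}, so this lemma can equivalently be viewed as repackaging that bound together with the orthogonal decomposition of the truncated density ratio.
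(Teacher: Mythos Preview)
Your proposal is correct and follows essentially the same approach as the paper. The paper computes the Hermite coefficients of the density ratio directly (equivalent to your invocation of Lemma~\ref{lem:density} at $S=\emptyset$) and then cites Lemma~\ref{lem:zeta_p_bound} for the bound $\lambda_k^2 \le \binom{r+k-1}{k}$; your Jensen-plus-counting argument is exactly the $p=2$ case of that lemma, as you yourself note, and the hockey-stick summation is identical.
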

\begin{proof}
	Note that the density ratio is invariant to $X$ conditioned on $Z$ so we can Hermite expand directly in $Z$:
	\begin{align*}
		\E_0\qty[h_k(Z)\frac{d\PP}{d\PP_0}[X,Y] \Big\vert Y] = \E\qty[h_k(Z) \Big\vert Y] = \zeta_k(Y)
	\end{align*}
	which implies that the Hermite coefficients of $\frac{d\PP}{d\PP_0}$ in $Z$ are given by $\zeta_k$. For the second equality, we have by \Cref{lem:zeta_p_bound}:
	\begin{align*}
		\norm{\mathcal{P}_{\le D}\qty(\frac{d\PP}{d\PP_0})}_2^2 = \sum_{k=0}^D \E \norm{\zeta_k(Y)}_F^2 \le \sum_{k=0}^D \binom{r+k-1}{k} = \binom{r+D}{D}.
	\end{align*}
\end{proof}

\section{Proofs of Section \ref{sec:examples}}
\label{sec:proofs_examples}

\begin{proof}[Proof of Proposition \ref{prop:gen_examples}]
    We write $y=\sigma(z)$ to denote the deterministic link functions above. 
    \begin{enumerate}
        \item Let $S = \{ z \in \R^r; \sigma(z) = +1\}$. Let $k < r$ and consider $\E[ \h_k(z) | z \in S] = 2 \E[ \h_k(z) {\bf 1}(z \in S) ]$. Any coordinate of this tensor corresponds to a multivariate Hermite polynomial $h_{\beta_1}(z_1) \dots h_{\beta_r}(z_r)$, with $\beta_1 + \dots + \beta_r = k$. 
        Since $k  < r$, there must exist a coordinate $j$ s.t. $\beta_j=0$. By noting that $\E_{z_j} {\bf 1}( z \in S) \equiv 1$ and $\E_{z_{-j}} [ h_{\beta_1}(z_1) \dots h_{\beta_r}(z_r)]=0$, we conclude that $\E[ \h_k(z) {\bf 1}(z \in S) ]= 0$ whenever $k < r$, and analogously for $\R^r \setminus S$. 
        Finally, we easily verify that $\E[ \sigma(z) z_1 z_2 \dots z_r] >0 $, which shows that $l^\star = r$ and hence $\k = r$. 
        \item This follows directly from $\k \leq l^\star$. 
        \item Define $K$ as the intersection of the half-spaces, determined by normals $v_1, \ldots, v_M$. From the assumption that $\P$ is a $r$-dimensional multi-index model, $V=[v_1 \dots v_M]$ has rank $r$. Any unit norm vector $u \in \text{span}(V)$ thus satisfies $\max_i | v_i \cdot u| \geq \epsilon >0 $ for some $\epsilon>0$. 
        Let $\Sigma_K = \E[ z z^\top | z \in K] - \E[z | z \in K] \E[z | z \in K]^\top$ be the covariance conditional on $K$. 
        From \cite[Lemma B1]{klivans2024learning}, \cite[Lemma 4.7]{vempala2010learning}, for any $u$ as above it holds that  $u^\top \Sigma_K u < 1$, which implies that 
        $\text{span}(\Lambda_1) \cup \text{span}(\Lambda_2) = \R^r$. 
     
        \item %
        The argument appears already in \cite{chen2020learning}, but we reproduce it here in our language for completeness. 

        We will use induction over the leaps. Suppose first $S=\emptyset$, and consider the level sets 
        $B_\lambda = \{ z; | y | \geq \lambda\}$. Since $y=\sigma(z)$ is continuous and $\lim_{r \to \infty} |\sigma( r z) | = \infty$ for any $z$, for any $R>0$ there exists $\lambda$ such that $B_\lambda$ does not contain the ball centered at $0$ of radius $R$. Thus 
        $$ \text{Tr}( \E[ZZ^\top |Z \in B_{\lambda} ] )= \E[ \|Z\|^2 | Z \in B_{\lambda}] \geq R^2~,$$ 
        so if $R^2 > r $ we must have $\E[h_2(Z) | Z \in B_\lambda] \neq 0$, and hence $\Lambda_2 \neq 0$. 
        
        Let us now iterate over leaps. Let $S$ be the span of $\Lambda_2$. We now consider the sets 
        $$B_{\lambda,\eta,S} = \{ \bar{z}_S; | y | \geq \lambda, \| z_S\| \leq \eta\} \subset S^\perp~.$$ 
        By now viewing $\sigma(z) = \sigma(\bar{z}_S, z_S)$ as a polynomial in $\bar{z}_S$, we again argue that for any $R>0$ there exists $\lambda$ such that $B_{\lambda,\eta, S}$ does not contain a ball of radius $R$, and therefore we can identify another direction using the previous argument. Iterating this procedure until $S$ spans the whole $\R^r$ shows that $\k \leq 2$.

    \end{enumerate}
\end{proof}

\begin{proof}[Proof of Proposition \ref{prop:relu}]
    Suppose towards contradiction that $\k >2$. Then for any $g \in L^2_{\P_y}$ we have $\E[ g( \sigma(z)) H_2(z) ] = 0$. Applying the coarea formula we obtain 
\begin{align}
    0 &= \int g( y) \left( \int_{\sigma^{-1}(y)} \frac{\h_2(z) \gamma(z)}{\| \nabla \sigma(z) \|} d\mathcal{H}^{r-1}(z) \right) dy ~,
\end{align}
    where $\mathcal{H}^{k}$ is the $k$-dimensional Hausdorff measure. 
    Since this must be true for any measurable $g$, we conclude that 
    \begin{align}
        L(y) :=  \int_{\sigma^{-1}(y)} \frac{\h_2(z) \gamma(z)}{\| \nabla \sigma(z) \|} d\mathcal{H}^{r-1}(z) &=0\quad \quad \P_y-\text{a.e.}
    \end{align}
     We write $\sigma(z) = \sum_{R \in \mathcal{R}} (v_R^\top z + b_R) \cdot {\bf 1}(z \in R)$, where $\mathcal{R}$ are the different linear regions. 

    \paragraph{Case $r=1$:}
    Suppose first that there exists $\bar{y}$ and $\epsilon>0$ 
    such that the level sets $\sigma^{-1}( u )$ contain no critical points for $u \in (\bar{y}-\epsilon, \bar{y}+\epsilon)$. 
    The level sets $\sigma^{-1}(u)$ are discrete, and we claim that we can represent them as
    $$\sigma^{-1}(u) = \{ t_i + \theta_i (u - \bar{y}) \}~,\text{ where }~\sigma^{-1}(\bar{y}) = \{ t_i \}_{i \in \mathcal{I}}~,$$
    and $\theta_i = 1/\sigma'(t_i) \neq 0$ have alternating sign. 
    We thus have, for $u \in (\bar{y}-\epsilon, \bar{y}+\epsilon) $, 
    \begin{align}
        L(u) &= \sum_{i \in \mathcal{I}} |\theta_i| h_2(t_i + \theta_i (u-\bar{y}))\gamma(t_i + \theta_i (u-\bar{y}))~.  
    \end{align}
    Let us integrate this quantity twice now. Using the fact that $(h_{k-1} \gamma)' = - h_k \gamma$, we have 
    \begin{align}
        \bar{L}(u) &:= \int_{\bar{y}-\epsilon}^u L(v) dv \\
        &= -\sum_{i \in \mathcal{I}} \sgn(\theta_i) h_1(t_i + \theta_i (u-\bar{y}))\gamma(t_i + \theta_i (u-\bar{y})) + C ~, \\
        \tilde{L}(u) &:= \int_{\bar{y}-\epsilon}^u \bar{L}(v) dv \\
        &= \sum_{i \in \mathcal{I}} \sgn(\theta_i) \theta_i^{-1} \gamma(t_i + \theta_i (u-\bar{y})) + C (u- \bar{y} + \epsilon) + \tilde{C} \\
        &= \sum_{i \in \mathcal{I}} |\theta_i|^{-1} \gamma(t_i + \theta_i (u-\bar{y})) + C (u- \bar{y} + \epsilon) + \tilde{C}~.
    \end{align}
    From $L(u) = 0$ a.e. on $(\bar{y} \pm \epsilon)$ we have $\tilde{L}(u)=0$ for all $u \in (\bar{y} \pm \epsilon)$, leading to 
    \begin{equation}
    \sum_{i \in \mathcal{I}} |\theta_i|^{-1} \gamma(t_i + \theta_i (u-\bar{y})) = -C( u - \bar{y}+\epsilon) - \tilde{C} ~,~\forall~u \in (\bar{y} \pm \epsilon)~. 
    \end{equation}
    Since all terms are analytic, we must have this equality for all $u$, which implies $C= \tilde{C}=0$, but this is a contradiction, since the LHS is a sum of positive terms. 
    \paragraph{Case $r>1$}

We can represent a piece-wise linear continuous function in terms of a simplex triangulation, and the values of the function at its vertices. 
Consider $M = \sup_z \{\sigma(z)\}$ the maximum of $\sigma$, attained at a discrete set of global maxima. Now, let us start decreasing the level set until we reach another vertex, to say $M'$. We will study the family of level sets $\sigma^{-1}(y)$ for $y\in [M', M]$. We reparametrize $y$ as $y= M + u(M'-M)$, so this family can now be indexed with $u\in[0,1]$.

For $\theta \in \mathbb{S}^{r-1}$ and $t \in \R$, let $E(\theta, t) := \{ z ; \theta^\top z = t\}$ denote a hyperplane normal to $\theta$ and passing at distance $t$ to the origin. 
  We can then write $$\sigma^{-1}(u) = \cup_{R \in \bar{\mathcal{R}}} S_R(u)~,$$ where $S_R(u) = E(v_R/\|v_R\|, \|v_R\| u + b_R) \cap R$, and where $\bar{\mathcal{R}}$ is the subset of linear regions crossed by these level sets. Note that by construction this family $\bar{\mathcal{R}}$ does not depend on $u$.

For $u\in (\epsilon,1-\epsilon)$, and for each $R \in \bar{\mathcal{R}}$, we have the following homotecy representation of the level set regions:
\begin{align}
S_R(u) &= \{\tilde{z} = x_R + u(z - x_R) ; z \in S_R(1) \} ~.    
\end{align}
Here, $x_R$ denotes a local maximum of $\sigma$, which is also a vertex of the corresponding simplex region $R$. 
Consider now $\mathcal{L}(u) := \Tr{L(u)}$. 
By introducing the local change of variables $\tilde{z} = \Psi_{R,u}(z) := x_R + u(z - x_R)$ for each region, we have 
\begin{align}
    \mathcal{L}(u) &= \sum_{R \in \bar{\mathcal{R}}} \theta_R \int_{S_R(u)} (\| z\|^2 - r) \gamma(z) dz \\
    &= u^r \sum_{R} \theta_R \int_{S_R(1)} \left( \| x_R + u( z - x_R) \|^2 - r\right) \gamma(x_R + u( z - x_R)) dz ~,
\end{align}
where $\theta_R=\| \nabla \sigma(z)\|^{-1}$ for $z \in R$.
Observe that $\mathcal{L}$ is analytic in $\R$, since it is a linear combination of products of analytic functions. 
By assumption we have that $\mathcal{L}(u) = 0$ for $u \in (0, 1)$, 
which implies that $\mathcal{L}$ should vanish everywhere. 
But for $u$ sufficiently large, observe that $\mathcal{L}(u)$ is a sum of strictly positive terms, which is a contradiction.

This shows that $\Lambda_2 \neq 0$. Let $S = \text{span}(\Lambda_2)$, and write $z=(z_S, \bar{z})$, $\bar{y}=( z_S, y)$. We can now again suppose towards contradiction that for any $g \in L^2(\mathsf{P}_{\bar{y}})$ we have $\E[g(\sigma(z), z_S) H_2(\bar{z})] = 0$. Defining $\bar{\sigma}^{-1}( \bar{y}) := \{ \bar{z}; \sigma( z_S, \bar{z}) = y \}$, applying again the coarea formula leads to 
\begin{align}
    L_S(\bar{y}) &= \int_{\bar{\sigma}^{-1}(\bar{y})} \frac{H_2(\bar{z})\gamma(\bar{z})}{\| \nabla \bar{\sigma}(\bar{z}) \|} d\mathcal{H}^{r-|S|-1}(z) = 0\quad \P_{\bar{y}}-\text{a.e.}
\end{align}
The piece-wise linear, continuous structure is still preserved in $\bar{\sigma}$, and therefore by iteratively applying the previous argument shows that all directions will be captured with generative leaps of at most $\k \leq 2$.

\end{proof}

\begin{proof}[Proof of Proposition \ref{prop:shallowNN}]
        Let $\sigma(z) = \sum_j a_j \rho( z_j)$. 
        By definition, we have that (i) for any $\mathcal{T}:\R \to \R$ and any polynomial $q$ of degree $< \k(\rho)$, $\E[ \mathcal{T}(\rho(z_j)) q(z_j) ] = 0$, and (ii) there exist a transformation $\zeta(y)$ such that $\E[ \zeta(\rho(z_j)) h_{\k(\rho)}(z_j)] \neq 0$. 
        
        Let us first show that $\k \geq \k(\rho)$. 
        Suppose towards contradiction that we had a measurable $\mathcal{U}$ and multi-indices $(\beta_1, \ldots, \beta_r)$ with $|\beta| < \k(\rho)$ such that $\E[ \mathcal{U}(\sigma(z)) H_\beta(z) ] \neq 0$. Then, denoting $H_\beta(z) = H_{\beta_{-j}}(z_{-j}) h_{\beta_j}(z_j)$, we have
        \begin{align}
        \label{eq:shallowNNlower}
            \E_{z_j} \left[ \E_{z_{-j}}\left(\mathcal{U}(\sigma(z)) H_{\beta_{-j}}(z_{-j}) \right) h_{\beta_j}(z_j) \right] &\neq 0 \\
            \E_{z_j} \left[ \E_{z_{-j}}\left\{\mathcal{U}\left(a_j y_j + \sum_{j'\neq j}a_{j'}\rho(z_{j'} )\right) H_{\beta_{-j}}(z_{-j}) \right\} h_{\beta_j}(z_j) \right] &\neq 0 \\
            \E_{z_j} \left[ \tilde{\mathcal{T}_j}(y_j) h_{\beta_j}(z_j) \right] &\neq 0 ~,
        \end{align}
        where we defined the label transformation $\tilde{\mathcal{T}}_j(y) := \E_{z} \left[ \mathcal{U}( a_j y + \sum_{j' \neq j} a_{j'}\rho( z_{j'}) \right]$.
        We have thus reached a contradiction. Observe that the same argument also applies if one replaces $\sigma(z)$ by $\bar{\sigma}(z) = F( \rho(z_1), \ldots, \rho(z_r))$ for arbitrary $F$. 

        Let us now show $\k\leq\k(\rho):=\k_\rho$. We focus on a Hermite moment along a single variable, say $z_1$, given by $h_{\k_\rho}(z_1)$. 
        Let $y = \rho(z_1)$ and $\eta = a_1^{-1}\sum_{j>1} a_j \rho(z_j)$, so $y$ and $\eta$ are independent. We will find a measurable function $\mathcal{U}$ such that
        \begin{align}
            \E_{z_1,\eta} \left[ \mathcal{U}(\rho(z_1) + \eta) h_{\k_\rho}(z_1)  \right] \neq 0 .
            \label{eq:zeta-U}
         \end{align}
        We will consider a Fourier atom for $\mathcal{U}$ of the form $\mathcal{U}(t)= e^{i \tilde{\xi} t }$ for an appropriately chosen frequency $\tilde{\xi}$. 
        For that purpose, let us first reproduce an argument from \cite[Theorem 5.2]{damian2024computational}. By \cite[Lemma F.2]{damian2024computational} there exists $g: \R \to [-1,1]$ such that $\E[g(Y) h_{\k_\rho}(Z)] \neq 0$. We consider ${g}_R(y) := g(y) \mathbf{1}_{|y| \leq R}$. For $R$ sufficiently large, we claim that $\E[g_R(Y) h_\k(Z)] \neq 0$. We have that
\begin{align}
    \abs{\E[g_R(Y) h_\k(Z)] - \E[g(Y) h_\k(Z)]}
    &= \abs{\E[g(Y) h_\k(Z) \1_{|y| \ge R}]} \nonumber\\
    &\le \sqrt{\E[g(Y)^2 h_\k(Z)^2] \PP[|Y| \ge R]} \nonumber\\
    &\le \sqrt{\E[Y^2]/R^2}
\end{align}
which vanishes as $R \to \infty$. Therefore for sufficiently large $R$ we have $\E[g_R(Y) h_\k(Z)] \neq 0$. Now $g_R \in L^1(\R) \cap L^2(\R)$. Let us consider its Fourier representation 
$g_R(y) = \int \hat{g_R}(\xi) e^{i \xi y} d\xi$. 
Then 
\begin{align}
    \E[g_R(Y) h_\k(Z)] &= \int \hat{g}_R(\xi) \E[ e^{i\xi Y} h_\k(Z)] d\xi~,
\end{align}
which shows that there must exist $\xi_0$ such that  $ \E_{\P}[ e^{i \xi_0 Y} h_{\k}(Z) ] \neq 0$. %
Moreover, observe that $\xi \mapsto \E_{\P}[e^{i \xi Y} h_{\k}(Z) ]:=\psi(\xi)$ is $\|\rho\|^2$-Lipschitz, since 
\begin{align}
    |\psi'(\xi)| &= |\E[i Y e^{i \xi Y} h_{\k}(Z)]| \leq \sqrt{\E[Y^2] \E[\zeta_k(Y)^2]} \leq \E[Y^2]=\|\rho\|^2~, 
\end{align}
so we can define $\epsilon>0$ and $\delta>0$ such that $|\psi(\xi)| \geq \delta$ for all $\xi \in (\xi_0 - \epsilon, \xi_0 + \epsilon)$. 

Now, let us evaluate (\ref{eq:zeta-U}) with the Fourier atom. We have 
\begin{align}
   \E_{z_1, \eta }\left[ \mathcal{U}(\rho(z_1) + \eta) h_{\k_\rho}(z_1) \right] &= \E_{z_1, \eta} \left[ e^{i \tilde{\xi} (\rho(z_1) + \eta)} h_{\k_\rho}(z_1)  \right] \\
   &= \E_{z_1} \left[ e^{i \tilde{\xi} \rho(z_1)} h_{\k_\rho}(z_1) \right] \E_{\eta} [e^{i \tilde{\xi} \eta} ] \\
   &= \psi(\tilde{\xi}) \cdot \varphi_\eta( \tilde{\xi}) ~,
\end{align}
where $\varphi_\eta(\xi) = \E_{\eta}[e^{i \eta \xi}]$ is the characteristic function of $\eta$. 
Assume, towards contradiction, that $\varphi_\eta( {\xi}) = 0$ for all $\xi \in (\xi_0 - \epsilon, \xi_0 + \epsilon)$. 
By definition, we have that $\varphi_\eta(\xi) = \prod_{j=2}^r \frac{a_1}{a_j} \varphi(\frac{a_1}{a_j}\xi)$, where $\varphi(\xi) = \E[e^{i \xi \rho(z)}]$ is the characteristic function of $\rho(z)$.
We thus deduce that $\varphi(\xi)$ must vanish on an interval $\xi \in I \subseteq (\xi_0 - \epsilon, \xi_0 + \epsilon)$. 
Since all the moments $\E[y^k]$ exist by assumption, this means that 
\begin{align}
\label{eq:moments}
\forall~\xi\in I, \, m\in \mathbb{N}~,~ 0=\varphi^{(m)}(\xi) &= \E_{\P_y}[(i y)^m e^{i y \xi}]~.    
\end{align}
In particular, given any $f \in L^2(\P_y)$, with expansion $f(y) = \sum_k \alpha_k q_k(y)$, where $\{q_k\}_k$ is an orthonormal basis of polynomials, we deduce from (\ref{eq:moments}) that $\E[ f(y) e^{i\xi y}]=0$ for any $f$, which would mean that $e^{i \xi y}=0$ in $L^2(\P_y)$, which is a contradiction. 
We have thus shown that there must exist $\tilde{\xi} \in (\xi_0 - \epsilon, \xi_0 + \epsilon)$ where both $\varphi_\eta(\tilde{\xi})$ and $\psi(\tilde{\xi})$ are non-zero, proving (\ref{eq:zeta-U}). 

This shows that $[\Lambda_{\k}]_{\beta_1} \neq 0$. 
Applying the same reasoning to $z_j$, $j\in [r]$ thus shows that $[\Lambda_{\k}]_{\beta_j} \neq 0$.
On the other hand, if we consider $\beta$ with $|\beta|=\k$ but $\beta$ not of the form $\beta=(0,..,\k, 0,\dots, 0)$, applying again (\ref{eq:shallowNNlower}) shows that $[\Lambda_{\k}]_{\beta} = 0$, ie $\Lambda_{\k}$ is diagonal.  We conclude that $\text{span}(\Lambda_{\k}) = \R^r$ 
        and thus that $\k \leq \k(\rho)$.

\end{proof}

\begin{lemma}[Truncated and Fourier Label Transformations, {{\cite[Theorem 5.2]{damian2024computational}}}]
\label{lem:thresholdfourier}
Let $\P \in \mathcal{P}(\R \times \R)$ with $\P_z = \gamma$ 
and let $\k = \k(\P)$. Then there exists $R_0, \xi_0$, $\epsilon_0>0$ and $\delta_0>0$, and label transformations 
$\mathcal{T}_R$, $\widetilde{\mathcal{T}}_\xi$ of the form 
$\mathcal{T}_R(y) = g(y) {\bf 1}_{|y| \leq R}$ and $\widetilde{\mathcal{T}}_\xi(y) = e^{i \xi y}$ such that 
$$\left| \E_{\P}\left[\mathcal{T}_R(Y) h_{\k}(Z)\right] \right| \geq \delta_0~,~\text{ and }~\left|\E_{\P}\left[\widetilde{\mathcal{T}}_\xi(Y)h_\k(Z)\right] \right|  \geq \delta_0~$$
for $R \geq R_0$ and $|\xi -\xi_0 | \leq \epsilon_0$. 
\end{lemma}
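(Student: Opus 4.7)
The plan is to prove the two parts of the lemma separately, both starting from the defining property of the generative exponent: the conditional expectation $\zeta_\k(Y) := \E[h_\k(Z)\mid Y]$ is a nonzero element of $L^2(\P_y)$, while $\zeta_{k'} \equiv 0$ for $k' < \k$.

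For the truncated label transformation, I would first invoke \cite[Lemma F.2]{damian2024computational} to obtain a bounded measurable $g : \R \to [-1, 1]$ with $c := \E[g(Y) h_\k(Z)] \neq 0$. Setting $\mathcal{T}_R(y) := g(y) \mathbf{1}_{|y|\leq R}$, Cauchy--Schwarz gives
\begin{align*}
\abs{\E[g(Y) h_\k(Z)] - \E[\mathcal{T}_R(Y) h_\k(Z)]} \;\leq\; \sqrt{\E[g(Y)^2 h_\k(Z)^2]\,\PP[|Y| > R]} \;\leq\; \sqrt{\E[h_\k(Z)^2]\,\PP[|Y| > R]},
\end{align*}
and $\PP[|Y| > R] \to 0$ as $R \to \infty$ since $Y$ is a.s.\ finite. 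Choosing $R_0$ so that this bound is at most $|c|/2$ for $R \geq R_0$ yields $\abs{\E[\mathcal{T}_R(Y) h_\k(Z)]} \geq |c|/2$.

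For the Fourier label transformation, I would introduce the finite signed Borel measure $\mu$ on $\R$ defined by $d\mu(y) := \zeta_\k(y)\, d\P_y(y)$. Since $\zeta_\k \in L^2(\P_y)\setminus\{0\} \subseteq L^1(\P_y)\setminus\{0\}$, $\mu$ is a nonzero finite signed measure, and its Fourier transform coincides with the quantity of interest:
\begin{align*}
\psi(\xi) \;:=\; \hat{\mu}(\xi) \;=\; \E[e^{i\xi Y} \zeta_\k(Y)] \;=\; \E[e^{i\xi Y} h_\k(Z)].
\end{align*}
By injectivity of the Fourier transform on finite signed measures, $\psi$ cannot vanish identically, so there exists $\xi_0 \in \R$ with $|\psi(\xi_0)| > 0$. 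Continuity of $\psi$ follows from dominated convergence using the integrable dominator $|\zeta_\k(Y)|$, so there exist $\epsilon_0>0$ and $\delta_0'>0$ with $|\psi(\xi)| \geq \delta_0'$ for all $\xi$ with $|\xi - \xi_0|\leq \epsilon_0$. Taking $\delta_0$ to be the minimum of $|c|/2$ and $\delta_0'$ gives the conclusion.

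The one subtle point is that one cannot directly apply Fourier inversion to $\mathcal{T}_R$ to extract $\xi_0$, since $\widehat{\mathcal{T}_R}$ need not be integrable. Reformulating the Fourier step in terms of the signed measure $\mu$ sidesteps this issue and lets injectivity of the Fourier transform on $M(\R)$ do all the work; everything else is routine dominated convergence and Cauchy--Schwarz.
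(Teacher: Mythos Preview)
Your proof is correct. The truncated part matches the paper's own reproduction of this argument (inside the proof of Proposition~\ref{prop:shallowNN}) essentially verbatim, modulo your using the more general fact $\PP[|Y|>R]\to 0$ in place of the paper's Markov bound $\PP[|Y|\ge R]\le \E[Y^2]/R^2$, which spares you the extra assumption $Y\in L^2$.

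For the Fourier part you take a genuinely different route. The paper's argument passes through Fourier inversion of the truncated transformation $g_R$, writing $\E[g_R(Y)h_\k(Z)]=\int \hat g_R(\xi)\,\psi(\xi)\,d\xi$ and concluding that $\psi$ cannot vanish identically; it then obtains the open neighborhood around $\xi_0$ from a Lipschitz bound $|\psi'(\xi)|\le \E[Y^2]$. As you observe, the inversion step is delicate because $\hat g_R$ need not lie in $L^1(\R)$ for a merely bounded compactly supported $g_R$, so some smoothing or limiting argument is implicitly needed. Your reformulation via the finite signed measure $d\mu=\zeta_\k\,d\P_y$ and the uniqueness theorem for Fourier--Stieltjes transforms bypasses this entirely and is both shorter and more robust: it requires only $\zeta_\k\in L^1(\P_y)$ (immediate from $\zeta_\k\in L^2(\P_y)$) and gives continuity of $\psi$ by dominated convergence without any moment assumption on $Y$. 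The paper's approach, on the other hand, yields a quantitative Lipschitz constant for $\psi$, which is useful in the downstream application of Proposition~\ref{prop:shallowNN} but not needed for the lemma as stated.
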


\begin{proof}[Proof of Proposition \ref{prop:shallowNN_UA}]
By Proposition \ref{prop:shallowNN}, we can reduce ourselves to the univariate case. 
We first verify that, given $\sigma:\R \to \R$,  there exists $\epsilon>0$ such that if 
\begin{align}
\label{eq:basicUAT}
\| \sigma - \tilde{\sigma} \|_{\gamma_r} \leq \epsilon    
\end{align}
then $\k(\tilde{\sigma}) \leq \k(\sigma)=\k$. 

From Lemma \ref{lem:thresholdfourier}, we can 
use a sinusoid label transformation $\phi(y) = \cos(\xi y)$ for $\xi$ that depends on $\sigma$ such that
$\E[\phi(\sigma(z)) h_{\k}(z)] = C \neq 0$. 

It suffices to verify that $\E[ \phi(\tilde{\sigma}(z)) h_\k(z)] \neq 0 $.  
Let $a(z) = \sigma(z) - \tilde{\sigma}(z)$. 
Indeed, since $\phi$ is $\xi$-Lipschitz, we have 
\begin{align}
\forall~z~,~ \phi( \tilde{\sigma}(z)) &= \phi( \sigma(z)) + \tilde{a}(z)~,
\end{align}
with $|\tilde{a}(z) | \leq \xi | a(z)|$. 
Thus
\begin{align}
   \left| \E[ \phi(\tilde{\sigma}(z)) h_{\k}(z)] - \E[ \phi({\sigma}(z)) h_{\k}(z)] \right| &=  \left| \E[\tilde{a}(z) h_{\k}(z)] \right|\\
   & \leq \| \tilde{a} \|_2 \leq \xi \| |a| \|_2 = \xi \| \sigma - \tilde{\sigma} \|_2~,
\end{align}
so if $\eps < C/\xi$ we have $\k(\tilde{\sigma}) \leq \k(\sigma)$. 

Finally, using a standard universal approximation theorem, e.g using the integral representation 
\begin{align}
\label{eq:intrep}
\sigma(z) &= \int_{\R^3} c \rho( a z + b) d\nu(a,b,c) = \E_{(a,b,c)\sim \nu} [c \rho(az + b)]  
\end{align}
for $\nu \in \mathcal{P}(\R^3)$, we can obtain $\tilde{\sigma}$ satisfying (\ref{eq:basicUAT}) by doing a Monte-Carlo approximation.

\end{proof}

\lineartransf*
\begin{proof}
    We will again exploit analytic properties. 
    Let us first prove (i). 

    Let us consider a threshold function of the form $T(y) = {\bf 1}(\alpha_1 \leq y \leq \alpha_2)$, and its associated level set $\Omega = \{ z; \alpha_1 \leq \sigma(z) \leq \alpha_2 \}$. 
    Suppose first that we can pick $\alpha_1, \alpha_2$ such that $\Omega$ is compact. 
    Then we have $\Omega \subset B_0(R)$, the ball of radius $R$, for some $R>0$. 
    
    For $M \in \R^{r \times r}$, consider the function
    \begin{align}
        \phi(M) &:= \det\left[M \left(\int_{\Omega} zz^\top e^{-\frac12 z^\top M^\top M z} dz\right) M^\top - \left(\int_{\Omega} e^{-\frac12 z^\top M^\top M z}dz \right) I  \right] ~.
    \end{align}
    
       Let us now consider $\Sigma = \Theta^\top \Theta= V \Lambda^2 V^\top$, and $\tilde{Z} \sim \mathcal{N}(0,\Sigma)$. 
    Observe that if $\Theta$ is invertible and $\phi(\Theta^{-1}) \neq 0$, then 
    \begin{align}
        \det[\Lambda_2(\P_\Theta)] &= \det\left[ \E[(zz^\top - I)T(y_\Theta)] \right] \nonumber\\
        &= \det\left[ \E[(zz^\top - I){\bf 1}( \Theta^\top z \in \Omega)] \right] \nonumber\\
        &= \det\left[ (\Theta^\top)^{-1} \E_{\tilde{Z}}[\tilde{Z}\tilde{Z}^\top {\bf 1}(\tilde{Z} \in \Omega)] \Theta^{-1} - \E_{\tilde{Z}}[{\bf 1}(\tilde{Z} \in \Omega)] I_r  \right] \nonumber\\
        &= \det[\Theta]^r \det\left[ (\Theta^\top)^{-1} \left(\int_\Omega zz^\top e^{-\frac12 z^\top \Sigma^{-1} z}dz\right) \Theta^{-1} - \left(\int_\Omega e^{-\frac12 z^\top \Sigma^{-1} z}dz\right) I_r \right] \nonumber \\
        &= \det[\Theta]^r \phi( (\Theta^\top)^{-1})~, 
    \end{align}
    showing that $\det[\Lambda_2(\P_\Theta) ] \neq 0$, and thus $\k(\P_\Theta) \leq 2$. 

    Let us now argue that $\phi$ is analytic in $\R^{r \times r}$. 
    Indeed, the determinant is analytic, and compositions preserve the analytic property, 
    so it suffices to check that the functions 
    $M \mapsto \phi_a(M)=\int_\Omega e^{-\frac12 z^\top M^\top M z} dz$ and 
    $M \mapsto \phi_b(M)=\int_\Omega z_i z_j e^{-\frac12 z^\top M^\top M z} dz$ 
    are analytic. Since $\Omega$ is compact, we have that 
   \begin{align}
       \left| \frac{\partial^\beta \phi_a}{\partial M^\beta}(M) \right| &\lesssim R^{2|\beta|} R^r ~,
   \end{align} 
   for any multi-index $\beta$, and analogously for the terms $\phi_b$. 

Now, observe that $\phi(0) = (\int_\Omega dz)^r \neq 0$, which implies that $\phi$ cannot be identically zero. From \cite{mityagin2015zero} we then deduce that $\phi$ can only vanish on a set of measure $0$. 

    Let us now extend this argument to the setting where $\Omega$ is not compact. 
    For any $\epsilon>0$, we claim that $\phi(M)$ is analytic in 
    $S = \{M \in \R^{r \times r} ; \lambda_{\min}(M) \geq \epsilon \}$. 
    Indeed, now $\phi$ is infinitely differentiable in $S$, and its components $\phi_a$, $\phi_b$ satisfy 
    \begin{align}
          \left| \frac{\partial^\beta \phi_a}{\partial M^\beta}(M) \right| &\lesssim |\beta|!! \epsilon^{-2|\beta|-r}~,
    \end{align}
    and similarly for $\phi_b$. 
    Since now we have $\phi(0) = \infty$, for $\epsilon$ small enough we must have $\phi(M) \neq 0$ for some $M \in S$, which implies again that $\phi$ cannot be identically zero. It can therefore only vanish on a set of measure zero inside $S$, for any $\epsilon>0$. 

    Let us now prove part (ii) by adapting the previous argument. 
    Since here we are assuming a single leap, there exists a tensor-valued 
    label transformation $T \in L^2( \R, (\R^{r})^{\otimes \k}; \P_y)$ such that $\det[F^\top F] \neq 0$, where 
    \begin{align}
        F = \mathrm{Mat}_{r, (r)^{\k-1}}\left[\E[ T(\sigma(Z)) \h_{\k}(Z) ] \right]~.
    \end{align}

For $\Theta \in \R^{r \times r}$ let us now define 
\begin{align}
\psi(\Theta) &:= \det\left[F(\Theta) F(\Theta)^\top \right] ~, ~\text{with} \\
F(\Theta) &= \mathrm{Mat}_{r, (r)^{\k-1}}\left[\E[ T(\sigma(\Theta^\top Z)) \h_{\k}(Z) ] \right]~. 
\end{align}
Using again $\Sigma = \Theta^\top \Theta= V \Lambda^2 V^\top$, and $\tilde{Z} \sim \mathcal{N}(0,\Sigma)$, we can rewrite this last expectation as
\begin{align}
   F(\Theta) &= \mathrm{Mat}_{r, (r)^{\k-1}}\left[\E_{\tilde{Z}}[ T(\sigma(\tilde{Z})) \h_{\k}(\Theta^{-1}\tilde{Z}) ]\right] \\
   &= C |\Sigma|^{-1/2} \mathrm{Mat}_{r, (r)^{\k-1}}\left[ \int T(\sigma(z)) \h_{\k}(\Theta^{-1}z) e^{-\frac12 z^\top (\Theta^\top \Theta)^{-1} z} dz   \right]~.
\end{align}
We argue again that $F(\Theta)$ is analytic in the domain $\{ \Theta; \Sigma \succeq \epsilon I_r \}$ for any $\epsilon>0$, since it is a linear combination of products of analytic functions. 
This implies that $\psi(\Theta)$ is also analytic in this domain. 
Finally, since the parametrization $\theta \mapsto \Gamma(\theta)$ is analytic by assumption, we deduce that $\bar{\psi}=\psi \circ \Gamma$ is also analytic. We know that $\bar{\psi}$ is not identically zero, therefore we conclude that it can only vanish on a set of $s$-dimensional Lebesgue measure zero.

\end{proof}

\end{document}